\documentclass{article}
\usepackage[final]{neurips_2025}
\usepackage[utf8]{inputenc} 
\usepackage[T1]{fontenc}    
\usepackage{xcolor}         
  \definecolor{accent}{HTML}{143865} 
\usepackage{hyperref}       
  \hypersetup{colorlinks=true,citecolor=accent,linkcolor=accent,urlcolor=accent}
\usepackage{doi} 
\usepackage{url}            
\usepackage{booktabs}       
\usepackage{amsfonts}       
\usepackage{nicefrac}       
\usepackage{microtype}      
\usepackage{colortbl}   
\usepackage{hhline}
\usepackage{multirow}
\usepackage{url}
\usepackage{array}
\usepackage{booktabs}
\usepackage{longtable}
\usepackage{float}
\usepackage{enumitem}
\newlist{compactenum}{enumerate}{1}
\setlist[compactenum,1]{nosep,label=\arabic*.,leftmargin=2em}
\newlist{compactitem}{itemize}{1}
\setlist[compactitem,1]{nosep,label=$\bullet$,leftmargin=2em}

\usepackage{amsthm}
\makeatletter
\renewenvironment{proof}[1][\proofname]{\par
  \vspace{-\parskip}
  \vspace{-\topsep}
  \pushQED{\qed}%
  \normalfont
  \trivlist
  \item[\hskip\labelsep
    \itshape
    #1\@addpunct{.}]\ignorespaces
}{%
  \popQED\endtrivlist\@endpefalse
}
\makeatother
\usepackage{thmtools}

\usepackage{amsmath} 
  \renewcommand{\arraystretch}{1.2} 
\usepackage{mleftright}
\usepackage{calc}
\usepackage{pgfmath}
\usepackage{amssymb}
\usepackage{latexsym}
\usepackage[capitalize,nameinlink]{cleveref}
\AddToHook{cmd/appendix/before}{%
    \crefalias{section}{appendix}%
    \crefalias{subsection}{appendix}
}
\usepackage{wasysym}
\usepackage{booktabs}
\usepackage{multirow}
\usepackage{mathtools}
\usepackage{relsize}
\usepackage{longtable}
\usepackage{subcaption}
\usepackage{tikz}
\usetikzlibrary{arrows,backgrounds,positioning,shapes,decorations,decorations.pathreplacing,automata,fit, tikzmark}
\usepackage{tikz-3dplot}
\usepackage{xstring} 

\usepackage{inconsolata}
\usepackage[scaled=0.9]{helvet}
\usepackage{newtxmath}


\newtheorem{theorem}{Theorem}[section]

\newtheorem{proposition}[theorem]{Proposition}

\newtheorem{lemma}[theorem]{Lemma}
\newtheorem{corollary}[theorem]{Corollary}

\newtheorem{definition}[theorem]{Definition}

\newtheorem{example}[theorem]{Example}

 \newcommand{\macro}[1]{{#1}}
\newcommand{\unmacro}[1]{{\textcolor{black}{#1}}}

\usepackage{xspace}

\newcommand{\CRASP}{\ensuremath{\mathsf{C\textsf{-}RASP}}\xspace}

\newcommand{\MAJ}{\mathsf{\widehat{MAJ}}}
\newcommand{\MAJtwo}{\MAJ_2[<]}
\newcommand{\MAJformula}[2]{\MAJ_{#1} \langle #2 \rangle}

\makeatletter
\newcommand{\oset}[2]{%
  {\mathop{#2}\limits^{\vbox to -.5\ex@{\kern-\tw@\ex@
   \hbox{\scriptsize $#1$}\vss}}}}
\makeatother
\newcommand{\countl}{\ensuremath{\oset{\leftharpoonup}{\#}}}

\newcommand{\countlstrict}{\mathop{\oset{\mathord\leftharpoonup\mkern-3mu\mathord\circ}{\#}}}
\newcommand{\countr}{\ensuremath{\oset{\rightharpoonup}{\#}}}
\newcommand{\countrstrict}{\mathop{\oset{\mathord\circ\mkern-3mu\mathord\rightharpoonup}{\#}}}
\newcommand{\countc}{\ensuremath{\#}}

\newcommand{\TL}{\ensuremath{\mathsf{TL}}}

\newcommand{\TLCl}{\texorpdfstring{\ensuremath{\TL[\countl]}}{TL With Counting}}
\newcommand{\TLC}{\texorpdfstring{\ensuremath{\TL[\countl,\countr]}}{Past TL Without Counting}}
\newcommand{\TLCP}{\ensuremath{\TL[\countl,\countr,\mathsf{PNP}]}}
\newcommand{\TLClP}{\ensuremath{\TL[\countl,\mathsf{PNP}]}}
\newcommand{\Yop}{\mathord{\mathbf{Y}}} %
\newcommand{\MOD}{\mathsf{MOD}}
\newcommand{\TLClpos}{\ensuremath{\TL[\countl,\mathord{\Yop},\MOD]}}
\newcommand{\TLClMOD}{\ensuremath{\TL[\countl,\MOD]}}
\newcommand{\TLClY}{\ensuremath{\TL[\countl,\Yop]}}

\newcommand{\LTC}{\mathsf{LTC}}

\newcommand{\FO}{\mathsf{FO}}

\newcommand{\cif}[3]{\ensuremath{#1\;\mathbf{?}\; #2\; \textbf{:} \;#3}}

\newcommand{\N}{\mathbb{N}}

\newcommand{\R}{\mathbb{R}}

\newcommand{\bnfto}{\mathrel{::=}}
\newcommand{\bnfor}{\mathrel{\makebox[\widthof{$\bnfto$}][r]{$\mid$}}}

\newcommand{\rtfr}{fixed-precision transformer}

\newcommand{\rtfrs}{fixed-precision transformers}
\newcommand{\Rtfrs}{Fixed-precision transformers}
\newcommand{\tfr}{\ensuremath{\macro{T}}}

\newcommand{\len}[1]{\macro{\lVert\unmacro{#1}\rVert}}

\newcommand{\mat}[1]{\mathbf{#1}}

\newcommand{\prkh}{\macro{\Psi}}
\newcommand{\avec}{\macro{\vec{v}}} %
\newcommand{\lenvec}{\macro{\vec{n}}} %

\newcommand{\alphsize}{\macro{|\Sigma|}}
\newcommand{\numplanes}{\macro{c}}
\newcommand{\planeind}{\macro{\ell}}
\newcommand{\numterms}{\macro{m}}
\newcommand{\termind}{\macro{\ell}}

\newcommand{\depth}{\macro{\textnormal{dp}}}

\newcommand{\lang}{\macro{\mathcal{L}}}
\newcommand{\langs}[1]{\lang(#1)}

\newcommand{\altplus}[1]{\macro{L_{\unmacro{#1}}}}
\newcommand{\altplusdouble}[1]{\macro{D_{\unmacro{#1}}}}
\newcommand{\altsinglea}[1]{\macro{A_{\unmacro{#1}}}}
\newcommand{\altsingleb}[1]{\macro{B_{\unmacro{#1}}}}
\newcommand{\altplusneutral}[1]{\macro{E_{\unmacro{#1}}}}

\newcommand{\affrestrict}[3]{{}_{#2}#1_{#3}}

\definecolor{lowcolor}{HTML}{4CAF50}  %
\definecolor{highcolor}{HTML}{B22222} %

\newlength{\cellheight}
\setlength{\cellheight}{2.5ex}
\newlength{\cellwidth}
\setlength{\cellwidth}{1.7em}

\usepackage[round-mode=places,round-precision=0]{siunitx}

\newcommand{\cellgradient}[1]{%
  \cellcolor{lowcolor!#1!highcolor}%
  \raisebox{-0.2\cellheight}{\rule{0pt}{\cellheight}}%
  \makebox[\cellwidth]{\textcolor{black}{\num{#1}}}%
}

\newlength{\bsize}
\setlength{\bsize}{1pt}

\newcommand{\bit}[2]{\langle#1\rangle_{#2}}
\newcommand{\bigbit}[2]{\left\langle#1\right\rangle_{#2}}

\newcommand{\eval}[2]{#1, #2}

\newcommand{\transform}[1]{\left\llbracket #1 \right\rrbracket}

\newcommand{\aperm}{\macro{\pi}}

\newcommand{\bos}{\macro{\texttt{<BOS>}}}
\newcommand{\eos}{\macro{\texttt{<EOS>}}}

\newcommand{\bitind}{\macro{b}}
\newcommand{\numbits}{\macro{p}}
\newcommand{\fracbits}{\macro{s}}

\newcommand{\sympred}[1]{\macro{Q_{#1}}}

\newcommand{\redsize}{\macro{x}}
\newcommand{\redsizeind}{\macro{y}}
\newcommand{\redsizeindd}{\macro{y'}}

\newcommand{\pos}{\phantom{-}}

\newcommand{\AY}[1]{\textcolor{green!80!orange}{(AY : #1)}}
\newcommand{\DC}[1]{\textcolor{blue}{(DC : #1)}}
\newcommand{\MC}[1]{\textcolor{orange}{(MC : #1)}}

\title{Knee-Deep in C-RASP: \\ A Transformer Depth Hierarchy}

\author{Andy Yang\\
University of Notre Dame\\
\texttt{ayang4@nd.edu}
\And
Michaël Cadilhac\\
DePaul University\\
\texttt{michael@cadilhac.name}
\And
David Chiang\\
University of Notre Dame\\
\texttt{dchiang@nd.edu}
}

%

\begin{document}

\maketitle

\begin{abstract}
It has been observed that transformers with greater depth (that is, more layers) have more capabilities, but can we establish formally which capabilities are gained?
We answer this question with a theoretical proof followed by an empirical study.
First, we consider transformers that round to fixed precision except inside attention.
We show that this subclass of transformers is expressively equivalent to the programming language $\CRASP$ and this equivalence preserves depth.
Second, we prove that deeper $\CRASP$ programs are more expressive than shallower $\CRASP$ programs, implying that deeper transformers are more expressive than shallower transformers (within the subclass mentioned above). 
The same is also proven for transformers with positional encodings (like RoPE and ALiBi).
These results are established by studying a temporal logic with counting operators equivalent to $\CRASP$.
Finally, we provide empirical evidence that our theory predicts the depth required for transformers without positional encodings to length-generalize on a family of sequential dependency tasks. 
\end{abstract}

\section{Introduction}

Transformers in practice have been getting deeper and deeper over time.
The original implementation \citep{vaswani2017attention} used transformers with 8 layers. BERT-Large \citep{devlin2018bert} had 24; GPT-2-XL \citep{radford2019language} had 48; GPT-3 175B \citep{brown2020language} had 96. Can we explain what are the effects of deepening the networks?
\vspace{-.39cm}
\begin{figure}[H]
\centering
\begin{minipage}{.48\textwidth}
  \centering\small
  \begin{tikzpicture}[x=1.5cm,y=0.8cm]
    \node at (-0.2,0) (a1) {depth 1};
    \node at (-0.2,1) (a2) {depth 2};
    \node at (-0.2,2) (a3) {depth 3};
    \node at (-0.2,3) (a4) {$\vdots$};
    \node[above=1ex of a4,text width=2cm,align=center] (a5) {\Rtfrs};

    \node at (1,0) (tlc1) {$\CRASP_1$};
    \node at (1,1) (tlc2) {$\CRASP_2$};
    \node at (1,2) (tlc3) {$\CRASP_3$};
    \node at (1,3) (tlc4) {$\vdots$};
    \node[above=1ex of tlc4] (tlc5) {$\CRASP$};

    \begin{scope}[xshift=1.1cm]
    \node at (1,-0.5) (l3) {$\altplus3$};
    \node at (1,0.5) (l4) {$\altplus4$};
    \node at (1,1.5) (l5) {$\altplus5$};
    \node at (1,2.5) (l6) {$\altplus6$};
    \end{scope}

    \begin{scope}[draw opacity=0,every node/.style={midway,sloped}]
    \draw (a1) -- (a2) node {$\subsetneq$};
    \draw (a2) -- (a3) node {$\subsetneq$};
    \draw (a3) -- (a4) node {$\subsetneq$};

    \draw (tlc1) -- (tlc2) node {$\subsetneq$};
    \draw (tlc2) -- (tlc3) node {$\subsetneq$};
    \draw (tlc3) -- (tlc4) node {$\subsetneq$};

    \draw (l3) -- (tlc1) node {$\ni$};
    \draw (l4) -- (tlc2) node {$\ni$};
    \draw (l5) -- (tlc3) node {$\ni$};

    \draw (l4) -- (tlc1) node {$\not\ni$};
    \draw (l5) -- (tlc2) node {$\not\ni$};
    \draw (l6) -- (tlc3) node {$\not\ni$};

    \draw (a1) -- (tlc1) node {$=$};
    \draw (a2) -- (tlc2) node {$=$};
    \draw (a3) -- (tlc3) node {$=$};
    \end{scope}
  \end{tikzpicture}

  \caption{C-RASP is expressively equivalent to \rtfrs{} at each depth level. Then, because deeper C-RASP programs can solve more problems, deeper \rtfrs{} can solve more problems as well.}
  \label{fig:res_thr}
\end{minipage}%
\hfill
\begin{minipage}{0.48\textwidth}
\small
\begin{center}
Accuracy \\
    \setlength{\tabcolsep}{0pt}
    \renewcommand{\arraystretch}{0}
        \begin{tabular}[b]{l@{\;\;}*{10}{c}}
        & \multicolumn{10}{l}{\scriptsize depth $\rightarrow$} \\[1ex]
        & 1 & 2 & 3 & 4 & 5 & 6 & 7 & 8 & 9 & 10\\[1ex]
    $\altplus{3}$ & \cellgradient{100.0} & \cellgradient{100.0} & \cellgradient{100.0} & \cellgradient{100.0} & \cellgradient{100.0} & \cellgradient{100.0} & \cellgradient{100.0} & \cellgradient{100.0} & \cellgradient{100.0} & \cellgradient{100.0} \\
$\altplus{4}$ & \cellgradient{21.7} & \cellgradient{100.0} & \cellgradient{100.0} & \cellgradient{100.0} & \cellgradient{100.0} & \cellgradient{100.0} & \cellgradient{100.0} & \cellgradient{100.0} & \cellgradient{100.0} & \cellgradient{100.0} \\
$\altplus{5}$ & \cellgradient{11.3} & \cellgradient{50.64} & \cellgradient{100.0} & \cellgradient{100.0} & \cellgradient{100.0} & \cellgradient{100.0} & \cellgradient{100.0} & \cellgradient{100.0} & \cellgradient{100.0} & \cellgradient{100.0} \\
$\altplus{6}$ & \cellgradient{10.4} & \cellgradient{8.24} & \cellgradient{37.04} & \cellgradient{99.56} & \cellgradient{97.54} & \cellgradient{99.88} & \cellgradient{99.92} & \cellgradient{99.88} & \cellgradient{100.0} & \cellgradient{100.0} \\
$\altplus{7}$ & \cellgradient{8.48} & \cellgradient{7.74} & \cellgradient{20.0} & \cellgradient{48.62} & \cellgradient{100.0} & \cellgradient{100.0} & \cellgradient{99.64} & \cellgradient{100.0} & \cellgradient{97.68} & \cellgradient{100.0} \\
$\altplus{8}$ & \cellgradient{5.3} & \cellgradient{6.28} & \cellgradient{19.3} & \cellgradient{51.64} & \cellgradient{75.36} & \cellgradient{94.96} & \cellgradient{100.0} & \cellgradient{100.0} & \cellgradient{100.0} & \cellgradient{100.0} \\
$\altplus{9}$ & \cellgradient{6.66} & \cellgradient{6.88} & \cellgradient{5.82} & \cellgradient{24.38} & \cellgradient{29.22} & \cellgradient{65.92} & \cellgradient{95.2} & \cellgradient{99.2} & \cellgradient{97.9} & \cellgradient{92.76} \\
$\altplus{10}$ & \cellgradient{2.34} & \cellgradient{2.08} & \cellgradient{2.08} & \cellgradient{5.66} & \cellgradient{11.94} & \cellgradient{49.04} & \cellgradient{76.54} & \cellgradient{96.08} & \cellgradient{96.24} & \cellgradient{98.46} \\
$\altplus{11}$ & \cellgradient{2.1} & \cellgradient{2.04} & \cellgradient{2.52} & \cellgradient{3.02} & \cellgradient{5.66} & \cellgradient{11.32} & \cellgradient{46.18} & \cellgradient{30.8} & \cellgradient{96.44} & \cellgradient{89.08} \\
$\altplus{12}$ & \cellgradient{0.38} & \cellgradient{0.38} & \cellgradient{0.4} & \cellgradient{1.12} & \cellgradient{1.78} & \cellgradient{4.18} & \cellgradient{21.24} & \cellgradient{29.78} & \cellgradient{26.92} & \cellgradient{62.36} \\

    \end{tabular}%
\hspace{0.5\bsize}\llap{\raisebox{-0.2\cellheight}{%
\begin{tikzpicture}[x=\cellwidth,y=\cellheight,baseline=0pt,line width=\bsize]
\draw (0,10) |- (1,9) |- (2,8) |- (3,7) |- (4,6) |- (5,5) |- (6,4) |- (7,3) |- (8,2) |- (9,1) |- (10,0);
\end{tikzpicture}%
}}

\end{center}

  \caption{Our theoretical results predict that a transformer with depth $k$ can recognize language $\altplus{k+2}$ but not $\altplus{k+3}$ (demarcated by the black line). This closely aligns with our experimental results.}
  \label{fig:res_emp}
\end{minipage}%
\end{figure}

\newpage
We can empirically observe capabilities that deeper transformers exhibit which shallower transformers do not. For instance, \citet{clark2019does} and \citet{tenney2019bert} find that attention heads at lower layers exhibit lower-order patterns (e.g., each symbol attends to the previous symbol), while heads at higher layers exhibit higher-order patterns (e.g., each direct object attends to its verb). 
However, we do not have many theoretical guarantees about the impacts of depth in transformers. 


In classical theoretical computer science, one studies the power of
computational models by asking what \emph{languages} they can express -- an equivalent way of asking what \emph{problems} they can solve. Our question becomes: What languages can be expressed by transformers of various depths?
We explore this question using the temporal logic $\TLCl$, which is equivalent to the programming language $\CRASP$ \citep{yang2024counting}, a variant of the $\mathsf{RASP}$ language \citep{weiss2021thinking}. 
Previously, $\CRASP$ had been shown to be no less expressive than fixed-precision transformers and no more expressive than arbitrary-precision transformers.
In this paper, we prove that when transformers are defined with rounding to fixed precision except inside attention (see \cref{sec:transformers} for a more precise statement), \textbf{transformers are expressively equivalent to $\TLCl$}.
Moreover, the equivalences between $\CRASP$, $\TLCl$, and transformers preserve depth.

We can therefore investigate transformer depth by investigating $\TLCl$ depth.
Here, we prove a strict depth hierarchy for $\TLCl$, meaning that there is a problem that is solvable by a depth-$k$ $\TLCl$ formula, but not solvable by any depth-$(k-1)$ formulas.
This implies \textbf{a strict depth hierarchy for $\CRASP$ and transformers} (\cref{fig:res_thr}). (We also prove a strict depth hierarchy for the more expressive logic $\TLC$. This implies a strict depth hierarchy for $\mathsf{FO}[<]$-uniform $\LTC^0$, which was not previously known.)
We find experimentally that \textbf{the $\CRASP$ depth hierarchy closely predicts} the depth that transformers require to solve problems with particular sequential dependencies (\cref{fig:res_emp}).

The languages $\altplus{k}$, which separate transformers of different depths, are just sets of strings with $k$ runs of symbols. This suggests that, for example, in a speech recognition system where each phoneme can extend over multiple frames, a transformer with fixed depth may have difficulty recognizing $k$-grams of phonemes, for $k$ sufficiently large.

We are aware of three previous depth separation results for transformers.
\Citet{yang2024masked} established a strict depth hierarchy for \emph{unique-hard attention} transformers, based on the Until hierarchy for linear temporal logic \citep{etessami1996until}.
However, unique-hard attention transformers appear to diverge from transformers as used in practice \citep{huang2025a, liu2023exposing}.
\Citet{sanford2024transformers} proved, conditioned on a widely known conjecture \citep{8948686}, that depth $\Theta(\log(k))$ is necessary and sufficient for transformers to solve the $k$-hop induction heads problem.
The first unconditional depth--width tradeoff, based on communication complexity lower bounds, was shown by \citet{chen2024theoretical}. 
In essence, a transformer with depth $k$ would require an impractically large width of $\Omega(\mathsf{poly}(n))$  to perform the sequential composition of $k+1$ functions, while a transformer with depth $k+1$ can implement a solution with a modest width of $O(\mathsf{polylog}(n))$. 

The latter two results used transformers whose parameters depended on the sequence length $n$. Our results are \emph{parameter-uniform}, that is, they construct transformers with parameters independent of $n$, making them applicable to inputs of arbitrary length and better predictors of length generalization.
Below, we compare and contrast these depth separation results with ours:
\begin{center}
\begin{tabular}{@{}l|>{\raggedright}p{0.2\textwidth}lll@{}}
     & model & transformer & unconditional & parameter-uniform \\
    \midrule
   \textbf{\citet{yang2024masked}} & temporal logic & unique-hard & \textbf{yes} & \textbf{yes} \\
   \textbf{\citet{chen2024theoretical}} & communication complexity & \textbf{softmax} & \textbf{yes} & no\\
   \textbf{\citet{sanford2024transformers}} & massively parallel computation & \textbf{softmax} & no & no \\
   \textbf{This work} & temporal logic & \textbf{softmax} & \textbf{yes} & \textbf{yes}\\
\end{tabular}
\end{center}

\section{Preliminaries}\label{sec:preliminaries}

We write $\N$ for the set of nonnegative integers and $[n]$ for the set $\{1, \ldots, n\}$.
With $w = w_1 \cdots w_n$ a string over $\Sigma$, we write $w[i:j]$ for the substring $w_i \cdots w_j$. 
With $L$ a language, we write $L^*$ for the Kleene star of $L$ ($L^* = \bigcup_{k \geq 0} L^k$) and $L^+$ for $L L^*$.
See \cref{sec:notation} for an index of all notation used.

\subsection{Transformers}
\label{sec:transformers}

In this paper, we consider transformers that use fixed-precision numbers. 
\Citet{merrill2023logic} have pointed out that in a transformer that uses fixed precision for all operations, there is some length beyond which the attention weights necessarily round to zero, making attention unable to attend to anything. 
To get around this, we define \rtfrs{} without rounding of numbers that scale as $n$ or $1/n$.
Namely, define an operation on two $n$-dimensional vectors,
\begin{align*}
\textnormal{sumdiv}(\mathbf{a}, \mathbf{b}) &= \frac{\sum_{j=1}^{n} \mathbf{a}_{j}}{\sum_{j=1}^{n} \mathbf{b}_j}.
\end{align*}
Then if $\mathbf{q} \in \R^{1 \times d}$ is a query vector, $\mathbf{K} \in \R^{n \times d}$ is a matrix of keys, and  $\mathbf{v} \in \R^{n \times 1}$ is a value vector, attention can be written as
\begin{align*}
\textnormal{att}(\mathbf{q}, \mathbf{K}, \mathbf{v}) &= \textnormal{sumdiv}\mleft((\exp \mathbf{q} \mathbf{K}^\top) \circ \mathbf{v}, \exp \mathbf{q} \mathbf{K}^\top\mright)
\end{align*}
where $\circ$ is componentwise multiplication. 
Because the intermediate results of $\textnormal{sumdiv}$ scale with $n$, we do not round them; we only round the final result.
Our transformers use future-masking but no position encodings.
See \cref{app:transformers} for a full definition.

\subsection{Temporal logics with counting}

Temporal logics are used to express properties of (finite and infinite) strings, using predicates and temporal operates to assert properties of the current position.  For instance, in linear temporal logic \citep{gabbay1980temporal}, one can express 
things like ``at some time in the future, there is an $a$.''
Temporal logic with counting \citep{HIRSHFELD20121, barcelological} adds more general integer-valued counting operators: a property such as ``at some time in the future, there is an $a$'' thus becomes ``the number of $a$'s in the future is at least $1$.'' Here, we are interested in these logics because they are equivalent to (variants of) transformers (\cref{sec:transformer_equivalence,appendix:transformer_equivalence}).

\begin{definition}
    The syntax of \emph{past temporal logic with counting}, or $\TLCl$, is as follows:
    \begin{align*}
        \phi & \bnfto \sympred{\sigma} \mid t_1 < t_2 \mid \lnot \phi_1 \mid \phi_1\land \phi_2 && \sigma \in \Sigma && \text{Boolean-valued formulas}\\
        t & \bnfto \countl[\phi_1] \mid t_1 + t_2 
        \mid 1 &&&& \text{integer-valued terms}
    \end{align*}
\end{definition}
Temporal logic with counting, or $\TLC$, adds an operator~$\countr$, and is discussed in \cref{sec:TLC_depth}.
In \cref{app:tlclpos}, we further extend the logic with a $\MOD$ predicate and $\Yop$ operator corresponding to positional encodings in transformers.
Other operators, like $\le, \ge, \lor, -$, multiplication by integer constants, and integer constants other than~$1$, can be defined in terms of the above. 

The semantics of $\TLCl$ and $\TLC$ are defined by the relation $w, i \models \phi$, meaning that the formula $\phi$ is satisfied by string $w$ at position $i$.
First, \(w, i \models \sympred\sigma\) holds when the \(i\)-th symbol of \(w\) is \(\sigma\).
The term $\countl[\phi]$, evaluated on string $w$ at position $i$, counts the number of positions in $w[1:i]$ that satisfy $\phi$. Similarly, $\countr[\phi]$ counts the number of positions in $w[i:n]$ that satisfy $\phi$ (where $n=|w|$).
Terms can be added (\(+\)) or compared ($<$).  
See \cref{def:TLC_semantics} for the full definition.
We write $w\models \phi$ iff $w,|n|\models\phi$; that is, $\phi$ is satisfied by $w$ at its \emph{final} position.\footnote{Usually, the semantics of temporal logics is defined by $w \models \phi$ iff $w, 1 \models \phi$.  However, in our setting, our definition mimics the behavior of generative transformer decoders, which for each position, consider all previously generated tokens (to the \emph{left}) in order to produce the next token.}  
Finally, the language defined by $\phi$ $\lang(\phi)=\{w\in\Sigma^*\mid w\models\phi\}$.

\begin{example} \label{ex:dyck}
The Dyck language is the set of strings of balanced and matched parentheses.
The formula $\phi_{\text{balance}} = (\countl[\sympred{(}] = \countl[\sympred{)}])$ checks that the number of left and right parentheses is equal.
The formula $\phi_{\text{match}} = (\countl[\countl[\sympred{(}] < \countl[\sympred{)}]] = 0)$ checks that, in every prefix, the number of right parentheses does not exceed the number of left parentheses. So the Dyck language is defined by $\phi_{\text{balance}} \land \phi_{\text{match}}$. \Cref{sec:more_examples} shows more detailed traces of this formula on some example strings.
\end{example}

The \emph{depth} of a formula or term of $\TLCl$ or $\TLC$ is the maximum depth to which $\countl$ and $\countr$ operators are nested. $\TLCl_k$ is the class of all $\TLCl$ formulas with depth at most $k$, and similarly for $\TLC_k$ and the other logics we use.  See \cref{def:TLC_depth} in \cref{app:tlc} for a formal definition. 
A depth-$1$ formula may be a Boolean combination of other formulas of depth $0$ and $1$; a \emph{minimal depth-$1$ formula} is one that does not contain other depth-$1$ formulas, that is, a formula of the form $t_1 < t_2$ where $t_1$ and $t_2$ are depth-$1$ terms. For example, $\sympred{a} \land (1 \leq 1 + 1)$ has depth~0, $\sympred{a} \land (\countl[\sympred{a}] < \countl[\sympred{b}])$ has depth 1 but is not minimal, and $\countl[\sympred{a}] \leq \countl[\sympred{b}]+\countl[\sympred{c}] + 1$ is a minimal depth-$1$ formula.

\Citet{yang2024counting} called $\TLCl$ by a different name, $\mathsf{K_t}[\#]$. They showed that it is equivalent to \CRASP, which admits a notion of depth that corresponds exactly to formula depth.


\subsection{Parikh vectors}\label{sec:Parikh}

At the heart of \(\TLCl\) is the ability to count symbols, 
so we introduce some notation related to counts.
\begin{definition}[\citealp{parikh1966context}]\label{def:Parikh_map}
  Let $\Sigma=\{\sigma_1,\sigma_2,\ldots,\sigma_m\}$ be an (ordered) alphabet. The \emph{Parikh vector} of \(w\), written \(\prkh(w)\), records
    the number of times each symbol occurs in $w$, ignoring order. That is,
    \begin{align*}
    \prkh\colon \Sigma^*&\to \N^m \\
        \prkh_\sigma(w) &= |\{i\in [|w|] \mid w_i=\sigma\}|\\
        \prkh(w)&=(\prkh_{\sigma_1}(w),\prkh_{\sigma_2}(w),\ldots,\prkh_{\sigma_m}(w)).
    \end{align*}
\end{definition}

\begin{definition}
    For a Parikh vector $\avec=(v_1,v_2,\ldots,v_\alphsize)\in \N^\alphsize$ we write the length of $\avec$ as $\len{\avec}=v_1+v_2+\ldots + v_\alphsize$. 
    To access individual coordinates, we write  $[\avec]_{\sigma_i}$ for $v_i$.
\end{definition}
\begin{definition}\label{def:intervals}
    For Parikh vectors $\vec{i},\vec{j}\in\N^\alphsize$, we write $\vec{i}\leq \vec{j}$ iff $[\vec{i}]_\sigma\leq [\vec{j}]_\sigma$ for all $\sigma \in \Sigma$. We write $[\vec{i},\vec{j}]$ for the set of all vectors $\avec$ such that $\vec{i} \leq \avec \leq \vec{j}$. We call $[\vec{i},\vec{j}]$ an \emph{interval} in $\N^\alphsize$, and we write $\mathcal{I}(\N^\alphsize)$ for the set of all intervals in $\N^\alphsize$.
    A \emph{family of intervals} is a function $I \colon \N^\alphsize \to \mathcal{I}(\N^\alphsize)$.
\end{definition}



\begin{definition}\label{def:PNP}
    For each partial function $\pi \colon \N^\alphsize \times \N \to \{0, 1\}$ such that $\pi(\avec, i)$ is defined iff $1 \le i \le \len{\avec}$, define a predicate $\Pi$, called a \emph{Parikh numerical predicate} or \emph{PNP}:
    \[w,i\models\Pi \iff \pi(\prkh(w),i)=1. \]
    We write $\TLClP$ and $\TLCP$ for the logics $\TLCl$ and $\TLC$, respectively, augmented with arbitrary PNPs.
\end{definition}

\subsection{Piecewise testable languages}\label{sec:piecewise_testable}

Piecewise testable languages are a subclass of regular languages that has been
studied in semigroup theory and logic~\citep{simon75}. 
Here, they will be key to separating the depth levels of both $\TLCl$ and $\TL[\countl,\countr]$.

\begin{definition}
    A \emph{$\mathcal{J}$-expression} is a language of the form $\Sigma^*\sigma_1\Sigma^*\sigma_2\Sigma^*\cdots\Sigma^*\sigma_k\Sigma^*$
    where $\sigma_1,\ldots,\sigma_k\in\Sigma$. A language is \emph{$k$-piecewise testable} if it is a Boolean combination of $\mathcal{J}$-expressions with at most $k$ fixed symbols. A language is \emph{piecewise testable} if it is $k$-piecewise testable for some $k$.
\end{definition}

\begin{lemma}\label{lem:piecewise_testable}
    For all $k \ge 0$, define $L_k$ to be the language of strings with alternating \emph{blocks} of $a$'s and~$b$'s, starting with $a$:
    \begin{align}
        \altplus{k} &= \begin{cases}
        (a^+b^+)^{k/2} & \text{$k$ even} \\
        (a^+b^+)^{(k-1)/2} a^+ & \text{$k$ odd.}
        \end{cases}
        \label{eq:altplus}
    \end{align}
    Then $L_k$ is $k$-piecewise testable.
\end{lemma}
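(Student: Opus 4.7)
The plan is to exhibit an explicit Boolean combination of $\mathcal{J}$-expressions with exactly $k$ fixed symbols that defines $\altplus{k}$. Let $\alpha_k \in \{a,b\}^k$ be the alternating string $abab\cdots$ of length $k$ starting with $a$, and $\beta_k$ the alternating string $baba\cdots$ of length $k$ starting with $b$. I claim
\[
    \altplus{k} = \{w \in \{a,b\}^* : \alpha_k \text{ is a subsequence of } w\} \setminus \{w \in \{a,b\}^* : \beta_k \text{ is a subsequence of } w\}.
\]
Each ``contains $\gamma$ as a subsequence'' for $|\gamma| = k$ is exactly a $\mathcal{J}$-expression with $k$ fixed symbols, so the right-hand side is a Boolean combination of two such expressions, which gives $k$-piecewise testability. (The edge case $k = 0$ can be handled separately; for $k \ge 1$ this is the whole story.)

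The proof will rest on one preliminary observation about strings over $\{a,b\}$: if $w$ has $j$ maximal monochromatic blocks, then the longest alternating subsequence of $w$ has length $j$, and more precisely the longest one starting with the same letter as the first block of $w$ has length $j$, while the one starting with the opposite letter has length $j - 1$. Given this, the forward inclusion is immediate: if $w \in \altplus{k}$ then $w$ has exactly $k$ blocks and starts with $a$, so picking one symbol per block witnesses $\alpha_k$, while every $b$-initial alternating subsequence has length at most $k - 1$, ruling out $\beta_k$.

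For the reverse inclusion, I would suppose $w$ contains $\alpha_k$ but not $\beta_k$ and let $j$ denote its block count. If $w$ starts with $a$, the two hypotheses combine via the observation into $j \ge k$ and $j - 1 < k$, forcing $j = k$ and hence $w \in \altplus{k}$. If instead $w$ starts with $b$, the observation gives $j - 1 \ge k$ and $j < k$, a contradiction. I do not anticipate any serious obstacle; the only subtlety is noticing that the asymmetric pair $(\alpha_k, \beta_k)$ already pins down both the block count and the identity of the first block, so one never needs to inspect subsequences of length $k + 1$, which is what would blow the $k$-symbol budget (a naive ``contains $\alpha_k$, avoids $\alpha_{k+1}$, avoids $\beta_{k+1}$'' characterization would only prove $(k+1)$-piecewise testability).
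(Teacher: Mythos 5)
Your proposal is correct and takes essentially the same approach as the paper: the paper's $\altsinglea{k}$ and $\altsingleb{k}$ (\cref{eq:altsingle}) are precisely the $\mathcal{J}$-expressions ``contains $\alpha_k$ as a subsequence'' and ``contains $\beta_k$ as a subsequence,'' and the paper also uses the set difference $\altsinglea{k} \setminus \altsingleb{k}$. Your block-count observation supplies a verification that the paper leaves implicit, so the extra detail is welcome; the $k = 0$ corner case you flag is a genuine (if immaterial) convention issue that the paper's version shares.
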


\begin{proof}
    Define the following $k$-piecewise testable languages:
    \begin{align}
        \altsinglea{k} &= \begin{cases}
            \Sigma^* (a \Sigma^* b \Sigma^*)^{k} & \text{$k$ even} \\
            \Sigma^* (a \Sigma^* b \Sigma^*)^{(k-1)/2} a \Sigma^* & \text{$k$ odd}
        \end{cases} &
        \altsingleb{k} &=
        \begin{cases}
            \Sigma^* (b \Sigma^* a \Sigma^*)^{k/2} & \text{$k$ even} \\ 
            \Sigma^* (b \Sigma^* a \Sigma^*)^{(k-1)/2} b \Sigma^* & \text{$k$ odd.}
        \end{cases} \label{eq:altsingle}
    \end{align}
    Then $\altplus{k}$ is a Boolean combination of these:
    \begin{align*}
    \altplus{k} &= \Sigma^* \setminus \altsingleb{k} \cap \altsinglea{k}. \tag*{\qedhere}
    \end{align*}
\end{proof}

\begin{restatable}{lemma}{PiecewiseTestableDepth}\label{lem:piecewise_testable_depth}
    Any $k$-piecewise testable language is definable in $\TLCl_{k}$, and any $(2k+1)$-piecewise testable language is definable in $\TL[\countl,\countr]_{k+1}$.
\end{restatable}

\begin{proof}[Proof sketch] Since our logics are closed under Boolean operations, we simply need to show the statement for $\mathcal{J}$-expressions.  Let us sketch this in the case $k=1$.  There is a straightforward way to define $\Sigma^* a \Sigma^* b \Sigma^* c \Sigma^*$ in $\TLCl_{2k+1}$:
    \[\countl[(\countl[(\countl[\sympred{a}] \geq 1) \land \sympred{b}]\geq 1) \land \sympred{c}]\geq 1.\]
With both past and future counting, we can find the middle symbol \(b\) and check to the left and right:
    \[\countl[\countl[\sympred{a}] \ge 1 \land \sympred{b} \land \countr[\sympred{c}] \ge 1] \ge 1. \]
This is in $\TLC_{k+1}$, as desired. See \cref{sec:piecewise_testable_depth_proof} for the full proof.
\end{proof}

\section{Transformer Equivalence}
\label{sec:transformer_equivalence}

Logics like $\TLCl$ provide a way to reason about the computations that occur in transformers. 
\Citet{yang2024counting} proved that transformers can simulate $\TLCl$, and $\TLCl$ can simulate transformers that round all values to fixed precision.
Here, we show that fixed precision, with rounding slightly loosened as in \cref{sec:transformers}, makes it possible to obtain an exact equivalence. 
\begin{restatable}{theorem}{TransformerEquivalence}
\label{thm:transformer_equivalence}
A language $L$ is defined by a formula of\/ $\TLCl$ of depth $k$ if and only if\/ $\bos \cdot L$ is recognized by a \rtfr{} of depth $k$.
\end{restatable}

\begin{proof}
    See \cref{appendix:transformer_equivalence}.
\end{proof}

In \cref{app:pes}, we will extend the transformer-to-logic direction of this result to several position encodings and an extension of $\TLCl$.

\section{Depth Hierarchy}\label{sec:depth_hierarchy}

To prove a strict depth hierarchy for $\TLCl$, we adapt the technique of \citet{behle2009regular}, which was originally used 
on $\MAJtwo$, a logic equivalent to $\TLC$. 
The main idea is to assume, towards a contradiction, that a certain language \emph{is} definable by a formula, then to simultaneously restrict the language and reduce the depth of the formula down to depth $1$. The contradiction will be that the restricted language has a property (namely, sensitivity to ordering) that \emph{is not} definable at depth~$1$. 
This technique can also be applied to the bidirectional logic $\TLC$, with implications for other logics and circuit classes, as detailed in \cref{sec:TLC_depth}.
\begin{figure}
\centering
\begin{subcaptionblock}[T]{0.48\linewidth}
    \centering
    \begin{tikzpicture}[x=0.75cm,y=0.75cm]
        \draw[step=0.375cm,gray,thin, opacity=0.75] (0,0) grid (4.5,4);

        
        \fill[blue!50, opacity=0.6] (0,0) -- (4.5,3) -- (4.5,4) -- (0,4) -- cycle;
        \fill[orange!50, opacity=0.6] (0,0) -- (0,4) -- (1,4) -- (4.5,0.5) -- (4.5,0) -- cycle;
        
        \draw[gray, thin] (0,0) rectangle (4.5,4);
        
        \draw[thick, blue] (0,0) -- node[above,pos=0.8,yshift=0.1cm] {$\phi_1$} 
        (4.5,3);
        \draw[thick, orange] (1,4) -- node[below,pos=0.9] {$\phi_2$} (4.5,0.5);
        
        \draw[thin,->] (0,0) -- (0,4.5);
        \node[left] at (-0.5,4) {$\countl[\sympred{b}]$}; 
        \foreach \y in {1,2,3,4,5,6,7,8} {
            \node[left] at (0,\y/2) {\scriptsize $\y$};
        }
        
        \draw[thin,->] (0,0) -- (5,0) node[right] {$\countl[\sympred{a}]$}; 
        \foreach \x in {1,2,3,4,5,6,7,8,9} {
            \node[below] at (\x/2,0) {\scriptsize $\x$};
        }
        
        \draw[ultra thick] (0,0) -- (1,0) -- (1,1) -- (1.5,1) -- (1.5,3) -- (2.5, 3) -- (2.5,3.5) -- (3.5,3.5) -- (3.5,4) -- (4.5,4);
        \node[above] at (4.25,4) {$w$};
        \draw[ultra thick] (0,0) -- (0,0.5) -- (3.5,0.5) -- (3.5,1.5) -- (3.5,2.5) -- (4,2.5) -- (4,3) -- (4.5,3) -- (4.5,4);
        \node[right] at (4.5,3.75) {$w'$};
    \end{tikzpicture}
    \caption{Strings can be pictured as paths, and minimal depth-1 subformulas as half-planes.}
    \label{fig:reduction_halfplanes}
\end{subcaptionblock}   
\hfill
\begin{subcaptionblock}[T]{0.48\linewidth}
    \centering
    \begin{tikzpicture}[x=0.75cm,y=0.75cm]
        \draw[step=0.375cm,gray,thin, opacity=0.75] (0,0) grid (4.5,4);
        
        \node[above] at (2.25, 4) {$I$};
        
        \fill[blue!50, opacity=0.6] (0,0) -- (4.5,3) -- (4.5,4) -- (0,4) -- cycle;

        \fill[orange!50, opacity=0.6] (0,0) -- (0,4) -- (1,4) -- (4.5,0.5) -- (4.5,0) -- cycle;

        \draw[gray, thin] (0,0) rectangle (4.5,4);
        
        \draw[thick, blue] (0,0) -- node[above,pos=0.8,yshift=0.1cm] {$\phi_1$} 
        (4.5,3);
        \draw[thick, orange] (1,4) -- node[below,pos=0.9] {$\phi_2$} (4.5,0.5);

        \draw[thin,->] (0,0) -- (0,4.5);
        \node[left] at (-0.5,4) {$\countl[\sympred{b}]$}; 
        \foreach \y in {1,2,3,4,5,6,7,8} {
            \node[left] at (0,\y/2) {\scriptsize $\y$};
        }
        
        \draw[thin,->] (0,0) -- (5,0) node [right] {$\countl[\sympred{a}]$}; 
        \foreach \x in {1,2,3,4,5,6,7,8,9} {
            \node[below] at (\x/2,0) {\scriptsize $\x$};
        }
        
        \draw[thick] (2.5,0.5) rectangle (3.5,1.5);
        \node[below] at (3, 1.5) {$I'$};

        \draw[ultra thick] (0,0) -- (0,0.5) -- (2.5,0.5);
        
        \draw[ultra thick] (3.5,1.5) -- (3.5,2.5) -- (4,2.5) -- (4,3) -- (4.5,3) -- (4.5,4);
    \end{tikzpicture}
    \caption{Within a sufficiently large rectangle ($I$), one can find a sub-rectangle ($I'$) within which the depth-1 subformulas are either always true or always false.}
    \label{fig:reduction_subinterval}
\end{subcaptionblock}
    
\caption{Visualization of the reduction lemma (\cref{lem:reduction}).}
\label{fig:reduction}
    
\end{figure}

\subsection{Intuition and example}

As an example of the technique, consider the following formula: 
\begin{align*}
    \phi &= \countl [{\underbrace{(2\countl  [\sympred{a}]\leq 3\countl  [\sympred{b}])}_{\phi_1}} \land {\underbrace{(\countl  [\sympred{a}]+\countl [ \sympred{b}]\leq 10)}_{\phi_2}} \land \sympred{b}] \geq 1.
\end{align*}
This formula has depth $2$ and it has two minimal depth-1 subformulas, $\phi_1$ and $\phi_2$. We want to replace $\phi_1$ and $\phi_2$ with depth-0 subformulas, reducing the depth of $\phi$ from 2 to 1, while restricting the language defined.

Since $\phi_1$ and $\phi_2$ are linear inequalities in the counts $\countl [\sympred\sigma]$, we can picture them as half-planes (\cref{fig:reduction_halfplanes}).
A string can be pictured as a path, and a prefix of the string as a point on the path.
For concreteness, we fix a vector $\lenvec = (9,8)$ and let $I$ be the interval $[\vec{0},\lenvec]$. (In the formal proof, $\lenvec$ will not be fixed, and $I$ will be a \emph{family} of intervals depending on $\lenvec$.)
Suppose we can find some subinterval $I'$ (which can be pictured as a rectangle, as in \cref{fig:reduction_subinterval}) that does not cross any of the half-plane boundaries.
All points in $I'$ are equivalent in the sense that the truth value of $\phi_1$ and $\phi_2$ is the same for all points in~$I'$. 

We restrict the language by choosing a prefix corresponding to a path from the bottom left of $I$ to the bottom left of $I'$, and a suffix corresponding to a path from the top right of $I'$ to the top right of $I$.
In our example, we picked $baaaaa$ and $bbababb$. 
Informally, we can define the restriction of $\phi$ to this prefix and suffix like so:
\begin{align*}
    \phi'&= \text{``prefix is $baaaaa$''} \land \text{``suffix is $bbababb$''} \\
    &\qquad\land (\countl [{\underbrace{\phi_1\land\phi_2}_{(*)}} \land \sympred{b} \land \text{``inside $I'$''}]+\countl[{\underbrace{\phi_1\land\phi_2}_{(\dag)}} \land \sympred{b} \land \text{``in prefix/suffix''}] \geq 1).
\end{align*}
Then the occurrences of $\phi_1$ and $\phi_2$ marked $(*)$ are always true and false, respectively, while the occurrences marked $(\dag)$ depend only on the position.
As we will see, we can replace all of these by PNPs, reducing the depth of the formula from $2$ to $1$.

Carefully iterating this process (\Cref{lem:reduction}) leads to a depth-$1$ formula defining a restriction of the original language.
In this language, we will show that the ordering of symbols matters, but we will see in \Cref{lem:TLCP_commutative} that depth-$1$ formulas are (in a particular sense) insensitive to the ordering of symbols. This is a contradiction, demonstrating that the original language is not definable.

\subsection{Affix restrictions}\label{section:affix-restrictions}

We start by defining the affix restrictions that were informally introduced in at the start of this section.
\Citet{krebs2008typed} enforced similar restrictions using the algebraic tool of \emph{non-uniform morphisms}, while \citet{behle2009regular} used numerical predicates on languages with a restricted Parikh image.
We follow the latter approach of expressing this idea within a purely logical framework, but introduce Parikh numerical predicates as a way to generalize the 
technique.

\begin{definition}[cf. Def.~6.5 of \citealp{krebs2008typed}]\label{def:affix_restriction}
    An \emph{affix restriction} is a pair $(\lambda, \varrho)$, where $\lambda,\varrho\colon \N^\alphsize\to \Sigma^*$.
    For any language $L\subseteq\Sigma^*$ and affix restriction $(\lambda,\varrho)$, we define the \emph{restriction} of $L$ to $(\lambda, \varrho)$ as
    \[{}_\lambda L_{\varrho}=\{w\in L\colon \exists w'\in\Sigma^*\text{ such that } w=\lambda(\prkh(w))\,w'\varrho(\prkh(w))\}.\]
\end{definition}

In the definition above, the set of positions occupied by $w'$ within each string is the only part not fixed by $(\lambda,\varrho)$.
This region is important for the depth reduction process, so we give it a name:

\begin{definition}
    The \emph{middle} of an affix restriction $(\lambda,\varrho)$ is the family of intervals given by $\lenvec \mapsto [\prkh(\lambda(\lenvec)), \lenvec - \prkh(\varrho(\lenvec))]$.
\end{definition}

In order for the middle of an affix restriction to not be too restricted, we typically require affix restrictions to have the following property.

\begin{definition}\label{def:accommodating}
We say that a family of intervals $I$ is \emph{accommodating} if, for any $\vec{s} \in \N^\alphsize$, there is an interval $[\vec{i}, \vec{j}]$ in the image of~\(I\) such that $\vec{s} \leq \vec{j} - \vec{i}$, or, equivalently, $\vec{s} \in [\vec{0}, \vec{j}-\vec{i}]$.
We say that an affix restriction $(\lambda,\varrho)$ is \emph{accommodating} if its middle is accommodating.
\end{definition}

An example of a non-accommodating affix restriction, with $\Sigma = \{a, b\}$, is $\lambda((n_a,n_b))=\epsilon$ and $\varrho((n_a,n_b))=a^{n_a}b^{n_b}$. In this case ${}_\lambda L_{\varrho}$ will have at most one string for each $(n_a,n_b)$. 
An accommodating affix restriction is the trivial one $\lambda((n_a,n_b))=\varrho((n_a,n_b))=\epsilon$. 
In this case ${}_\lambda L_{\varrho}$ will have $\binom{n_a+n_b}{n_a}$ strings for each $(n_a,n_b)$. 

Accommodating affix restrictions will be key to depth-reduction.
If a language has an accommodating middle under a restriction $(\lambda,\varrho)$, then there is enough room to apply another restriction $(\lambda',\varrho')$ inside the middle, while decreasing the depth of the formula. 
As long as the property of accommodation is preserved at each step, this process can be iterated until we reach depth $1$.

\subsection{Properties of depth 1}\label{section:half-planes}

In this section, we exhibit inherent limitations of depth-$1$ formulas. Informally, we show that these formulas only recognize commutative languages, that is, languages where the order of symbols does not matter. But we need to qualify this statement slightly, and we need some technicalities as well.

In particular, affix restrictions will fix the ordering of symbols in the prefix and suffix, so affix-restricted languages can only be commutative in the following sense:

\begin{definition}
We say that a language $L$ is \emph{commutative on the middle of} an affix restriction $(\lambda, \varrho)$ if, for any $w, w'\in \affrestrict{\Sigma^*}{\lambda}{\varrho}$ such that $\prkh(w) = \prkh(w')$, we have $w \in L$ if and only if $w' \in L$. 
\end{definition}

We will use PNPs to enforce ordering in the prefix and suffix, but to allow languages to be commutative on the middle, we need to prevent the PNPs from enforcing ordering in the middle.

\begin{definition}\label{def:constant}
We say that a formula $\phi$ is \emph{constant} on a family of intervals $I \colon \N^\alphsize \to \mathcal{I}(\N^\alphsize)$ if the following holds for all $\lenvec \in \N^\alphsize$: For all $w, w' \in \Sigma^*$ with $\prkh(w) = \prkh(w') = \lenvec$ and all positions $i, i'$ in $I(\lenvec)$, we have $w,i \models \phi$ if and only if $w',i' \models \phi$.
\end{definition}

\begin{restatable}[Commutativity of depth $1$]{lemma}{TLCPCommutative} \label{lem:TLCP_commutative}
For any depth-$1$ formula $\phi$ of\/ $\TLClP_1$ or $\TLCP_1$ and any affix restriction $(\lambda, \varrho)$ with $|\varrho(\lenvec)| \ge 1$ for all $\lenvec$, if the PNPs of $\phi$ are constant on the middle of $(\lambda, \varrho)$, then 
$\lang(\phi)$
is commutative on the middle of $(\lambda, \varrho)$.
\end{restatable}

\begin{proof}
    See \cref{sec:TLCP_commutative_proof}.
The reason for the condition $|\varrho(\lenvec)|\ge1$ is that the $\sympred\sigma$ predicates are able to test the symbol in the last position.
\end{proof}




\subsection{Cropping and reduction lemmas}

In this section, we show how to decrease the quantifier depth of a formula $\phi$ while specifying precisely how $\lang(\phi)$ is weakened.
Our approach follows Lemma~3 of \citet{behle2009regular} and Lemma~6.8 of \citet{krebs2008typed}, but faces additional technical difficulties specific to $\TLCl$ and, to a lesser extent, the use of PNPs.
Since we will be applying this technique on two-letter alphabets, we set $\Sigma = \{a, b\}$ for the rest of this section. Thus, we can visualize Parikh vectors and intervals in the plane, with the number of $a$'s on the horizontal axis, and the number of $b$'s on the vertical axis.

First, the cropping lemma takes a family of intervals $I$ and ``crops'' it down to a family of subintervals $I'$ on which the minimal depth-1 subformulas are constant.  This is done while controlling where $I'$ sits within $I$, and we start by formalizing this notion:

\begin{definition}
We say that an interval $[\vec{i}',\vec{j}']$ \emph{sticks to the top of} an interval $[\vec{i},\vec{j}]$
if $[\vec{i}',\vec{j}'] \subseteq [\vec{i},\vec{j}]$ and $[\vec{j}']_b = [\vec{j}]_b$.  Graphically, this means that the top edge of the rectangle $[\vec{i}',\vec{j}']$ is included in the top edge of the rectangle $[\vec{i}, \vec{j}]$. 
We define an interval \emph{sticking to the bottom, left}, or \emph{right} analogously.
\end{definition}

\begin{restatable}[Cropping Lemma for 
$\TLCl$]{lemma}{CroppingOneWay}\label{lem:cropping_oneway}
For any formula $\phi$ of $\TLClP$ and any accommodating family of intervals $I \colon \N^\alphsize \to \mathcal{I}(\N^\alphsize)$ such that the PNPs of $\phi$ are constant on $I$, there exists an accommodating family of intervals $I' \colon \N^\alphsize \to \mathcal{I}(\N^\alphsize)$ such that $I'(\lenvec)$ sticks only to the top (and no other side) of $I(\lenvec)$ for all $\lenvec \in N^\alphsize$, and all of the minimal depth-1 subformulas (and PNPs) of $\phi$ are constant on $I'$. Additionally, there exists such an $I'$ such that $I'(\lenvec)$ sticks only to the right of $I(\lenvec)$. 
\end{restatable}

\begin{proof}
    See \cref{sec:cropping_oneway_proof}.
\end{proof}

Second, the reduction lemma takes an affix restriction (whose middle is $I'$ given by the cropping lemma) and rewrites away the minimal depth-$1$ subformulas, reducing the depth of the formula by~$1$.

\begin{restatable}[Reduction Lemma]{lemma}{Reduction}\label{lem:reduction}
  For any depth-$k$ formula $\phi$ of\/ $\TLClP_k$ (or $\TLCP_k$) and affix restriction $(\lambda, \varrho)$, if the PNPs and minimal depth-1 subformulas of $\phi$ are constant on the middle of $(\lambda, \varrho)$, then there is a formula \(\phi'\) of depth $(k-1)$ of\/ $\TLClP_{k-1}$ (or $\TLCP_k$, resp.) that defines $\affrestrict{\lang(\phi)}{\lambda}{\varrho}$, and the PNPs of $\phi'$ are constant on the middle of $(\lambda, \varrho)$.
\end{restatable}

\begin{proof}
    See \cref{sec:reduction_proof}.
\end{proof}

\subsection{Non-definability results}\label{sec:TLCl_depth}

As described by \citet{behle2009regular}, the key to applying this lemma is to choose a language $L$ and appropriate affix families such that the restricted language does not become trivial. 
We now use the cropping and reduction lemmas to derive the strictness of the depth hierarchy of $\TLCl$.

\begin{theorem}\label{thm:TLCl_depth}
     Let $k > 0$.  The language $\altplus{k+1}$ (\cref{eq:altplus}) is definable in $\TL[\countl]_{k+1}$ but not in $\TL[\countl]_k$.
\end{theorem}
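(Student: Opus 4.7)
The upper bound is immediate from \Cref{lem:piecewise_testable} and \Cref{lem:piecewise_testable_depth}, since $\altplus{k+1}$ is $(k+1)$-piecewise testable.

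For the lower bound, I assume for contradiction that some $\phi \in \TLCl_k$ defines $\altplus{k+1}$, and produce a contradiction by iterating the cropping (\Cref{lem:cropping_oneway}) and reduction (\Cref{lem:reduction}) lemmas $k-1$ times. Starting with $\phi_0 = \phi$ and an initial affix $(\epsilon, \sigma)$ where $\sigma$ is the last symbol of $\altplus{k+1}$ (so that $|\varrho(\lenvec)| \geq 1$ throughout), at each iteration $i \in \{1, \ldots, k-1\}$ I apply the cropping lemma to $\phi_{i-1}$ with middle $I_{i-1}$ of $(\lambda_{i-1}, \varrho_{i-1})$, choosing the subinterval $I'_{i-1}$ that sticks only to the top of $I_{i-1}$ (if $k+1$ is odd) or only to the right (if $k+1$ is even); this ensures the suffix extension is a pure block of $\sigma$'s, so $\varrho_i$ remains a single block matching $\altplus{k+1}$'s terminal block. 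I then arrange the new prefix $\lambda_i$ to extend $\lambda_{i-1}$ as a string while matching the initial blocks of $\altplus{k+1}$'s structure (concretely, by placing the $\eta_i - \eta_{i-1}$ new $b$'s immediately after $\lambda_{i-1}$'s last $b$-block and the new $a$'s immediately after that, yielding a $\lambda_i$ with $i+1$ blocks alternating $a, b, a, b, \ldots$). The reduction lemma then produces $\phi_i \in \TLClP_{k-i}$ defining $\affrestrict{\altplus{k+1}}{\lambda_i}{\varrho_i}$ with PNPs constant on $I'_{i-1}$, preserving the iteration's hypotheses.

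After $k-1$ iterations, $\phi_{k-1} \in \TLClP_1$ defines $\affrestrict{\altplus{k+1}}{\lambda_{k-1}}{\varrho_{k-1}}$ with PNPs constant on the middle, so by \Cref{lem:TLCP_commutative} this language is commutative on its middle. To derive the contradiction, I use the accommodating property to choose $\lenvec$ large enough that the middle Parikh $(m_a, m_b)$ has both coordinates sufficiently large. By construction, $\lambda_{k-1}$ has $k$ blocks matching $\altplus{k+1}$'s first $k$ blocks, and $\varrho_{k-1}$ is a single pure block completing the last block of $\altplus{k+1}$; the middle must therefore be filled by a two-block string $u$ that continues $\lambda_{k-1}$'s last block and transitions into $\varrho_{k-1}$'s block, yielding $w = \lambda_{k-1} u \varrho_{k-1}$ with exactly $k+1$ blocks, hence in $\altplus{k+1}$. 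Swapping one symbol across $u$'s internal block boundary gives a $u'$ of the same Parikh but with additional blocks, so that $w' = \lambda_{k-1} u' \varrho_{k-1}$ has strictly more than $k+1$ blocks and thus $w' \notin \altplus{k+1}$ --- contradicting commutativity.

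The main obstacle is maintaining the chain of invariants across iterations: each new affix must properly extend the previous as a prefix/suffix (so $\affrestrict{\affrestrict{\altplus{k+1}}{\lambda_{i-1}}{\varrho_{i-1}}}{\lambda_i}{\varrho_i}$ collapses to $\affrestrict{\altplus{k+1}}{\lambda_i}{\varrho_i}$), the specific arrangement of $\lambda_i$ must stay compatible with $\altplus{k+1}$'s initial block pattern (otherwise the restricted language becomes empty, rendering commutativity vacuous), and the PNP and minimal-depth-1 constancy hypotheses of the reduction lemma must be preserved. Once these invariants are in place, the concluding non-commutativity argument is a routine block count.
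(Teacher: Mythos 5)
Your proposal is correct and follows essentially the same approach as the paper: both prove the upper bound via $(k+1)$-piecewise testability, and both prove the lower bound by iterating the cropping lemma (sticking to the top or right depending on parity) and reduction lemma down to depth~$1$, then invoking commutativity to produce two strings with the same Parikh vector, same affixes, and different block counts. The only differences are cosmetic --- your iteration index runs upward rather than downward, you start from the empty prefix rather than $\lambda_k=a$, and you perturb $u$ by a single symbol swap rather than reversing the two middle blocks --- none of which change the argument.
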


\begin{proof}
Assume that $k$ is even (the odd case is similar).
    For a contradiction, assume there exists some depth-$k$ formula $\phi\in \TLCl_k$ which defines $\altplus{k+1}$. Let $I_k(\vec{n}) = [(1,0), \vec{n}-(1,0)]$, $\lambda_k(\lenvec) = a, \varrho_k(\lenvec) = a$, and $\phi_k = \phi$. Note that $(\lambda_k,\varrho_k)$ is accommodating, and $\phi_k$ has no PNPs.

For $\ell = k-1, k-2, \ldots, 1$, we will define the following:
\begin{compactenum}
    \item An accommodating family of intervals $I_\ell(\lenvec)\subseteq I_{\ell+1}(\lenvec)$;
    \item An accommodating affix restriction $\lambda_\ell(\lenvec)\in \altplus{k-\ell+1}$ and $\varrho_\ell(\lenvec)\in a^+$;
    \item A depth-$\ell$ formula $\phi_\ell\in \TLClP_{\ell}$ which defines $\affrestrict{(\altplus{k+1})}{{\lambda_\ell}}{{\varrho_\ell}}$ and only uses PNPs which are constant over $(\lambda_\ell,\varrho_\ell)$.
\end{compactenum}

We use the following iterative procedure:
\begin{compactenum}
    \item Using \cref{lem:cropping_oneway}, find an accommodating family of intervals $I_\ell$ such that for all $\lenvec$, $I_\ell(\lenvec)$ sticks only to the top of $I_{\ell+1}(\lenvec)$, and the minimal depth-1 subformulas of $\phi_{\ell+1}$ are constant on $I_\ell(\lenvec)$.
    \item Choose an affix restriction whose middle is $I_\ell$, as follows.
    Let $[\vec{i}, \vec{j}] = I_{\ell+1}(\lenvec)$ and $[\vec{i'}, \vec{j'}] = I_{\ell}(\lenvec)$; then
    \[
    \begin{aligned}
    \lambda_\ell(\lenvec) &= \begin{cases}
    \lambda_{\ell+1}(\lenvec) a^{ [\vec{i'}-\vec{i} ]_a } b^{ [\vec{i'}-\vec{i}]_b } & \text{$\ell$ odd} \\
    \lambda_{\ell+1}(\lenvec) b^{ [\vec{i'}-\vec{i} ]_b } a^{ [ \vec{i'}-\vec{i} ]_a } & \text{$\ell$ even}
    \end{cases} \\
    \varrho_\ell(\lenvec) &= a^{ [ \vec{j}-\vec{j'} ]_a } \varrho_{\ell+1}(\lenvec).\\
    \end{aligned}
    \qquad
    \begin{tikzpicture}[x=3mm,y=2.5mm,baseline=7.5mm]
    \draw (0,0) rectangle (10,6); \node[above right] at (0,6) {$I_{\ell+1}(\lenvec)$};
    \draw (5,4) rectangle (7,6); \draw (6,5.5) -- (5.8,6.8) node[above] {$I_\ell(\lenvec)$};
    \draw[very thick] (0,0) -- node[auto] {$\lambda_\ell(\lenvec)$} (5,0) -- (5,4);
    \draw[very thick] (7,6)  -- node[above] {$\varrho_\ell(\lenvec)$} (10,6);
    \end{tikzpicture}
    \]
    \vspace{3ex}
    
    Because $\lambda_{\ell+1}(\lenvec)\in \altplus{k-\ell}$ and $I_\ell(\lenvec)$ sticks only to the top of $I_{\ell+1}(\lenvec)$, we have that $\lambda_{\ell}\in \altplus{k-\ell+1}$. At the same time, $\varrho_\ell(\lenvec)\in a^+$.
    \item Using \cref{lem:reduction}, we can find a depth-$\ell$ formula $\phi_\ell$ of $\TLClP_{\ell}$ that defines $\affrestrict{(\altplus{k+1})}{{\lambda_\ell}}{{\varrho_\ell}}$ and only uses PNPs which are constant on $(\lambda_\ell,\varrho_\ell)$.
\end{compactenum}

At the end of the procedure above, we are left with the accommodating affix restriction $\lambda_1(\lenvec)\in \altplus{k}$ and $\varrho_{1}(\lenvec)\in a^+$, as well as the depth-$1$ formula $\phi_{1}\in \TLClP_{1}$, which defines $\affrestrict{(\altplus{k+1})}{\lambda_{1}}{\varrho_{1}}$ and only uses PNPs which are constant over $(\lambda_{1},\varrho_{1})$.

Since $(\lambda_1, \varrho_1)$ is accommodating, choose $\lenvec$ so that the middle has $s_a \ge 1$ occurrences of $a$ and $s_b \ge 1$ occurrences of $b$. Construct the strings 
\begin{align*}
w &= \lambda_1(\lenvec) b^{s_b} a^{s_a} \varrho_1(\lenvec) \\
w' &= \lambda_1(\lenvec) a^{s_a} b^{s_b} \varrho_1(\lenvec).
\end{align*}
The prefix $\lambda_1(\lenvec)$ has $k$ blocks ending with $b$, and the suffix $\rho_1(\lenvec)$ is all $a$'s. So $w$ has $(k+1)$ blocks and is therefore in $\altplus{k+1}$, while $w'$ has $(k+3)$ blocks and is therefore not in $\altplus{k+1}$.
But by \cref{lem:TLCP_commutative}, we have $w \models \phi_1 \iff w' \models \phi_1$. This is a contradiction, so we conclude that no formula $\phi$ with depth $k$ can define $\altplus{k+1}$. 

If $k$ is odd, the argument is the same, with the following changes. First, symbols $a$ and $b$ are swapped, but $\lambda_\ell(\lenvec)$ still starts with $a$.
Second, $I_\ell(\lenvec)$ sticks to the right of $I_{\ell+1}(\lenvec)$ instead of the top.

Finally, by \cref{lem:piecewise_testable}, $\altplus{k+1}$ is $(k+1)$-piecewise testable.
Thus, by \cref{lem:piecewise_testable_depth}, $\altplus{k+1}$ is definable in $\TL[\countl]_{k+1}$.
\end{proof}

This depth hierarchy on $\TLCl$ implies a depth hierarchy for \rtfrs{}. 
\begin{theorem}\label{thm:rtfr_depth_hierarchy}
    A depth-$(k+1)$ \rtfr{} can recognize $\altplus{k+1}$, but no depth-$k$ \rtfr{} can.
\end{theorem}
This also implies a depth hierarchy for transformers using several commonly used positional encodings (with a different separating language).
Definitions and details can be found in \cref{app:pes}.
\begin{theorem}\label{thm:rtfr_pes_depth_hierarchy}
    A depth-$(k+1)$ \rtfr{} can recognize $\altplusneutral{k+1}$, but no depth-$k$ \rtfr{} can, if the transformers can use sinusoidal positional embeddings \citep{vaswani2017attention}, RoPE \citep{su2024roformer}, or ALiBi \citep{press2022train}.
\end{theorem}

\section{Experiments}\label{sec:experiments}

\begin{figure}
    \centering \small
\begin{minipage}[t]{0.5\textwidth}
\begin{center}
Accuracy on $[201,250]$ \\[-2pt]
    \setlength{\tabcolsep}{0pt}
    \renewcommand{\arraystretch}{0}
        \begin{tabular}[b]{l@{\;\;}*{10}{c}}
        & \multicolumn{10}{l}{\scriptsize depth $\rightarrow$} \\[1ex]
        & 1 & 2 & 3 & 4 & 5 & 6 & 7 & 8 & 9 & 10\\[1ex]
    $\altplus{3}$ & \cellgradient{100.0} & \cellgradient{100.0} & \cellgradient{100.0} & \cellgradient{100.0} & \cellgradient{100.0} & \cellgradient{100.0} & \cellgradient{100.0} & \cellgradient{100.0} & \cellgradient{100.0} & \cellgradient{100.0} \\
$\altplus{4}$ & \cellgradient{41.0} & \cellgradient{100.0} & \cellgradient{100.0} & \cellgradient{100.0} & \cellgradient{100.0} & \cellgradient{100.0} & \cellgradient{100.0} & \cellgradient{100.0} & \cellgradient{100.0} & \cellgradient{100.0} \\
$\altplus{5}$ & \cellgradient{35.94} & \cellgradient{75.7} & \cellgradient{100.0} & \cellgradient{100.0} & \cellgradient{100.0} & \cellgradient{100.0} & \cellgradient{100.0} & \cellgradient{100.0} & \cellgradient{100.0} & \cellgradient{100.0} \\
$\altplus{6}$ & \cellgradient{35.84} & \cellgradient{37.26} & \cellgradient{73.54} & \cellgradient{100.0} & \cellgradient{100.0} & \cellgradient{100.0} & \cellgradient{100.0} & \cellgradient{99.98} & \cellgradient{100.0} & \cellgradient{100.0} \\
$\altplus{7}$ & \cellgradient{38.72} & \cellgradient{40.48} & \cellgradient{52.56} & \cellgradient{86.78} & \cellgradient{100.0} & \cellgradient{100.0} & \cellgradient{99.76} & \cellgradient{100.0} & \cellgradient{100.0} & \cellgradient{100.0} \\
$\altplus{8}$ & \cellgradient{37.48} & \cellgradient{41.88} & \cellgradient{53.16} & \cellgradient{78.34} & \cellgradient{95.9} & \cellgradient{100.0} & \cellgradient{100.0} & \cellgradient{100.0} & \cellgradient{100.0} & \cellgradient{100.0} \\
$\altplus{9}$ & \cellgradient{31.54} & \cellgradient{40.52} & \cellgradient{45.1} & \cellgradient{61.04} & \cellgradient{74.44} & \cellgradient{92.74} & \cellgradient{100.0} & \cellgradient{100.0} & \cellgradient{99.88} & \cellgradient{99.98} \\
$\altplus{10}$ & \cellgradient{34.16} & \cellgradient{36.22} & \cellgradient{39.24} & \cellgradient{49.82} & \cellgradient{56.98} & \cellgradient{79.5} & \cellgradient{90.5} & \cellgradient{99.92} & \cellgradient{100.0} & \cellgradient{100.0} \\
$\altplus{11}$ & \cellgradient{37.36} & \cellgradient{40.58} & \cellgradient{45.22} & \cellgradient{48.82} & \cellgradient{54.66} & \cellgradient{60.14} & \cellgradient{77.1} & \cellgradient{76.56} & \cellgradient{99.72} & \cellgradient{99.82} \\
$\altplus{12}$ & \cellgradient{27.22} & \cellgradient{23.46} & \cellgradient{34.28} & \cellgradient{34.3} & \cellgradient{35.74} & \cellgradient{47.16} & \cellgradient{58.88} & \cellgradient{73.98} & \cellgradient{72.88} & \cellgradient{86.04} \\

    \end{tabular}%
\hspace{0.5\bsize}\llap{\raisebox{-0.2\cellheight}{%
\begin{tikzpicture}[x=\cellwidth,y=\cellheight,baseline=0pt,line width=\bsize]
\draw (0,10) |- (1,9) |- (2,8) |- (3,7) |- (4,6) |- (5,5) |- (6,4) |- (7,3) |- (8,2) |- (9,1) |- (10,0);
\end{tikzpicture}%
}}

\end{center}
\end{minipage}%
\begin{minipage}[t]{0.5\textwidth}
\begin{center}
Accuracy on $[251,300]$ \\[-2pt]
    \setlength{\tabcolsep}{0pt}
    \renewcommand{\arraystretch}{0}
        \begin{tabular}[b]{l@{\;\;}*{10}{c}}
        & \multicolumn{10}{l}{\scriptsize depth $\rightarrow$} \\[1ex]
        & 1 & 2 & 3 & 4 & 5 & 6 & 7 & 8 & 9 & 10\\[1ex]
    $\altplus{3}$ & \cellgradient{100.0} & \cellgradient{100.0} & \cellgradient{100.0} & \cellgradient{100.0} & \cellgradient{100.0} & \cellgradient{100.0} & \cellgradient{100.0} & \cellgradient{100.0} & \cellgradient{100.0} & \cellgradient{100.0} \\
$\altplus{4}$ & \cellgradient{32.2} & \cellgradient{100.0} & \cellgradient{100.0} & \cellgradient{100.0} & \cellgradient{100.0} & \cellgradient{100.0} & \cellgradient{100.0} & \cellgradient{100.0} & \cellgradient{100.0} & \cellgradient{100.0} \\
$\altplus{5}$ & \cellgradient{24.16} & \cellgradient{65.12} & \cellgradient{100.0} & \cellgradient{100.0} & \cellgradient{100.0} & \cellgradient{100.0} & \cellgradient{100.0} & \cellgradient{100.0} & \cellgradient{100.0} & \cellgradient{100.0} \\
$\altplus{6}$ & \cellgradient{25.5} & \cellgradient{23.64} & \cellgradient{59.16} & \cellgradient{99.98} & \cellgradient{100.0} & \cellgradient{99.98} & \cellgradient{100.0} & \cellgradient{100.0} & \cellgradient{100.0} & \cellgradient{100.0} \\
$\altplus{7}$ & \cellgradient{27.3} & \cellgradient{24.34} & \cellgradient{38.72} & \cellgradient{73.92} & \cellgradient{100.0} & \cellgradient{100.0} & \cellgradient{99.52} & \cellgradient{100.0} & \cellgradient{99.88} & \cellgradient{100.0} \\
$\altplus{8}$ & \cellgradient{22.92} & \cellgradient{25.84} & \cellgradient{42.76} & \cellgradient{81.58} & \cellgradient{92.78} & \cellgradient{99.76} & \cellgradient{100.0} & \cellgradient{100.0} & \cellgradient{100.0} & \cellgradient{100.0} \\
$\altplus{9}$ & \cellgradient{30.58} & \cellgradient{27.68} & \cellgradient{25.46} & \cellgradient{49.86} & \cellgradient{58.98} & \cellgradient{91.36} & \cellgradient{99.72} & \cellgradient{99.92} & \cellgradient{99.7} & \cellgradient{99.2} \\
$\altplus{10}$ & \cellgradient{19.44} & \cellgradient{16.62} & \cellgradient{16.66} & \cellgradient{26.02} & \cellgradient{37.22} & \cellgradient{70.64} & \cellgradient{90.92} & \cellgradient{99.12} & \cellgradient{99.82} & \cellgradient{99.88} \\
$\altplus{11}$ & \cellgradient{19.04} & \cellgradient{20.16} & \cellgradient{20.82} & \cellgradient{25.96} & \cellgradient{32.82} & \cellgradient{42.12} & \cellgradient{69.04} & \cellgradient{63.7} & \cellgradient{99.56} & \cellgradient{97.9} \\
$\altplus{12}$ & \cellgradient{5.92} & \cellgradient{4.96} & \cellgradient{8.26} & \cellgradient{11.96} & \cellgradient{11.84} & \cellgradient{21.14} & \cellgradient{52.9} & \cellgradient{60.48} & \cellgradient{55.5} & \cellgradient{75.26} \\

    \end{tabular}%
\hspace{0.5\bsize}\llap{\raisebox{-0.2\cellheight}{%
\begin{tikzpicture}[x=\cellwidth,y=\cellheight,baseline=0pt,line width=\bsize]
\draw (0,10) |- (1,9) |- (2,8) |- (3,7) |- (4,6) |- (5,5) |- (6,4) |- (7,3) |- (8,2) |- (9,1) |- (10,0);
\end{tikzpicture}%
}}

\end{center}
\end{minipage}%

\bigskip

\begin{minipage}[t]{0.5\textwidth}
\begin{center}
Accuracy on $[301,350]$ \\[-2pt]
   \setlength{\tabcolsep}{0pt}
    \renewcommand{\arraystretch}{0}
        \begin{tabular}[b]{l@{\;\;}*{10}{c}}
        & \multicolumn{10}{l}{\scriptsize depth $\rightarrow$} \\[1ex]
        & 1 & 2 & 3 & 4 & 5 & 6 & 7 & 8 & 9 & 10\\[1ex]
    $\altplus{3}$ & \cellgradient{100.0} & \cellgradient{100.0} & \cellgradient{100.0} & \cellgradient{100.0} & \cellgradient{100.0} & \cellgradient{100.0} & \cellgradient{100.0} & \cellgradient{100.0} & \cellgradient{100.0} & \cellgradient{100.0} \\
$\altplus{4}$ & \cellgradient{26.32} & \cellgradient{100.0} & \cellgradient{100.0} & \cellgradient{100.0} & \cellgradient{100.0} & \cellgradient{100.0} & \cellgradient{100.0} & \cellgradient{100.0} & \cellgradient{100.0} & \cellgradient{100.0} \\
$\altplus{5}$ & \cellgradient{15.2} & \cellgradient{58.74} & \cellgradient{100.0} & \cellgradient{100.0} & \cellgradient{100.0} & \cellgradient{100.0} & \cellgradient{100.0} & \cellgradient{100.0} & \cellgradient{100.0} & \cellgradient{100.0} \\
$\altplus{6}$ & \cellgradient{16.72} & \cellgradient{14.24} & \cellgradient{48.8} & \cellgradient{99.88} & \cellgradient{99.06} & \cellgradient{99.94} & \cellgradient{99.92} & \cellgradient{99.98} & \cellgradient{100.0} & \cellgradient{100.0} \\
$\altplus{7}$ & \cellgradient{12.84} & \cellgradient{11.56} & \cellgradient{25.36} & \cellgradient{59.04} & \cellgradient{100.0} & \cellgradient{100.0} & \cellgradient{99.7} & \cellgradient{100.0} & \cellgradient{99.44} & \cellgradient{100.0} \\
$\altplus{8}$ & \cellgradient{10.8} & \cellgradient{13.56} & \cellgradient{29.0} & \cellgradient{68.06} & \cellgradient{85.34} & \cellgradient{98.92} & \cellgradient{100.0} & \cellgradient{100.0} & \cellgradient{100.0} & \cellgradient{100.0} \\
$\altplus{9}$ & \cellgradient{14.38} & \cellgradient{12.52} & \cellgradient{10.7} & \cellgradient{33.96} & \cellgradient{42.58} & \cellgradient{79.68} & \cellgradient{98.44} & \cellgradient{99.68} & \cellgradient{99.16} & \cellgradient{96.74} \\
$\altplus{10}$ & \cellgradient{6.24} & \cellgradient{4.62} & \cellgradient{5.24} & \cellgradient{12.46} & \cellgradient{23.88} & \cellgradient{57.74} & \cellgradient{84.22} & \cellgradient{98.4} & \cellgradient{98.42} & \cellgradient{99.22} \\
$\altplus{11}$ & \cellgradient{6.22} & \cellgradient{5.5} & \cellgradient{5.96} & \cellgradient{8.18} & \cellgradient{13.84} & \cellgradient{21.14} & \cellgradient{58.9} & \cellgradient{45.46} & \cellgradient{98.82} & \cellgradient{96.1} \\
$\altplus{12}$ & \cellgradient{1.68} & \cellgradient{1.5} & \cellgradient{2.02} & \cellgradient{2.96} & \cellgradient{3.86} & \cellgradient{8.6} & \cellgradient{36.52} & \cellgradient{42.06} & \cellgradient{38.16} & \cellgradient{67.78} \\

    \end{tabular}%
\hspace{0.5\bsize}\llap{\raisebox{-0.2\cellheight}{%
\begin{tikzpicture}[x=\cellwidth,y=\cellheight,baseline=0pt,line width=\bsize]
\draw (0,10) |- (1,9) |- (2,8) |- (3,7) |- (4,6) |- (5,5) |- (6,4) |- (7,3) |- (8,2) |- (9,1) |- (10,0);
\end{tikzpicture}%
}}

\end{center}
    
\end{minipage}%
\begin{minipage}[t]{0.5\textwidth}
\begin{center}
Accuracy on $[351,400]$ \\[-2pt]

\end{center}
\end{minipage}%
    \caption{Experimental results.
    \Cref{cor:prediction_task_depth} predicts that a transformer with depth $k$ can recognize language $\altplus{k+2}$ but not $\altplus{k+3}$ (demarcated by the black line).
    Up to at least $\altplus{12}$, this closely predicts our experimental results (shown as numbers and colors).}
    \label{fig:experiments}
\end{figure}

Our depth hierarchy result suggests that transformers will require greater depth in order to model deeper sequential dependencies. 
We empirically validate this by training future-masked transformers with no positional encodings and varying depths to learn the $\altplus{k}$ language, for varying $k$.\footnote{The code used for our experiments is provided at \url{https://github.com/pentagonalize/CRASP_depth}. 
LLMs were used to assist in writing code and debugging.}
Here, the $\altplus{k}$ language serves as a minimal testbed for depth separation because it represents the simplest form of sequential dependency (ordering of symbols) using only an alphabet of size $2$.

\subsection{Problem}
Our experimental setup differs slightly from the framework presented above. 
For the \emph{language recognition problem} considered above, training a transformer would require data containing both positive and negative examples, with the distribution of negative examples potentially having an important impact on learnability.
Following \citet{bhattamishra2020ability} and \citet{huang2025a}, we reframe $\altplus{k}$ as a \emph{next-token prediction problem}:
For each prefix of a string, output the set of possible next symbols of the string.

Our data consist of source--target pairs, where the source is a string in $\altplus{k}$, preceded by a beginning-of-string symbol $\bos{}$, and the target is a string of the same length. Each target symbol is a code standing for the set of possible next source symbols. 
If $k$ is odd, for example, $\altplus{3} = a^+b^+a^+$, then after $\bos$, there is only one possible set, $\{a\}$ (coded as 0), and after subsequent symbols, there are two possible sets, $\{a,b\}$ (coded as 0) if the string is in $a^+b^*$, and $\{a,\eos\}$ (coded as 1) if the string is in $a^+b^+a^+$ (where $\eos$ stands for the end of the string). An example source--target pair is:
\[\setlength{\arraycolsep}{1pt}%
\begin{array}{r@{{}={}}c*{12}{c}}
  S & \bos{} & a & a & a & b & b & b & b & a & a & a & a & a \\
  T & 0 & 0 & 0 & 0 & 0 & 0 & 0 & 0 & 1 & 1 & 1 & 1 & 1 \\
\end{array}
\]
If $k$ is even, the possible sets would be $\{a\}$ (coded as 0) after $\bos$, and $ \{a,b\}$ (coded as 0) and $\{\eos,b\}$ (coded as 1) subsequently.
It turns out that for $\altplus{k}$, the output for a prefix $\bos \cdot w[1:i]$ should be $1$ if and only if $w[1:i] \in L_k$.

We can define the next-token prediction problem for $\TLCl$ in the same way, but without $\bos$:
\begin{definition}[Next-Token Prediction Problem for $\altplus{k}$] \label{def:prediction_task}
    We say a $\TLCl$ formula $\phi$ can solve the \emph{next-token prediction problem for $\altplus{k}$} if for all $w\in \altplus{k}$ and $1\leq i\leq |w|$, we have $w[1:i]\models \phi \iff w[1:i]\in\altplus{k}$. 
    That is, $w[1:i]\models\phi$ if the prediction is $1$, while $w[1:i]\not\models\phi$ if the prediction is $0$.  Note that, unlike in recognition, we only consider prefixes of strings that are in $L_k$. 
\end{definition}

The depth hierarchy from \cref{thm:TLCl_depth} can be adapted to the next-token prediction problem for~$\altplus{k}$.
\begin{restatable}[Corollary of \cref{thm:TLCl_depth}]{corollary}{PredictionTaskDepth}\label{cor:prediction_task_depth}
    A depth-$(k+1)$ $\TLCl$ formula can solve the next-token prediction problem for $\altplus{k+3}$, but no depth-$k$ $\TLCl$ formula can.
\end{restatable}

\begin{proof}
See \cref{sec:prediction_task_depth_proof}.
\end{proof}

\subsection{Setup}

We generated samples of $\altplus k$ to place into bins $[201,250]$, $[251,300]$, $[301,350]$, $[351,400]$ 
by uniformly sampling a length $n$ from the bin and uniformly sampling $k-1$ positions at which to switch between $a$ and $b$. 
For each $k$ and each bin, $1000$ strings were generated.
The $[201,250]$ bin of $1000$ examples was split into a training set of $800$ examples and a validation set of $200$ examples. The other bins were reserved for evaluation. 

We trained future-masked transformers without positional encodings. 
Because the sets of next tokens are mutually exclusive, we trained the transformer to perform multi-class classification with cross-entropy as the loss function.
Adam was used as the optimizer \citep{kingma+ba:2015}.
The dimension $d$ and learning rate $\eta$ were tuned by searching over \(d \in [256,512]\) and \(\eta \in [10^{-4}, 10^{-5}]\).
Each hyperparameter configuration was trained for $25$ epochs or until $100\%$ accuracy was achieved on the validation set. 
Then we evaluated the trained model on the test sets, considering the transformer to have made a correct prediction if and only if its prediction matched the target at every single position.
The experiments were run on an internal cluster of GPUs. 
Performing the training loop for a given number of layers over all $\altplus{k}$ required an average of $9.37\cdot10^{4}$ TFLOPs and $936.8$ MiB of memory.

\subsection{Results}

\Cref{fig:experiments} shows the final accuracies of models with varying depth on $\altplus k$ with varying $k$. 
\Cref{cor:prediction_task_depth} predicts that a $\TLCl$ formula must have depth at least $k$ in order to solve the next-token prediction problem for $\altplus {k+2}$. 
In most cases, the transformer obtains $100\%$ accuracy when \cref{cor:prediction_task_depth} predicts it, and even generalizes to lengths up to double the training length.
Other factors, like width, data diversity, and training dynamics of deeper transformers, may also play a role in practice.




\section{Limitations}\label{sec:limitations}
Our theoretical results apply to \rtfrs{} with and without positional encodings, whose definition differs subtly from both standard real-valued softmax transformers and fixed-precision transformers considered in previous work. 
Our experimental results did not use positional encodings because we expect that extremely long input lengths are required to see our negative results apply.
Additionally, our experiments only concern formal language tasks -- namely, the languages $\altplus{k}$.

\section{Conclusion}

This paper adds to the growing list of exact equivalences between variants of transformers and logics or complexity classes \citep{yang2024masked,merrill2024the,li2024chain,li2025characterizingexpressivitytransformerlanguage}. 
Here, we have shown that transformers that round to fixed precision except inside attention are exactly equivalent to $\TLCl$ and $\CRASP$. 
Moreover, we have proven a strict depth hierarchy for $\TLCl$, which implies a strict depth hierarchy for (this variant of) transformers.
Unlike previous depth separations for softmax transformers \citep{sanford2024transformers,chen2024theoretical}, our results apply to parameter-uniform transformers and so are particularly relevant to length generalization.
Future work on the experimental side could look for real-world phenomena that involve sequential dependencies like those in $\altplus{k}$ and study how well language models handle them.
\newpage
\section*{Acknowledgements}
This material is based in part upon work supported by the National Science Foundation under Grant
No.~2502292 and a Graduate Research Fellowship under Grant No.~2236418. 
Any opinions, findings, and conclusions or recommendations expressed in this material are those of the authors and do not necessarily reflect the views of the National Science Foundation.
We would like to thank Dana Angluin and Lena Strobl for their generous input on the presentation of the theoretical results and development of the experiments in this paper.
We also thank Michael Hahn for introducing us to the fascinating connection between $\TLC$ and $\MAJtwo$, which paved the way towards the results proven here, and Gavin Dooley, Peter Cholak, and Anand Pillay for insightful conversations about $\TLCl$.
Finally, we thank the anonymous reviewers for their helpful comments.

\bibliographystyle{acl_natbib}
\bibliography{colm2025_conference}

\appendix

\section{Logic Preliminaries}\label{appendix:definitions}

\subsection{Temporal Logics with Counting}
\label{app:tlc}

\begin{definition}
\label{def:TLC_semantics}
The syntax of $\TLCl$ is as follows:
    \begin{align*}
        \phi & \bnfto \sympred\sigma \mid t_1 < t_2 \mid \lnot \phi_1 \mid \phi_1\land \phi_2 && \sigma \in \Sigma && \text{Boolean-valued formulas}\\
        t & \bnfto \countl[\phi_1] \mid t_1 + t_2 
        \mid 1 &&&& \text{integer-valued terms}
    \end{align*} 
The syntax of $\TLC$ additionally has counting terms $t \bnfto \countr[\phi]$.
    The semantics of formulas is defined as follows:
    \begin{subequations}
        \let\origiff\iff
        \renewcommand{\iff}{\hspace{\tabcolsep}\origiff\hspace{\tabcolsep}}
        \begin{alignat}{2}
            &w,i \models \sympred\sigma &\iff &\text{$w_i = \sigma$} \\
            &w,i \models \lnot \phi & \iff &w,i\not\models \phi\\
            &w,i \models \phi_1\land \phi_2 &\iff &\text{$w,i \models \phi_1$ and $w,i \models \phi_2$} \\
            &w,i \models t_1 < t_2 &\iff &t_1^{w,i} < t_2^{w,i}.
        \end{alignat}
    \end{subequations}
    The semantics of terms is defined as follows:
    \begin{subequations}
        \begin{align}
            \countl[\phi]^{w,i} &= |\{j\in[1,i] \mid w,j \models \phi\}| \\
            \countr[\phi]^{w,i} &= |\{j\in[i,|w|] \mid w,j \models \phi\}| \\
            (t_1+t_2)^{w,i} &=  t_1^{w,i}+t_2^{w,i} \label{eq:TLC_sem_plus} \\
            1^{w,i} &= 1.
        \end{align}
    \end{subequations}
    We write $w \models \phi$ if $w, |w| \models \phi$, and we say that $\phi$ defines the language $\lang(\phi) = \{w \mid w \models \phi\}$.
\end{definition}

\begin{definition}
\label{def:TLC_depth}
The \emph{depth} of formulas and terms of\/ $\TLCl$ and $\TLC$ is defined by:
\begin{align*}
\depth(\sympred\sigma) &= 0 \\
\depth(\neg \phi) &= \depth(\phi) \\
\depth(\phi_1 \land \phi_2) &= \max \{ \depth(\phi_1), \depth(\phi_2) \} \\
\depth(t_1 < t_2) &= \max \{ \depth(t_1), \depth(t_2) \} \\
\depth(\countl[\phi]) =
\depth(\countr[\phi]) &= \depth(\phi)+1 \\
\depth(t_1 + t_2) %
&= \max \{ \depth(t_1), \depth(t_2) \} \\
\depth(1) &= 0.
\end{align*}

We write $\TLCl_k$ (or $\TLC_k$) for the set of all $\TLCl$ (or $\TLC$, resp.) formulas with depth at most $k$. 
\end{definition}

We will often assume that any comparison $t_1 < t_2$ can be written in the form $\sum_{\chi \in \mathcal{L}} \lambda_\chi \countl[\chi] \ge C$ or $\sum_{\chi \in \mathcal{L}} \lambda_\chi \countl[\chi] + \sum_{\chi \in \mathcal{R}} \lambda_\chi \countr[\chi] \ge C$ where $\mathcal{L}, \mathcal{R}$ are sets of formulas and $\lambda_\termind, C \in \mathbb{Z}$.

\subsection{Examples of $\TLCl$}
\label{sec:more_examples}

Recall from \cref{ex:dyck} that the Dyck language is defined by the formula \[ \phi_{\text{Dyck}} = (\countl[\sympred{(}] = \countl[\sympred{)}]) \land (\countl[\countl[\sympred{(}] < \countl[\sympred{)}]] = 0).\]

The table below shows how this formula works for the string $(())()$, which belongs to the Dyck language.
\begin{center} \small \normalfont
\begin{tabular}{ll|cccccc}
\toprule
subformula & description & $($ & $($ & $)$ & $)$ & $($ & $)$ \\
\midrule
$\sympred{(}$ & is left paren & $\top$ & $\top$ & $\bot$ & $\bot$ & $\top$ & $\bot$ \\
$\sympred{)}$ & is right paren & $\bot$ & $\bot$ & $\top$ & $\top$ & $\bot$ & $\top$ \\
$\countl[\sympred{(}]$ & num of left parens & 1 & 2 & 2 & 2 & 3 & 3 \\
$\countl[\sympred{)}]$ & num of right parens & 0 & 0 & 1 & 2 & 2 & 3 \\
$\countl[\sympred{(}] = \countl[\sympred{)}]$ & balanced & $\bot$ & $\bot$ & $\bot$ & $\top$ & $\bot$ & $\top$ \\
$\countl[\sympred{(}] < \countl[\sympred{)}]$ & violates matching & $\bot$ & $\bot$ & $\bot$ & $\bot$ & $\bot$ & $\bot$ \\
$\countl[\countl[\sympred{(}] < \countl[\sympred{)}]]$ & num of violations & 0 & 0 & 0 & 0 & 0 & 0 \\
$\countl[\countl[\sympred{(}] < \countl[\sympred{)}]] = 0$ & matched & $\top$ & $\top$ & $\top$ & $\top$ & $\top$ & $\top$ \\
$\countl[\countl[\sympred{(}] < \countl[\sympred{)}]] = 0 \land \countl[\sympred{(}] = \countl[\sympred{)}]$ & matched and balanced & $\bot$ & $\bot$ & $\bot$ & $\top$ & $\bot$ & $\top$ \\
\bottomrule
\end{tabular}
\end{center}
The table below shows how this formula works for the string $())()($, which does not belong to the Dyck language.
\begin{center} \small \normalfont
\begin{tabular}{ll|cccccc}
\toprule
subformula & description & $($ & $)$ & $)$ & $($ & $)$ & $($ \\
\midrule
$\sympred{(}$ & is left paren & $\top$ & $\bot$ & $\bot$ & $\top$ & $\bot$ & $\top$ \\
$\sympred{)}$ & is right paren & $\bot$ & $\top$ & $\top$ & $\bot$ & $\top$ & $\bot$ \\
$\countl[\sympred{(}]$ & num of left parens & 1 & 1 & 1 & 2 & 2 & 3 \\
$\countl[\sympred{)}]$ & num of right parens & 0 & 1 & 2 & 2 & 3 & 3 \\
$\countl[\sympred{(}] = \countl[\sympred{)}]$ & balanced & $\bot$ & $\top$ & $\bot$ & $\top$ & $\bot$ & $\top$ \\
$\countl[\sympred{(}] < \countl[\sympred{)}]$ & violates matching & $\bot$ & $\bot$ & $\top$ & $\bot$ & $\top$ & $\bot$ \\
$\countl[\countl[\sympred{(}] < \countl[\sympred{)}]]$ & num of violations & 0 & 0 & 1 & 1 & 2 & 2 \\
$\countl[\countl[\sympred{(}] < \countl[\sympred{)}]] = 0$ & matched & $\top$ & $\top$ & $\bot$ & $\bot$ & $\bot$ & $\bot$ \\
$\countl[\countl[\sympred{(}] < \countl[\sympred{)}]] = 0 \land \countl[\sympred{(}] = \countl[\sympred{)}]$ & matched and balanced & $\bot$ & $\top$ & $\bot$ & $\bot$ & $\bot$ & $\bot$ \\
\bottomrule
\end{tabular}
\end{center}

\subsection{Extensions to \TLC}

We will often make use of the following operator in $\TLC$, which does not increase its expressive power or affect the depth of formulas, but saves space when writing.
    \begin{align*}
            (\cif{\phi}{t_\text{then}}{t_\text{else}})^{w,i} &= \begin{cases}
                t_\text{then} & w,i\models\phi\\
                t_\text{else} & w,i\not\models\phi\\
            \end{cases}
    \end{align*}
\begin{lemma}[\citealt{yang2024counting}]
    Any formula $\phi$ of\/ $\TLC$ that uses the $?$ operator can be converted into a formula that does not use the $?$ operator, defines the same language as $\phi$, and has the same depth as $\phi$.
\end{lemma}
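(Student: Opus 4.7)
The plan is to proceed by structural induction on the formula, with the inductive measure being the total number of \cifop\ operators appearing in~$\phi$. The guiding observation is that \cifop\ produces an integer-valued term, and in the syntax terms only reach a Boolean context through an atomic comparison $t_1 < t_2$; within terms, they are only combined by~$+$, since $\countl[\cdot]$ and $\countr[\cdot]$ take \emph{formulas} as arguments, not terms. So every \cifop\ can be pushed ``upward'' to the root of an atomic comparison and then eliminated there using a Boolean case split.

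Concretely, I would use two rewrite identities. First, the distributivity of \cifop\ over~$+$,
\[
t + \cif{\chi}{t_a}{t_b} \;\equiv\; \cif{\chi}{t+t_a}{t+t_b},
\]
applied repeatedly to move every \cifop\ past any surrounding $+$'s until it sits at the top of one of the operands of an atomic comparison. Second, at a comparison the identity
\[
(\cif{\chi}{t_a}{t_b}) < t_2 \;\equiv\; (\chi \land t_a < t_2) \lor (\neg\chi \land t_b < t_2),
\]
together with its symmetric version on the right-hand side, replaces one \cifop\ occurrence by a Boolean combination of two \cifop-free comparisons. Both identities are immediate from the semantics and preserve the language. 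Crucially, neither rewrite introduces a new $\countl$ or $\countr$ operator; they only duplicate existing subformulas and subterms and glue them with Boolean connectives. Since depth is the maximum nesting of counting operators, $\depth(\phi')=\depth(\phi)$.

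The only delicate point, which is the main obstacle, is that the comparison-level rewrite duplicates the guard~$\chi$, which may itself contain \cifop\ occurrences; naively this could increase the total count. I would handle this by a lexicographic ordering $(h,n)$, where $h$ is the maximum nesting depth of \cifop\ operators in $\phi$ and $n$ is the total number of \cifop\ occurrences at that maximum nesting. The strategy is to always rewrite an \emph{innermost} \cifop: its guard $\chi$ is already \cifop-free, so duplicating $\chi$ adds nothing, and the rewrite strictly decreases $n$ (and eventually $h$). A \cifop\ that appears inside the formula argument of a $\countl$ or $\countr$ is reached by ordinary structural recursion on that subformula, since once the subformula has been rewritten to a \cifop-free equivalent, the enclosing $\countl[\cdot]$ is automatically \cifop-free. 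Iterating until the measure reaches $(0,0)$ produces an equivalent $\cifop$-free formula of the same depth.
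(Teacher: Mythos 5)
Your core rewrite is the same as the paper's: the paper normalizes the comparison to the form
$\lp\cif{\psi_{\text{if}}}{t_{\text{then}}}{t_{\text{else}}}\rp + \sum_\termind t_\termind \ge C$
and replaces it with
$\lp\psi_{\text{if}} \land t_{\text{then}} + \sum_\termind t_\termind \ge C\rp \lor \lp\lnot\psi_{\text{if}} \land t_{\text{else}} + \sum_\termind t_\termind \ge C\rp$,
``iteratively.'' Your two identities package the same idea, and your observation that no $\countl$/$\countr$ is introduced (hence depth is preserved) is exactly the point.

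However, the termination argument you supply to justify the iteration has a gap. You note correctly that the rewrite
$\lp\cif{\chi}{t_a}{t_b}\rp < t_2 \;\equiv\; \lp\chi \land t_a < t_2\rp \lor \lp\lnot\chi \land t_b < t_2\rp$
duplicates the guard $\chi$, and you control that by rewriting an innermost $\cifop$ so that $\chi$ is $\cifop$-free. But the rewrite \emph{also} duplicates $t_2$, the other operand of the comparison, and $t_2$ can contain $\cifop$ operators that are siblings of the one being eliminated rather than nested inside its guard. For example, with $\chi_1,\chi_2$ $\cifop$-free, the comparison $\lp\cif{\chi_1}{1}{2}\rp + \lp\cif{\chi_2}{3}{4}\rp \ge 5$ has $h=1$, $n=2$; after one rewrite at either $\cifop$ you still have two occurrences of $\cifop$ at nesting level $1$, so your lexicographic measure $(h,n)$ does not decrease. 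The fix is to use a different measure: each application of the rewrite strictly decreases the maximum number of $\cifop$ occurrences appearing within any single comparison subformula, since each new comparison $t_a < t_2$ or $t_b < t_2$ has at least one fewer $\cifop$ than the original $\lp\cif{\chi}{t_a}{t_b}\rp < t_2$, and the occurrences inside $\chi$ are moved out of the comparison into the enclosing Boolean structure. (Equivalently, one can eliminate all $\cifop$ occurrences in a given comparison in a single step by enumerating all $2^m$ truth-value assignments to the $m$ guards.) The paper's proof sidesteps this by simply appealing to iteration, so your version raises the right concern but does not resolve it correctly as stated.
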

\begin{proof}
    Any comparison formula involving the $?$ operator can be written in the form 
    \[ (\cif{\psi_{\text{if}}}{t_{\text{then}}}{t_{\text{else}}}) + \sum_{\termind \in [\numterms]} t_\termind \ge C, \]
    which can be rewritten as
    \[ \left( \psi_{\text{if}} \land t_{\text{then}} + \sum_{\termind \in [\numterms]} t_\termind \ge C \right) \lor \left( \lnot\psi_{\text{if}} \land t_{\text{else}} + \sum_{\termind \in [\numterms]} t_\termind \ge C \right). \]
    This rule can be used iteratively to rewrite all the $?$ operators out of a formula. 
\end{proof}

It can also be convenient to allow an unmasked counting operator $\countc$ and strict counting operators $\countlstrict$ and $\countrstrict$, which do not count the current position.
\begin{align*} 
    \countc[\phi]^{w,i} &= |\{j\in[1,|w|] \mid w,j \models \phi\}| \\
    \countlstrict[\phi]^{w,i} &= |\{j\in[i,|w|-1] \mid w,j \models \phi\}| \\
    \countrstrict[\phi]^{w,i} &= |\{j\in[i+1,|w|] \mid w,j \models \phi\}|
\end{align*}

\begin{lemma} \label{thm:strict}
    Any formula $\phi$ of\/ $\TLC$ that uses $\countc$, $\countlstrict$ or $\countrstrict$ can be converted into a formula that does not use $\countc$, $\countlstrict$, or $\countrstrict$, defines the same language as $\phi$, and has the same depth as $\phi$.
\end{lemma}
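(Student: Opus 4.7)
The plan is to eliminate each occurrence of $\countc$, $\countlstrict$, and $\countrstrict$ by rewriting it as an arithmetic combination of $\countl$, $\countr$, and the $?$ operator; then invoke the preceding lemma to remove the $?$ operators without affecting depth or language.

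The key step is to establish three pointwise identities, each valid at every position $i$ of every string $w$:
\[ \countc[\psi] = \countl[\psi] + \countr[\psi] - (\cif{\psi}{1}{0}), \]
\[ \countlstrict[\psi] = \countl[\psi] - (\cif{\psi}{1}{0}), \]
\[ \countrstrict[\psi] = \countr[\psi] - (\cif{\psi}{1}{0}). \]
Each identity corrects for whether position $i$ itself should be included in the count. For the first, $\countl[\psi] + \countr[\psi]$ double-counts $i$ exactly when $\psi$ holds at $i$ and otherwise agrees with the total, so subtracting $(\cif{\psi}{1}{0})$ gives the total count. For the strict operators, $\countl[\psi]$ (resp.\ $\countr[\psi]$) includes $i$ precisely when $\psi$ holds there, so the same correction removes it. Because these are equalities between integer-valued terms, they may be substituted freely inside any comparison $t_1 < t_2$ while preserving semantics.

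I would then proceed by a bottom-up structural induction on $\phi$: first rewrite the innermost counting terms (whose bodies $\psi$ are already in the target fragment by the inductive hypothesis), then propagate the rewriting outward through comparisons, Boolean connectives, and larger counting terms. The depth accounting confirms that nothing grows: each of $\countl[\psi]$ and $\countr[\psi]$ has depth $\depth(\psi) + 1$, whereas $(\cif{\psi}{1}{0})$ has depth $\depth(\psi)$, so the right-hand side of each identity has exactly the same depth as the operator it replaces, and sums and differences of terms do not raise depth either. A final application of the previous lemma erases any remaining $?$ operators without increasing depth.

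The main obstacle is essentially bookkeeping: one must ensure that the bodies $\psi$ appearing inside the replacements are themselves already free of the forbidden operators before being substituted, which is handled cleanly by the inside-out induction. A secondary subtlety is pinning down the intended reading of the strict operators so that $\countlstrict$ counts positions strictly to the left of $i$ and $\countrstrict$ counts positions strictly to the right, which is exactly what the $-(\cif{\psi}{1}{0})$ correction matches against $\countl$ and $\countr$.
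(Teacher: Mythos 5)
Your proposal is correct and takes essentially the same approach as the paper: both use the identical three rewrite identities $\countc[\psi]\equiv\countl[\psi]+\countr[\psi]-(\cif{\psi}{1}{0})$, $\countlstrict[\psi]\equiv\countl[\psi]-(\cif{\psi}{1}{0})$, $\countrstrict[\psi]\equiv\countr[\psi]-(\cif{\psi}{1}{0})$, and both then rely on the preceding lemma to eliminate the $?$ operators without increasing depth. Your additional commentary on inside-out induction, depth bookkeeping, and the intended strict semantics is fleshing out what the paper leaves implicit, not a different argument.
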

\begin{proof} These counting terms can be rewritten equivalently using $\countl$ and $\countr$:
    \begin{align*}
    \countc[\phi] &\equiv \countl[\phi] + \countr[\phi] - (\cif{\phi}{1}{0}) \\
    \countlstrict[\phi] &\equiv \countl[\phi] - (\cif{\phi}{1}{0}) \\
    \countrstrict[\phi] &\equiv \countr[\phi] - (\cif{\phi}{1}{0}). \tag*{\qedhere}
    \end{align*}
\end{proof}

\subsection{Proof of \cref{lem:piecewise_testable_depth} (definability of piecewise testable languages)}
\label{sec:piecewise_testable_depth_proof} 

\PiecewiseTestableDepth*

    Any $k$-piecewise testable language $L$ can, by definition, be written as a Boolean combination of $\mathcal{J}$-expressions of the form
    \[\Sigma^*\sigma_1\Sigma^*\sigma_2\Sigma^*\cdots\Sigma^*\sigma_k\Sigma^*.\]
    This is defined by the $\TLCl$ formula of depth $k$:
    \[ \phi = \countl[\cdots \countl[\countl[\sympred{\sigma_1} \ge 1] \land \sympred{\sigma_2}] \ge 1 \cdots \land \sympred{\sigma_k}] \ge 1. \]

    Any $(2k+1)$-piecewise testable language $L$ can, by definition, be written as a Boolean combination of $\mathcal{J}$-expressions of the form
    \[\Sigma^*\sigma_1\Sigma^*\sigma_2\Sigma^*\cdots\Sigma^*\sigma_{2k+1}\Sigma^*.\]
    We need to show that this is expressible with \(k+1\) nestings of \(\countl\) and \(\countr\).  
    We do this by first finding the
    middle symbol \(\sigma_{k+1}\) and checking to the left and right that the correct symbols appear.
    First, define depth-$k$ subformulas that check the left and right halves of the $\mathcal{J}$-expression:
    \begin{align*}
        \phi_L &= \countl[\sympred{\sigma_k}\land\countl[\sympred{\sigma_{k-1}}\land \countl[\cdots \countl[\sympred{\sigma_1}]\geq 1\cdots]\geq 1]\geq 1]\geq 1\\
        \phi_R &= \countr[\sympred{\sigma_{k+2}}\land\countr[\sympred{\sigma_{k+3}}\land \countr[\cdots \countr[\sympred{\sigma_{2k+1}}]\geq 1\cdots]\geq 1]\geq 1]\geq 1.
    \end{align*}
    Then, $L$ is defined by the $\TLC$ formula of depth $(k+1)$:
    \begin{align*}
        \phi&= \countl[\phi_L \land \sympred{\sigma_{k+1}} \land \phi_R]\geq 1. \tag*{\qedhere}
    \end{align*}

\section{Transformer Equivalence}\label{appendix:transformer_equivalence}

In this section and below, the expression $\mathbb{I}[\cdot]$ has the value~$1$ if the statement inside the brackets is true, and $0$ otherwise. 

\subsection{Transformers}
\label{app:transformers}

\begin{definition} \label{def:fixed_precision}
A \emph{fixed-precision number} with $\numbits$ total bits and $\fracbits$ fractional bits is a rational number of the form $m \cdot 2^{-\fracbits}$ where $m$ is an integer and $-2^{\numbits-1} \le m < 2^{\numbits-1}$. We write $\mathbb{F}_{\numbits,\fracbits}$, or simply $\mathbb{F}$, for the set of all fixed-precision numbers with $p$ total bits and $s$ fractional bits.

We represent negative numbers using two's complement. If $\bitind \in [\numbits]$, the $\bitind$-th bit of a fixed-precision number $x$, written $\bit{x}{\bitind}$, is defined as
\begin{align*}
\bit{x}{\bitind} &= \begin{cases}
1 & \text{if $\lfloor x / 2^{\bitind-\fracbits-1} \rfloor$ is odd} \\
0 & \text{otherwise.}
\end{cases}
\end{align*}

If $x$ is a real number, we write $\textnormal{round}_{\mathbb{F}}(x)$ or simply $\textnormal{round}(x)$ for the greatest element of\/ $\mathbb{F}$ less than or equal to $x$. 
\end{definition}

The following definition abstracts away from a number of details, but suffices for our purposes (which is to prove \cref{thm:rtfr_to_TLCl}).
\begin{definition}\label{def:transformer}
A \emph{future-masked \rtfr} of depth $k$ is a function $\tfr \colon \Sigma^* \to \mathbb{F}$, defined in terms of functions
\begin{align*}
E \colon \Sigma^* &\to \mathbb{F}^d \\
W_{\textnormal{Q}}^{(\ell)}, W_{\textnormal{K}}^{(\ell)}, W_{\textnormal{V}}^{(\ell)}, f^{(\ell)} \colon \mathbb{F}^d &\to \mathbb{F}^d \qquad \ell = 1, \ldots, k \\
W_{\textnormal{out}} \colon \mathbb{F}^d &\to \mathbb{F}.
\end{align*}
On input $w$, $\tfr(w)$ is computed as follows:
\begin{align}
\mathbf{h}^{(0)}_{i}(w) &= E(w_i) \label{eq:emb} \\
\intertext{For $\ell = 1, \ldots, k$:}
\mathbf{q}^{(\ell)}_{i}(w) &= W_{\textnormal{Q}}^{(\ell)}\left(\mathbf{h}^{(\ell-1)}_{i}(w)\right) \\
\mathbf{k}^{(\ell)}_{i}(w) &= W_{\textnormal{K}}^{(\ell)}\left(\mathbf{h}^{(\ell-1)}_{i}(w)\right) \\
\mathbf{v}^{(\ell)}_{i}(w) &= W_{\textnormal{V}}^{(\ell)}\left(\mathbf{h}^{(\ell-1)}_{i}(w)\right) \\
\mat{s}^{(\ell)}_{ij}(w) &= \mathbf{q}^{(\ell)}_{i}(w) \cdot \mathbf{k}^{(\ell)}_{j}(w) \label{eq:att_logit} \\
\mathbf{c}^{(\ell)}_{i}(w) &= \textnormal{round}\left(\frac{\sum_{j=1}^{i} \textnormal{round}\mleft(\exp \mleft(\mat{s}^{(\ell)}_{ij}(w)\mright)\,\mathbf{v}^{(\ell)}_{j}(w)\mright)}{\sum_{j=1}^{i} \textnormal{round}\mleft(\exp \mleft(\mat{s}^{(\ell)}_{ij}(w)\mright)\mright)}\right) \label{eq:att} \\
\mathbf{h}^{(\ell)}_{i}(w) &= f^{(\ell)}\left(\mathbf{c}^{(\ell)}_{i}(w) + \mathbf{h}^{(\ell-1)}_i(w)\right) \\
\intertext{where we say \cref{eq:att} evaluates to the average of all $\mathbf{v}_j^{(\ell)}$ if the denominator is $0$, and finally}
\tfr(w) &= W_{\textnormal{out}}\left(\mathbf{h}^{(k)}_{|w|}(w)\right).
\end{align}

We say that $\tfr$ \emph{accepts} $w$ if\/ $\tfr(w)>0$. 

\end{definition}

Note crucially that \cref{eq:att} is written so that even if $i \gg 2^\fracbits$, it is still possible to obtain nonzero values.

\subsection{Proof of \cref{thm:transformer_equivalence}}

\TransformerEquivalence*

We first define what it means for a \rtfr{} and a formula to simulate each other.

\begin{definition}
    We say that a $\TLCl$ formula $\phi$ \emph{simulates} a \rtfr{} $\tfr$ %
    if, for all $w\in \Sigma^*$,
    \begin{equation*}
    w,i\models \phi \iff \tfr(\bos\cdot w)_i>0.
    \end{equation*}
    In other words, $w\models\phi$ if and only if $\tfr$ accepts $\bos \cdot w$.
    
    We say that a \rtfr{} $\tfr$ with 
    depth $k$ and dimension $d$
    \emph{simulates} a formula $\phi$ of \/$\TLCl$ $\tfr$ if, for all $w\in \Sigma^*$, 
    \begin{equation*}
        \tfr(\bos \cdot w)_i>0 \iff w,i\models\phi.
    \end{equation*}
    Again, $\tfr$ accepts $\bos \cdot w$ if and only if $w\models\phi$.
\end{definition}

We prove the two directions of \cref{thm:transformer_equivalence} separately: from $\TLCl$ to \rtfrs{} in \cref{thm:TLCl_to_rtfr}, and from \rtfrs{} to $\TLCl$ in \cref{thm:rtfr_to_TLCl}.

\begin{proposition} \label{thm:TLCl_to_rtfr}  Let $\phi$ be a $\TLCl$ formula of depth $k$. There exists a \rtfr{} $\tfr_\phi$ of depth $k$ which simulates $\phi$.
\end{proposition}

\begin{proof}
    This was essentially shown by \citet{yang2024counting}, but they simulated $\TLCl$ using infinite-precision transformers with layer normalization. 
    Here, we modify the proof to use rounding instead of layer normalization to simulate comparison operations.\footnote{A further difference is that we store Boolean values as $0$ for false and $1$ for true, while \citet{yang2024counting} used $\begin{bsmallmatrix} +1 \\ -1 \end{bsmallmatrix}$ for false and $\begin{bsmallmatrix} -1 \\ +1 \end{bsmallmatrix}$ for true.
    They used this more complicated encoding in order to deal with layer normalization, which is unnecessary here.
    Our proof could be modified straightforwardly to accommodate layernorm by using this representation.}

    The case that differs is that of a subformula $\psi = \sum_{\chi \in \mathcal{L}} \lambda_\chi \countl[\chi] \ge C$.
    Assume that previous layers have computed $\mathbb{I}[w, j \models \chi]$ for $\chi \in \mathcal{L}$ at all positions $j \in [n]$. We want to
    construct a new layer that computes $\mathbb{I}[w,i \models \psi]$ at all positions $i \in [n]$.
    Use uniform attention and construct the value projection $W_{\textnormal{V}}$ so that 
    \begin{align*}
    \mathbf{v}_j = \sum_{\chi \in \mathcal{L}} \lambda_\chi \, \mathbb{I}[w, j \models\chi]-C\,\mathbb{I}[w_j=\bos{}].
    \end{align*} 
    After averaging, we get
    \begin{align*}
    \mathbf{c}_i = \frac1{i+1} \left( \sum_{\chi \in \mathcal{L}} \left(\lambda_\chi \sum_{j=1}^i \mathbb{I}[w, j \models \chi]\right) - C\right)
    \end{align*}
    which rounds to $-2^{-\fracbits}$ or below if $\phi$ is false, and rounds to $0$ or above if $\phi$ is true. We can then use the FFNN $f$ to map these two cases to $0$ or $1$, respectively. 
    
    If $\phi$ is the entire formula, the output function $W_{\textnormal{out}}$ just takes the computed value $\mathbb{I}[w,j \models \phi]$ and maps $0$ and $1$ to $-1$ and $+1$, respectively.
\end{proof}

The following lemma will be used repeatedly.
\begin{lemma}[\citealp{chiang2023tighter}] \label{lem:finite_function}
If $F \colon \Sigma^* \to \mathbb{F}^*$ is length-preserving, $g \colon \mathbb{F} \to \mathbb{F}$, and there are formulas $\phi_{\bit{F}{\bitind}}$ such that
\[ w, i \models \phi_{\bit{F}{\bitind}} \iff \bit{F(w)_{i}}{\bitind} = 1 \]
then there is a formula $\phi_{\bit{g(F)}{\bitind}}$
such that
\[ w, i \models \phi_{\bit{g(F)}{\bitind}} \iff \bit{g(F(w)_i)}{\bitind} = 1. \]
Similarly if $g$ is a function of more than one fixed-precision number.
\end{lemma}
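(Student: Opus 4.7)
The plan is to exploit the fact that $\mathbb{F} = \mathbb{F}_{\numbits,\fracbits}$ is a \emph{finite} set (it has only $2^\numbits$ elements), so any function $g\colon \mathbb{F} \to \mathbb{F}$ is fully determined by a lookup table. In particular, for each output bit position $\bitind \in [\numbits]$, the map $x \mapsto \bit{g(x)}{\bitind}$ is a Boolean function of the $\numbits$ input bits $\bit{x}{1}, \ldots, \bit{x}{\numbits}$.

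Concretely, I would first write this Boolean function in disjunctive normal form:
\[ \bit{g(x)}{\bitind} = \bigvee_{x \in \mathbb{F} : \bit{g(x)}{\bitind}=1} \bigwedge_{\bitind' \in [\numbits]} L_{x,\bitind'} \]
where $L_{x,\bitind'}$ is the literal ``$\bit{\cdot}{\bitind'} = \bit{x}{\bitind'}$''. Then I would translate this DNF into a $\TLCl$ formula by replacing each literal $L_{x,\bitind'}$ by $\phi_{\bit{F}{\bitind'}}$ (if $\bit{x}{\bitind'}=1$) or $\lnot \phi_{\bit{F}{\bitind'}}$ (if $\bit{x}{\bitind'}=0$), and replacing $\land$ and $\lor$ by their $\TLCl$ counterparts (with $\lor$ defined via $\land$ and $\lnot$):
\[ \phi_{\bit{g(F)}{\bitind}} = \bigvee_{x \in \mathbb{F} : \bit{g(x)}{\bitind}=1} \bigwedge_{\bitind' \in [\numbits]} \begin{cases} \phi_{\bit{F}{\bitind'}} & \text{if $\bit{x}{\bitind'}=1$} \\ \lnot\phi_{\bit{F}{\bitind'}} & \text{if $\bit{x}{\bitind'}=0$.} \end{cases} \]
Correctness follows because, by hypothesis, $\phi_{\bit{F}{\bitind'}}$ holds at $(w,i)$ iff $\bit{F(w)_i}{\bitind'}=1$, so the inner conjunction is true iff $F(w)_i = x$, and the outer disjunction is true iff $\bit{g(F(w)_i)}{\bitind}=1$.

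For the extension to $g$ of more than one fixed-precision argument, the same argument goes through verbatim: the domain is still a finite product of copies of $\mathbb{F}$, and the DNF ranges over tuples $(x_1, \ldots, x_r)$ with the literals now being bit tests on each coordinate, each of which is available by the hypothesis applied to each input function. I do not foresee any significant obstacle here; the only thing to be careful about is that the construction only uses propositional connectives on top of the given formulas, so the depth of $\phi_{\bit{g(F)}{\bitind}}$ is at most the maximum depth of the $\phi_{\bit{F}{\bitind'}}$, which is what makes this lemma useful for depth-preserving constructions downstream.
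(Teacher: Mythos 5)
Your proof is correct and is essentially the standard ``finite lookup table'' argument: since $\mathbb{F}$ is a finite set, each output bit of $g$ is a Boolean function of the $\numbits$ input bits, which can be written in DNF and then transcribed literal-by-literal into the logic using the assumed formulas $\phi_{\bit{F}{\bitind'}}$ and only Boolean connectives, hence without increasing depth. The paper itself defers the proof to \citet{chiang2023tighter}, and your argument matches the construction used there, including the observation (important for the depth-preserving claims downstream) that the construction is purely propositional on top of the given formulas, and the verbatim extension to multi-argument $g$ over a finite product of copies of $\mathbb{F}$.
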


\begin{proposition}
\label{thm:rtfr_to_TLCl}  Let $\tfr$ be a \rtfr{} of depth $k$. There exists a $\TLCl$ formula $\phi_{\tfr}$ of depth $k$ which simulates $\tfr$.
\end{proposition}

\begin{proof}
We will show that for every activation $\mat{h}^{(\ell)}_{i,c}$ and $\bitind\in[\numbits]$, there is some formula $\phi_{\bit{\mat{h}^{(\ell)}_{c}}{\bitind}}$ such that
    \begin{equation}
    w,i\models \phi_{\bit{\mat{h}^{(\ell)}_{c}}{\bitind}} \iff \bit{\mat{h}^{(\ell)}_{i,c}(w)}{\bitind}=1.
    \label{eq:phi_h}
    \end{equation}

    The construction proceeds by induction on $\ell$. For $\ell=0$, define the word embedding as
    \[\phi_{\bit{\mat{h}^{(0)}_{c}}{\bitind}}=\displaystyle\bigvee_{\substack{\sigma\in\Sigma \\ \bit{E(\sigma)_c}{\bitind}=1}} \sympred\sigma\]
    so that \cref{eq:phi_h} holds for $\ell=0$.

    Now suppose that \cref{eq:phi_h} holds for layer $\ell$, and consider layer $(\ell+1)$.
    Use \cref{lem:finite_function} on $W^{\textnormal{Q}}$, $W_{\textnormal{K}}$, and $W_{\textnormal{V}}$ to obtain formulas $\phi_{\bit{\mat{q}^{(\ell)}_{c}}{\bitind}}$, $\phi_{\bit{\mat{k}^{(\ell)}_{c}}{\bitind}}$, and $\phi_{\bit{\mat{v}^{(\ell)}_{c}}{\bitind}}$ such that
    \begin{align}
        w,i\models \phi_{\bit{\mat{q}^{(\ell)}_{c}}{\bitind}} &\iff \bit{\mat{q}^{(\ell)}_{i,c}(w)}{\bitind}=1 \label{eq:q} \\
        w,i\models \phi_{\bit{\mat{k}^{(\ell)}_{c}}{\bitind}} &\iff \bit{\mat{k}^{(\ell)}_{i,c}(w)}{\bitind}=1\\
        w,i\models \phi_{\bit{\mat{v}^{(\ell)}_{c}}{\bitind}} &\iff \bit{\mat{v}^{(\ell)}_{i,c}(w)}{\bitind}=1.
    \end{align}
    \Cref{eq:q} in particular  allows us to write formulas $\phi_{\mat{q}^{(\ell)}=\vec{q}}$ for each $\vec{q} \in \mathbb{F}^d$ such that
    \begin{align*}
        w,i\models \phi_{\mat{q}^{(\ell)}=\vec{q}} &\iff \mat{q}_i^{(\ell)}=\vec{q}.
    \end{align*}
    Next, we want to compute the summands in the numerator and denominator of \cref{eq:att}. These depend on two positions ($i$ and $j$), whereas a formula of $\TLCl$ only depends on one position. But since $\mat{q}_i$ can only take on finitely many values, we can enumerate all of its possible values.
    That is, use \cref{lem:finite_function} again to obtain formulas
    \begin{align*}
        w,j\models \alpha^{(\ell)}_{\vec{q},c,\bitind} &\iff \bigbit{\textnormal{round}\mleft(\exp \mleft(\vec{q} \cdot \mathbf{k}^{(\ell)}_{j}(w)\mright) \mathbf{v}^{(\ell)}_{j,c}(w)\mright)}{\bitind}=1 \\
        w,j\models \beta^{(\ell)}_{\vec{q},\bitind} &\iff \bigbit{\textnormal{round}\mleft(\exp \mleft(\vec{q} \cdot \mathbf{k}^{(\ell)}_{j}(w)\mright) \mright)}{\bitind}=1.
    \end{align*}
    and then write counting terms $A^{(\ell)}_c$ and $B^{(\ell)}$ that represent the numerator and denominator of $\mat{c}^{(\ell)}_{c}$:  
    \begin{align*}
    A^{(\ell)}_c &=\sum_{\vec{q}} \left(\cif{\phi_{\mat{q}^{(\ell)}=\vec{q}}}{\left(\sum_{\bitind\in[\numbits]}2^{\bitind+\fracbits-1}\cdot\countl\left[\alpha^{(\ell)}_{\vec{q},c,\bitind}\right]\right)}{0}\right) \\
    B^{(\ell)} &=\sum_{\vec{q}} \left( \cif{\phi_{\mat{q}^{(\ell)}=\vec{q}}}{\left(\sum_{\bitind\in[\numbits]}2^{\bitind+\fracbits-1}\cdot\countl\left[\beta^{(\ell)}_{\vec{q},\bitind}\right]\right)}{0} \right)
    \end{align*}
    so that
    \begin{align*}
        (A^{(\ell)}_c)^{w,i}&=2^{\fracbits}\sum_{j\leq i} \textnormal{round}\left(\exp\left(\mat{q}^{(\ell)}_i(w) \cdot \mat{k}^{(\ell)}_j(w)\right) \mat{v}^{(\ell)}_{j,c}(w)\right)\\
        (B^{(\ell)})^{w,i}&=2^{\fracbits}\sum_{j\leq i} \textnormal{round}\left(\exp\left(\mat{q}^{(\ell)}_i(w) \cdot \mat{k}^{(\ell)}_j(w)\right)\right).
    \end{align*}
    Next, to define $\mathbf{c}^{(\ell)}$, we need to divide the numerator $A^{(\ell)}_c$ by the denominator $B^{(\ell)}$. Without loss of generality, assume $B^{(\ell)} > 0$. 
    The other case is similar.
    The sign bit is then defined by
    \begin{align*}
    \phi_{\bit{\mat{c}^{(\ell)}_{c}}{\numbits}} &= A^{(\ell)}_c < 0.
    \end{align*}
    The remaining bits can be defined, from most to least significant, using the grade-school algorithm for long division:
    \begin{align*}
    t_{\numbits-1} &= 2^\fracbits A^{(\ell)}_c + (\cif{\phi_{\bit{\mat{c}^{(\ell)}_{c}}{\numbits}}}{2^{\numbits-1} B^{(\ell)}}{0}) &
    \phi_{\bit{\mat{c}^{(\ell)}_{c}}{\numbits-1}} &= t_{\numbits-1} \ge 2^{\numbits-2} B^{(\ell)} \\
    t_{\numbits-2} &= t_{\numbits-1} - (\cif{\phi_{\bit{\mat{c}^{(\ell)}_{c}}{\numbits-1}}}{2^{\numbits-2} B^{(\ell)}}{0}) &
    \phi_{\bit{\mat{c}^{(\ell)}_{c}}{\numbits-2}} &= t_{\numbits-2} \ge 2^{\numbits-3} B^{(\ell)} \\
    t_{\numbits-3} &= t_{\numbits-2} - (\cif{\phi_{\bit{\mat{c}^{(\ell)}_{c}}{\numbits-2}}}{2^{\numbits-3} B^{(\ell)}}{0}) & \phi_{\bit{\mat{c}^{(\ell)}_{c}}{\numbits-3}} &= t_{\numbits-3} \ge 2^{\numbits-4} B^{(\ell)} \\
    &\vdotswithin{=} \\
    t_1 &= t_2 - (\cif{\phi_{\bit{\mat{c}^{(\ell)}_{c}}{2}}}{2^{1} B^{(\ell)}}{0}) & \phi_{\bit{\mat{c}^{(\ell)}_{c}}{1}} &= t_1 \ge 2^{0} B^{(\ell)} 
    \end{align*}

    Finally, we use \cref{lem:finite_function} on $f$ to obtain formulas $\phi_{\bit{\mathbf{h}^{(\ell)}_c}{\bitind}}$ satisfying \cref{eq:phi_h}.
\end{proof}

\section{Depth Hierarchy for \TLCl}

\subsection{Proof of \cref{lem:TLCP_commutative} (Commutativity of depth 1)}
\label{sec:TLCP_commutative_proof}

\TLCPCommutative*

Let $n = |w|$ and $w'\in\affrestrict{\Sigma^*}{\lambda}{\varrho}$ with $\prkh(w) = \prkh(w')$.
As such, there is a permutation $\aperm \colon [n] \to [n]$ such that $w'_i = w_{\aperm(i)}$ for all $i$, and if $i$ is a position in the prefix ($\lambda(\lenvec)$) or suffix ($\varrho(\lenvec)$), then $\aperm(i) = i$. 
We want to show for depth-$0$ formulas $\chi$ that $w, \aperm(i) \models \chi \iff w', i \models \chi$.

If $\chi = \sympred\sigma$: Since $w'_i = w_{\aperm(i)}$ we have $w, \aperm(i) \models \chi \iff w', i \models \chi$.

If $\chi = \Pi$ for some Parikh numerical predicate $\Pi$: If $i$ is a position in the prefix or suffix, then $\aperm(i) = i$, while if $i$ is a position in the middle, $\Pi$ is constant. In either case, $w, \aperm(i) \models \chi \iff w', i \models \chi$.

If $\chi = \lnot \chi_1$ or $\chi = \chi_1 \land \chi_2$ where $\chi_1$, $\chi_2$ have depth 0, then $w, \aperm(i) \models \chi \iff w', i \models \chi$ follows from the semantics of $\lnot$ and $\land$.

Any minimal depth-$1$ formula $\phi$ can be written, for finite sets of depth-$0$ formulas $\mathcal{L}$ and $\mathcal{R}$, as \[\phi = \sum_{\chi \in \mathcal{L}} \lambda_\chi \countl[\chi] + \sum_{\chi \in \mathcal{R}} \lambda_\chi \countr[\chi] \ge C.\]
We showed above that for each $\chi$, we have $w, \aperm(i) \models \chi \iff w', i \models \chi$. The $\countl$ terms count all positions, so $\countl[\chi]^{w, n} = \countl[\chi]^{w', n}$. The $\countr$ terms only count the last position, and $\aperm(n) = n$, so $\countr[\chi]^{w, n} = \countr[\chi]^{w', n}$ for all $\chi$.
Thus, $w \models \phi \iff w' \models \phi$.

Finally, if $\phi = \lnot \phi_1$ or $\phi = \phi_1 \land \phi_2$ where $\phi_1$, $\phi_2$ have depth at most 1, then $w, \aperm(i) \models \phi \iff w', i \models \phi$ again follows from the semantics of $\lnot$ and $\land$.

\subsection{Proof of \cref{lem:cropping_oneway} (Cropping Lemma for \TLCl)}
\label{sec:cropping_oneway_proof}

\CroppingOneWay*

Let $\psi_1,\psi_2,\ldots,\psi_\numplanes$ be the minimal depth-1 subformulas of $\phi$. That is, for $\planeind \in [\numplanes]$, 
    
\[\psi_\planeind = \left(\sum_{\chi \in \mathcal{L}_\planeind}\lambda_\chi\countl[\chi]\right)\ge C_\planeind.\]

Because any PNPs in each $\psi_\planeind$ are constant on $I$, each $\psi_\planeind$ defines a half-plane in $\countl[\sympred{a}]$ and $\countl[\sympred{b}]$, where $w,i\models\psi_\planeind$ iff $w[1:i]$ lands in the corresponding half-plane. Then $\phi$ is a Boolean combination of these half-planes. 
Thus if $\prkh(w[1:i])$ and $\prkh(w'[1:i'])$ both land in the same half-planes, they will both satisfy $\phi$ or both not satisfy $\phi$.

We will show that for a desired size $\vec{s}$, there is $I(\lenvec)$ sufficiently large such that, there is a subinterval $I'(\lenvec)$ with size at least $\vec{s}$ sticking only to the top of $I(\lenvec)$ (and no other side).

Let $m$ be the minimum absolute slope of any non-horizontal boundary line, and let $h$ be the maximum $b$-intercept of any horizontal boundary line.

For any $\vec{s} = (s_a, s_b)$, let $\vec{s}' = (s'_a, s'_b) = (s_a+2,s_b+1)$, and choose $\lenvec$ such that $I(\lenvec)$ has size at least $[3^\numplanes \max(s'_b/m, s'_a), h+s'_b]$. Our goal is to find a subinterval $I'(\lenvec)$ that has size $\vec{s}'$, sticks to the top of $I(\lenvec)$, and does not cross any boundary lines. 

Let $I_\numplanes$ be an arbitrary subinterval of size $[3^\numplanes \max(s'_b/m, s'_a), s'_b]$ that sticks to the top of $I(\lenvec)$.
We will prove by induction on $\numplanes$: Given an interval $I_\numplanes$ with size $[3^\numplanes \max(s'_b/m, s'_a), s'_b)]$ that sticks to the top of $I(\lenvec)$ and a set of $\numplanes$ boundary lines, there is a subinterval $I'(\lenvec)$ of size $\vec{s}'$ that sticks to the top of $I(\lenvec)$ and does not cross any boundary lines.

The base case $\numplanes=0$ is trivial: There are no boundary lines to cross, and $s'_a \le \max(s'_b/m, s'_a)$, so choose any subinterval of $I_0$ with size $\vec{s}'$, 
and shrink it 1 unit from the left, bottom, and right to obtain a subinterval $I'(\lenvec)$ of size $\vec{s}$ that does not to stick to the left, right, or bottom of $I_0$. 

If $\numplanes>0$, take an arbitrary boundary line and call it $\ell$ and let $I_\numplanes$ be an interval with size $[3^\numplanes \max(s'_b/m, s'_a), s'_b)]$.

If $\ell$ is horizontal, it must have $b$-intercept at most $h$, so there must be at least $s'_b$ space above it inside $I(\lenvec)$. So $I_\numplanes$ does not cross $\ell$, and neither does any subinterval of $I_\numplanes$. Arbitrarily choose $I_{\numplanes-1}$ to be the middle third of $I_\numplanes$, and use the induction hypothesis on $I_{\numplanes-1}$ and the remaining boundary lines.
\begin{center}
\begin{tikzpicture}
\draw (0,0) rectangle (10,5); \node[above right] at (0,5) {$I(\lenvec)$};
\draw (1,2) rectangle (7,5); \node[below right] at (1,5) {$I_\numplanes$};
\draw (3,2) rectangle (5,5); \node[below right] at (3,5) {$I_{\numplanes-1}$};
\draw[<->, ultra thick] (-1,1) -- (11,1) node[right] {$\ell$};
\end{tikzpicture}
\end{center}

If $\ell$ is not horizontal, it must have absolute slope at least $m$. The part of $I_\numplanes$ that is crossed by $\ell$ must have width at most $s'_b/m$, whereas $I_\numplanes$ has width $3^\numplanes \max (s'_b/m, s'_a) \ge 3s'_b/m$, so either the left third or right third of $I_\numplanes$ does not cross $\ell$. Choose $I_{\numplanes-1}$ to be that third, and use the induction hypothesis on $I_{\numplanes-1}$ and the remaining boundary lines.
\begin{center}
\begin{tikzpicture}
\draw (0,0) rectangle (10,5); \node[above right] at (0,5) {$I(\lenvec)$};
\draw (1,2) rectangle (7,5); \node[below right] at (1,5) {$I_\numplanes$};
\draw (5,2) rectangle (7,5); \node[below right] at (5,5) {$I_{\numplanes-1}$};
\draw[<->, ultra thick] (0.5,-1) -- (5.167,6) node[above] {$\ell$};
\draw[dashed] (2.5,2) -- (2.5,5);
\draw[dashed] (4.5,2) -- (4.5,5);
\draw[|<->|] (2.5,1.5) -- node[midway,below] {\small ${} \le s'_b/m$} (4.5,1.5);
\draw[|<->|] (1,0.75) -- node[midway,below] {\small ${} \ge 3s'_b/m$} (7,0.75);
\end{tikzpicture}
\end{center}

A similar argument (with $a$ and $b$ swapped) can be used to find an $I'(\lenvec)$ sticking only to the right of $I(\lenvec)$.

\subsection{Proof of \cref{lem:reduction} (Reduction Lemma)}
\label{sec:reduction_proof}

\Reduction*

Let $\psi_1, \ldots, \psi_\numplanes$ be the minimal depth-$1$ subformulas of $\phi$.
Each $\psi_\planeind$ for $1\le\planeind\leq \numplanes$ is of the form
    \[\psi_\planeind=\sum_{\chi\in\mathcal{L}_\planeind}\lambda_\chi\countl[\chi]\ge C_\planeind\]

    if $\phi$ is a formula of $\TLClP_{k}$ or

    \[\psi_\planeind=\sum_{\chi\in\mathcal{L}_\planeind}\lambda_\chi\countl[\chi] + \sum_{\chi\in\mathcal{R}_\planeind}\lambda_\chi\countl[\chi]\ge C_\planeind.\]

    if $\phi$ is a formula of $\TLCP_{k}$.

    In either case, $\psi_\planeind$ is constant on the middle of $(\lambda, \varrho)$, meaning that for a given $\lenvec$, each $\psi_\planeind$ has the same truth value for $w \in \affrestrict{\Sigma^*}{\lambda}{\varrho}$ with $\prkh(w) = \lenvec$ at positions $i$ such that $|\lambda(\lenvec)| \le i \le \len{\lenvec}-|\rho(\lenvec)|$.
    The truth value of $\psi_\planeind$, given the restriction to $(\lambda,\varrho)$, is determined solely by the Parikh vector of the word and the position $i$ at which it is evaluated. 
    Thus there is a PNP $M_\planeind$ which ``hard-codes'' the behavior of $\psi_\planeind$. 
    Moreover, $M_\planeind$ is constant on the middle of $(\lambda,\varrho)$ because $\psi_\planeind$ is. Thus we replace the depth-1 formula $\psi_\planeind$ with the depth-0 formula~$M_\planeind$. Call the result $\phi_{\text{red}}$.
    
    The last step is to write a formula that checks if $w\in \affrestrict{\Sigma^*}{\lambda}{\varrho}$.
    For $\sigma \in \Sigma$, define a Parikh numerical predicate $\Pi_\sigma$ that is true at position $i$ if position $i$ belongs to the prefix/suffix and the symbol at that position of the prefix/suffix is $\sigma$:
    \begin{align*}
        w,i\models\Pi_\sigma&\iff \begin{cases}
        \lambda(\prkh(w))_i  = \sigma & i\leq |\lambda(\prkh(w))|\\
        \varrho(\prkh(w))_{i-(|w|-|\varrho(\prkh(w))|)} = \sigma & i\geq |w|-|\varrho(\prkh(w))|\\
        \top & \text{otherwise.}
    \end{cases}
    \end{align*}
    This is a Parikh numerical predicate because it is conditioned only on the position $i$ and the Parikh vector of $w$. Moreover, it is constant on the middle of $(\lambda,\varrho)$.
    
    Then we can write the following formula, which checks whether $w\in \affrestrict{\Sigma^*}{\lambda}{\varrho}$ by checking whether $w$ at every position $i$ has the correct prefix/suffix:
    \[\phi_{\text{aff}}=\left(\countl[(\Pi_a\land \sympred{a})\lor(\Pi_b\land \sympred{b})]=\countl[\top]\right)\]

    Finally, define the new formula
    \[\phi'=\phi_{\text{red}}\land \phi_{\text{aff}}\]
    which has depth $(k-1)$ and defines ${}_{\lambda}L_{\varrho}$.
    Note that since $\phi_{\text{aff}}$ does not contain $\countr$, if $\phi$ did not contain $\countr$ then $\phi'$ does not either, so if $\phi\in\TLClP_{k}$ then $\phi'\in \TLClP_{k-1}$.

\subsection{Proof of \cref{cor:prediction_task_depth}}
\label{sec:prediction_task_depth_proof}

\PredictionTaskDepth*

First, we show that the prediction problem for $L_{k+3}$ is solvable in $\TLCl_{k+1}$.
To decide whether to predict $0$ or $1$, we need to check that $w$ starts with $a$ and has exactly $(k+3)$ blocks so far: if so, predict $1$; if not, predict $0$.
Since we may assume that the entire string $w$ belongs to $L_{k+3}$, we know that it starts with a block of $a$'s, and there are no more than $(k+2)$ blocks after that.
We can check blocks $2$ through $(k+2)$ by checking if $w$ belongs to $\altsingleb{k+1}$ (see \cref{eq:altsingle}), and we can check the last block by testing whether the current symbol is $a$ (if $k$ is even) or $b$ (if $k$ is odd).
    
More formally, let $\phi_{\altsingleb{k+1}}$ define $\altsingleb{k+1}$ by \cref{lem:piecewise_testable_depth}, and let 
\begin{align*}
\phi &= \begin{cases}
\phi_{\altsingleb{k+1}} \land \sympred{a} & \text{$k$ is even} \\
\phi_{\altsingleb{k+1}} \land \sympred{b} & \text{$k$ is odd.}
\end{cases}
\end{align*}
Then, for all $w\in\altplus{k+3}$ and $1\leq i\leq |w|$ we have that 
    $$w[1:i]\in\altplus{k+3} \iff w[1:i] \models \phi.$$

In the other direction (the prediction problem for $L_{k+3}$ is not solvable in $\TLCl_{k}$), suppose we had a depth-$k$ formula $\phi\in\TLCl_{k}$ such that for all $w\in\altplus{k+3}$ and $1 \le i \le |w|$, $w[1:i]\models\phi\iff w[1:i]\in\altplus{k+3}$. 
Assume that $k$ is even (the odd case is similar).
We can use \cref{lem:cropping_oneway,lem:reduction} just as in the proof of \cref{thm:TLCl_depth} to obtain 
an accommodating affix restriction $\lambda_1(\lenvec)\in \altplus{k}$ and $\varrho_{1}(\lenvec)\in a^+$, as well as a depth-$1$ formula $\phi_{1}\in \TLClP_{1}$, which defines $\affrestrict{(\altplus{k+3})}{\lambda_{1}}{\varrho_{1}}$ and only uses PNPs which are constant over $(\lambda_{1},\varrho_{1})$.

Since $(\lambda_1, \varrho_1)$ is accommodating, choose $\lenvec$ so that the middle has $s_a \ge 2$ occurrences of $a$ and $s_b \ge 2$ occurrences of $b$. Construct strings
\begin{align*}
w &= \lambda_1(\lenvec) b^{s_\bitind-1} a^{s_a-1} b a \varrho_1(\lenvec) \\
w' &= \lambda_1(\lenvec) b^{s_b} a^{s_a}  \varrho_1(\lenvec) b a
\end{align*}
which both belong to $\altplus{k+3}$ (because the prefix $\lambda_1(\lenvec)$ has $k$ blocks ending with a block of $b$'s and the suffix $\varrho_1(\lenvec)$ is all $a$'s, so both strings have $(k+3)$ blocks).
Let $i = |\lambda_1(\lenvec)|+s_a+s_b+|\varrho_1(\lenvec)|$, that is, the position of the last symbol of~$\varrho(\lenvec)$. Then $w[1:i] \in \altplus{k+3}$, while $w'[1:i] \not\in \altplus{k+3}$. But by \cref{lem:TLCP_commutative}, we have $w[1:i] \models \phi_1 \iff w'[1:i] \models \phi_1$. This is a contradiction, so we conclude that no formula $\phi$ with depth $k$ can solve the prediction problem for $\altplus{k+3}$.

If $k$ is odd, the argument is the same, except with $a$ and $b$ swapped.

\section{Depth Hierarchy for \TLC}
\label{sec:TLC_depth}

We can also obtain a strict depth hierarchy for $\TLC$. 
The key observation is that to modify \cref{lem:cropping_oneway} for $\TLC$ we use the fact that if the Parikh vector of a word is fixed we can rewrite $\countr$ in terms of $\countl$ and $\lenvec$.

\begin{lemma}[Cropping Lemma for $\TLC$, cf.~Lemmas 1--2 of \citealp{behle2009regular}] \label{lem:cropping}
For any formula $\phi$ of $\TLCP$ and any accommodating family of intervals $I \colon \N^\alphsize \to \mathcal{I}(\N^\alphsize)$, such that $I(\lenvec) \subseteq [\vec{0}, \lenvec]$ and the PNPs of $\phi$ are constant on $I$, there exists an accommodating family of intervals $I' \colon \N^\alphsize \to \mathcal{I}(\N^\alphsize)$ such that $I'(\lenvec) \subseteq I(\lenvec)$ but does not stick to any side of $I(\lenvec)$ for all $\lenvec$, and all of the minimal depth-$1$ subformulas (and PNPs) of $\phi$ are constant on $I'$.
\end{lemma}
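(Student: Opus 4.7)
The plan is to adapt the proof of \cref{lem:cropping_oneway} by first converting each $\countr$ into a combination of $\countl$, a local indicator at position $i$, and a global term depending on $w$. We use the identity
\[
\countr[\chi]^{w,i} = \countc[\chi]^w - \countl[\chi]^{w,i} + \mathbb{I}[w,i\models\chi],
\]
together with the assumption that the PNPs of $\phi$ are constant on $I$, so that for any $j$ with $\prkh(w[1:j]) \in I(\lenvec)$, the truth of any depth-$0$ subformula $\chi$ at $j$ is a fixed function $\iota_\chi(w_j)$ of the current symbol (with $\iota_\chi$ depending only on $\lenvec$ and the fixed PNP values on $I$). Each minimal depth-$1$ subformula $\psi_\planeind$ of $\phi$ then reduces, at positions $i$ in $I$, to an inequality of the form
\[
\alpha_\planeind \countl[Q_a]^{w,i} + \beta_\planeind \countl[Q_b]^{w,i} + \gamma_{\planeind, w_i} \ge E_\planeind(w),
\]
with integer coefficients $\alpha_\planeind, \beta_\planeind, \gamma_{\planeind,\sigma}$ whose absolute values are bounded in terms of $\phi$, and an integer $E_\planeind(w)$ depending on $w$ (through $\countc[\chi]^w$ and contributions from positions outside $I$) but not on $i$.

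As in \cref{lem:cropping_oneway}, each $\psi_\planeind$ cuts the $(\countl[Q_a], \countl[Q_b])$-plane by a half-plane, but now there is not a single boundary line: varying $w$ with $\prkh(w) = \lenvec$ and $w_i \in \{a,b\}$ produces a finite family of parallel boundary lines sharing the common normal $(\alpha_\planeind, \beta_\planeind)$. The ``bad'' region for $\psi_\planeind$ is thus a band of bounded width perpendicular to this normal, and we need $I'$ to lie entirely on one side of it. The iterative thirds-selection argument of \cref{lem:cropping_oneway} extends directly: at each of the $\numplanes$ rounds we split the current subinterval into three slabs perpendicular to the $\planeind$-th subformula's normal and keep one that avoids the band. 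To additionally ensure that $I'$ does not stick to any of the four sides of $I$, we shrink the resulting interval by one unit in each coordinate direction, which preserves accommodation up to an additive constant.

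The main obstacle is that the width of each band, governed by the range of $E_\planeind(w)$ over $w$ with $\prkh(w) = \lenvec$, can grow linearly with $\len{\lenvec}$. We address this by exploiting the accommodating hypothesis: given any target output size $\vec{s}$, we choose $\lenvec$ and $I(\lenvec)$ sufficiently large that, after accounting for the $3^{\numplanes}$ blow-up of the iterative splitting and the total widths of the $\numplanes$ bands (each bounded in terms of $\phi$ and $\lenvec$), enough room remains within $I(\lenvec)$ for a subinterval of size $\vec{s}$ that is not stuck to any side. A scaling argument analogous to the one used in \cref{lem:cropping_oneway}, but with single line crossings replaced by bands, shows that such $\lenvec$ always exists. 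Constancy of the PNPs of $\phi$ on $I' \subseteq I$ is automatic.
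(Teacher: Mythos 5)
Your proposal diverges from the paper's proof in two ways, and the second one opens a gap you do not close.

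\textbf{Structural differences.}
The paper rewrites $\countr[\chi]$ simply as $C_\chi - \countl[\chi]$, where $C_\chi$ is taken from $\{n_a+n_b,\, n_a,\, n_b,\, 0\}$ according to whether $\chi$ is $\top$, $Q_a$, $Q_b$, or $\bot$, so that for each fixed $\lenvec$ every minimal depth-$1$ subformula $\psi_\planeind$ carves out a single half-plane in the $(\countl[Q_a],\countl[Q_b])$-coordinates. It then picks $\lenvec$ so that $I(\lenvec)$ has size at least $(2^\numplanes s'_a, 2^\numplanes s'_b)$, splits repeatedly into four quadrants, and uses the fact that a line can meet at most three quadrants to keep one free quadrant per round, ending with a shrink so $I'(\lenvec)$ touches no side. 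Your thirds-selection argument, imported from the one-sided lemma, would also work here with a $3^\numplanes$ blow-up, but the quadrant argument is more natural once there is no need to control which side the final interval sticks to.

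\textbf{The gap.}
You replace the single boundary line with a \emph{band} of parallel lines, on the grounds that $E_\planeind(w)$ (coming from $\countc[\chi]^w$ and the contributions of positions outside $I$) can vary with $w$ for fixed $\lenvec$ — and you observe yourself that this variation can be linear in $\len{\lenvec}$. That observation is exactly what breaks your ``scaling argument.'' Since the hypothesis requires $I(\lenvec) \subseteq [\vec 0, \lenvec]$, the size of $I(\lenvec)$ is at most $\len{\lenvec}$ in each coordinate; if the band width can be a constant fraction of $\len{\lenvec}$, then after multiplying by $3^\numplanes$ (or $2^\numplanes$) for the splitting rounds, the $\numplanes$ bands can simply engulf every candidate subinterval, no matter how large $\lenvec$ is chosen. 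Accommodation lets you inflate $|I(\lenvec)|$, but it does not let you inflate it \emph{faster} than the band grows, so the claim ``such $\lenvec$ always exists'' does not follow. The paper never confronts this because it treats each $\psi_\planeind$ as a \emph{single} line determined by $\lenvec$: informally it is taking the body of each counting term to reduce, for the $\lenvec$ at hand, to one of $\top, Q_a, Q_b, \bot$, so that $\countc[\chi]^w$ is a function of $\lenvec$ alone. If you want to carry out your more careful accounting, you need either (i) to argue that, in the regime where the lemma is actually invoked, the positions \emph{outside} the $I$-range are few enough that $E_\planeind(w)$ has bounded range, or (ii) to restrict attention to strings in the relevant affix-restricted language so that the outside portion of $w$ is literally fixed. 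As written, the scaling step is a genuine hole.
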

\begin{proof}

    Let $\psi_1,\psi_2,\ldots,\psi_\numplanes$ be the minimal depth-1 subformulas of $\phi$. That is, for $\planeind \in [\numplanes]$, 
    \[\psi_\planeind = \left(\sum_{\chi \in \mathcal{L}_\planeind}\lambda_\chi\countl[\chi]\right)+\left(\sum_{\chi \in \mathcal{R}_\planeind}\lambda_\chi\countr[\chi]\right)\ge C_\planeind.\]
    For any size $\vec{s} = (s_a, s_b) \in \N^\alphsize$, let $\vec{s}' = (s_a+2, s_b+2)$. We may,
    since $I$ is accommodating, set $\lenvec$ such that $I(\lenvec)$ has size at least $(2^\numplanes s'_a,2^\numplanes s'_b)$. 
    Then we rewrite $\psi_\planeind$ as 
    \begin{align*}
    \psi_\planeind &= \left(\sum_{\chi \in \mathcal{L}_\planeind}\lambda_\chi\countl[\chi]\right)+\left(\sum_{\chi \in \mathcal{R}_\planeind}\lambda_\chi(C_\chi-\countlstrict[\chi])\right)\ge C_\planeind \\
    C_\chi&=\begin{cases}
        n_a+n_b & \text{if~}\chi=\top\\
        n_a & \text{if~}\chi= \sympred{a} \\
        n_b & \text{if~}\chi= \sympred{b}\\
        0 & \text{if~}\chi=\bot.
    \end{cases}
    \end{align*}
    (Regarding the strict counting operator $\countlstrict$, see \cref{thm:strict}.)
    Now each $\psi_\planeind$ defines a half-plane over $\countl[\sympred{a}]$ and $\countl[\sympred{b}]$, where $w,i\models\psi_\planeind$ iff $w[1:i]$ lands on the correct side of the half-plane. Then $\phi$ is a Boolean combination of these half-planes. 
    Thus if $w[1:i]$ and $w'[1:i']$ land in the same half-planes, they will both satisfy $\phi$ or both not satisfy~$\phi$.

Next, we prove that we can find an interval with size at least $\vec{s}'$ on which all the $\psi_\planeind$ are constant,
by induction on $\numplanes$. The base case $\numplanes=0$ is trivial. For $\numplanes>0$, we split the interval into four quadrants, each of size \( (2^{\numplanes-1} s'_a, 2^{\numplanes-1} s'_b) \). Since a line can only intersect at most three quadrants, there is one quadrant that is completely contained in the half-plane for $\psi_\numplanes$ or completely outside. Use the inductive hypothesis on this quadrant and the remaining half-planes for $\{\psi_1, \ldots, \psi_{\numplanes-1}\}$.

Finally, shrink the interval slightly to obtain an interval $I'(\vec{n})$ of size $\vec{s}$ that does not touch any side of $I(\vec{n})$.
\end{proof}
Then, in order to prove a depth hierarchy for $\TLCl$, each step of reduction needs to eliminate a block on both the left and the right. 
\begin{theorem}\label{thm:TLC_depth}
    Define the family of languages $\altplusdouble{k} = \altplus{2k-1} = (a^+b^+)^{k-1}a^+$. 
    Then $\altplusdouble{k+1}$ is definable in $\TL[\countl,\countr]_{k+1}$ but not in $\TL[\countl,\countr]_k$.
\end{theorem}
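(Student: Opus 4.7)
The plan is to adapt the blueprint of \cref{thm:TLCl_depth}, replacing the one-sided cropping step (\cref{lem:cropping_oneway}) with the two-sided cropping lemma for $\TLC$ (\cref{lem:cropping}). This lets each reduction step add a block to \emph{both} the prefix and the suffix of the affix restriction, so $k-1$ steps consume roughly $2k$ blocks -- exactly enough to reduce $\altplusdouble{k+1} = \altplus{2k+1}$ down to a depth-$1$ formula that must be blind to ordering in the middle. The upper bound is immediate: $\altplus{2k+1}$ is $(2k+1)$-piecewise testable by \cref{lem:piecewise_testable}, hence definable in $\TL[\countl,\countr]_{k+1}$ by \cref{lem:piecewise_testable_depth}.

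For the lower bound, I assume for contradiction that some $\phi \in \TL[\countl,\countr]_k$ defines $\altplusdouble{k+1}$, and initialize $I_k(\lenvec) = [(1,0), \lenvec - (1,0)]$, $\lambda_k(\lenvec) = \varrho_k(\lenvec) = a$, and $\phi_k = \phi$. For $\ell = k-1, k-2, \ldots, 1$, I will maintain an accommodating family $I_\ell \subseteq I_{\ell+1}$ that touches no side of $I_{\ell+1}$ and on which the minimal depth-$1$ subformulas and PNPs of $\phi_{\ell+1}$ are constant (obtained from \cref{lem:cropping}); affix strings $\lambda_\ell(\lenvec)$ and $\varrho_\ell(\lenvec)$, each containing $k-\ell+1$ alternating blocks, with $\lambda_\ell$ starting in $a$ and $\varrho_\ell$ ending in $a$; and a depth-$\ell$ formula $\phi_\ell \in \TLCP_\ell$ defining $\affrestrict{\altplusdouble{k+1}}{\lambda_\ell}{\varrho_\ell}$, obtained from \cref{lem:reduction}. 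Since $I_\ell(\lenvec)$ touches no side of $I_{\ell+1}(\lenvec)$, the path from the bottom-left corner of $I_{\ell+1}$ to that of $I_\ell$ contains at least one $a$-step and one $b$-step; I will append them to $\lambda_{\ell+1}$ in the unique ordering that adds exactly one new block (flipping the ending letter), and symmetrically prepend the corresponding top-right extension to $\varrho_{\ell+1}$.

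After $k-1$ iterations, $\lambda_1$ and $\varrho_1$ each carry $k$ blocks and $\phi_1 \in \TLCP_1$ defines the restriction. Using accommodation, I pick $\lenvec$ so the middle holds at least two $a$'s and two $b$'s. If $k$ is odd then $\lambda_1$ ends in $a$ and $\varrho_1$ starts in $a$, and I set $w = \lambda_1 \, b^{s_b} a^{s_a} \, \varrho_1$ (which yields $2k+1$ blocks overall, so $w \in \altplusdouble{k+1}$) and $w' = \lambda_1 \, b^{s_b-1} a^{s_a} b \, \varrho_1$ (which yields $2k+3$ blocks, so $w' \notin \altplusdouble{k+1}$). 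If $k$ is even, $\lambda_1$ ends in $b$ and $\varrho_1$ starts in $b$, and the symmetric choice $w = \lambda_1 \, a^{s_a} b^{s_b} \, \varrho_1$, $w' = \lambda_1 \, a^{s_a-1} b^{s_b} a \, \varrho_1$ does the job. In both cases $\prkh(w) = \prkh(w')$, so \cref{lem:TLCP_commutative} forces $w \models \phi_1 \iff w' \models \phi_1$, contradicting the membership split.

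The hard part will be the per-step block bookkeeping: given the current ending letter of $\lambda_{\ell+1}$ and the starting letter of $\varrho_{\ell+1}$, the orderings of the extension $a$- and $b$-steps must be chosen so that exactly one new block is added on each side, and one must verify that the final witnesses $w, w'$ produce block counts differing by exactly two. The parity split between $k$ even and $k$ odd is routine but must be carried through consistently.
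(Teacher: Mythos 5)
Your proposal is correct and takes essentially the same approach as the paper's proof: the same initialization, the same iterated use of \cref{lem:cropping} (two-sided cropping) followed by \cref{lem:reduction}, the same invariant that $\lambda_\ell$ and $\varrho_\ell$ each gain exactly one block per step so both reach $k$ blocks at $\ell=1$, and the final contradiction via \cref{lem:TLCP_commutative}. The only cosmetic difference is your parity-split choice of the witness pair $w, w'$ in the middle (the paper uses $\lambda_1 a^{s_a} b^{s_b} \varrho_1$ and $\lambda_1 a^{s_a-1}b^{s_b-1}ab\varrho_1$ uniformly for both parities), which produces the same block counts.
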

\begin{proof}
Assume, for the sake of contradiction, that there exists some depth $k$ formula $\phi\in \TLC_k$ which defines $\altplusdouble{k+1}$. Let $I_k(\vec{n}) = [(1,0), \vec{n}-(1,0)]$, $\lambda_k(\lenvec) = \varrho_k(\lenvec) = a$, and $\phi_k = \phi$. Note that $(\lambda_k,\varrho_k)$ is accommodating, and $\phi_k$ has no PNPs.

For $\ell = k-1, k-2, \ldots, 1$, we will define the following, writing $L^{\text{R}}$ for the \emph{reversal} of $L$, that is, the set of reversal of strings in $L$:
\begin{enumerate}
    \item A accommodating family of intervals $I_\ell(\lenvec)\subseteq I_{\ell+1}(\lenvec)$.
    \item An accommodating affix restriction $\lambda_\ell(\lenvec)\in \altplus{k-\ell+1}$ and $\varrho_\ell(\lenvec)\in \altplus{k-\ell+1}^{\text{R}}$.
    \item A depth-$\ell$ formula $\phi_\ell\in \TLCP_{\ell}$ which defines $\affrestrict{(\altplusdouble{k+1})}{\lambda_\ell}{\varrho_\ell}$ and only uses PNPs which are constant over $(\lambda_\ell,\varrho_\ell)$.
\end{enumerate}

We use the following iterative procedure:

\begin{enumerate}
    \item Using \cref{lem:cropping}, find an accommodating family of intervals $I_\ell$ such that $I_\ell(\lenvec) \subseteq I_{\ell+1}(\lenvec)$ and does not stick to any side of $I_{\ell+1}(\lenvec)$ for all $\lenvec$, and the minimal depth-1 subformulas of $\phi_{\ell+1}$ are constant on $I$.
    \item We choose an affix restriction whose middle is $I_\ell$, as follows.
    Let $[\vec{i}, \vec{j}] = I_{\ell+1}(\lenvec)$ and $[\vec{i'}, \vec{j'}] = I_{\ell}(\lenvec)$; then
    \[
    \begin{aligned}
    \lambda_\ell(\lenvec) &= \begin{cases}
    \lambda_{\ell+1}(\lenvec) a^{ [\vec{i'}-\vec{i}]_a } b^{ [\vec{i'}-\vec{i}]_b } & \text{$k-\ell$ odd} \\
    \lambda_{\ell+1}(\lenvec) b^{ [\vec{i'}-\vec{i}]_b } a^{ [ \vec{i'}-\vec{i} ]_a } & \text{$k-\ell$ even}
    \end{cases} \\
    \varrho_\ell(\lenvec) &= \begin{cases}
    b^{ [ \vec{j}-\vec{j'} ]_b } a^{ [ \vec{j}-\vec{j'} ]_a } \varrho_{\ell+1}(\lenvec) & \text{$k-\ell$ odd} \\
    a^{ [ \vec{i'}-\vec{i} ]_a } b^{ [ \vec{j}-\vec{j'} ]_b } \varrho_{\ell+1}(\lenvec) & \text{$k-\ell$ even.}
    \end{cases}
    \end{aligned}
    \qquad
    \begin{tikzpicture}[x=3mm,y=3mm,baseline=12mm]
    \draw (0,0) rectangle (10,8); \node[above] at (5,8) {$I_{\ell-1}(\lenvec)$};
    \draw (5,3) rectangle (7,1); \node[above] at (5.5,3) {$I_\ell(\lenvec)$};
    \draw[thick] (0,0) -- node[auto] {$\lambda_\ell(\lenvec)$} (5,0) -- (5,1);
    \draw[thick] (7,3) -- (7,8) -- node[auto=right] {$\varrho_\ell(\lenvec)$} (10,8);
    \end{tikzpicture}
    \]

    Because $\lambda_{\ell+1}(\lenvec)\in \altplus{k-\ell-1+1}$ and $I_\ell(\lenvec)$ does not stick to any side of $I_{\ell+1}(\lenvec)$,  we have that $\lambda_{\ell}\in \altplus{k-\ell+1}$. Similarly, $\varrho_{\ell}\in \altplus{k-\ell+1}^{\text{R}}$.
    \item Using \cref{lem:reduction} we can find a depth-$\ell$ formula $\phi_\ell$ of $\TLCP_{\ell}$ that defines $\affrestrict{(\altplusdouble{k+1})}{{\lambda_\ell}}{{\varrho_\ell}}$ and only uses PNPs which are constant on $(\lambda_\ell,\varrho_\ell)$.
\end{enumerate}

At the end of this procedure we are left with
\begin{itemize}
    \item An accommodating affix restriction $\lambda_{1}(\lenvec)\in \altplus{k}$ and $\varrho_{1}(\lenvec)\in \altplus{k}^{\text{R}}$.
    \item A depth-$1$ formula $\phi_{1}\in \TLCP_{1}$ which defines $\affrestrict{(\altplusdouble{k})}{{\lambda_{1}}}{{\varrho_{1}}}$ and only uses PNPs which are constant over $(\lambda_{1},\varrho_{1})$.
\end{itemize}

Since $(\lambda_1, \varrho_1)$ is accommodating, choose $\lenvec$ so that the middle has $s_a \ge 2$ occurrences of $a$ and $s_b \ge 2$ occurrences of $b$. Construct strings 
\begin{align*}
w &= \lambda_1(\lenvec) a^{s_a} b^{s_b} \varrho_1(\lenvec) \\
w' &= \lambda_1(\lenvec) a^{s_a-1} b^{s_\bitind-1} a b \varrho_1(\lenvec).
\end{align*}
The prefix $\lambda_1(\lenvec)$ and suffix $\varrho_1(\lenvec)$ both have $k$ blocks, and $\lambda_1(\lenvec)$ ends with the same letter that $\varrho_1(\lenvec)$ starts with. So $w$ has $(2k+1)$ blocks and is therefore in $\altplusdouble{k+1}$, while $w'$ has $(2k+3)$ blocks and is therefore not in $\altplusdouble{k+1}$.
But by \cref{lem:TLCP_commutative}, we have $w \models \phi_1 \iff w' \models \phi_1$. This is a contradiction, so we conclude that no formula $\phi$ with depth $k$ can define $\altplusdouble{k+1}$. 

Finally, by \cref{lem:piecewise_testable}, $\altplusdouble{k+1} = \altplus{2k+1}$ is a $(2k+1)$-piecewise testable language.
Thus, by \cref{lem:piecewise_testable_depth}, $\altplusdouble{k+1}$ is definable in $\TL[\countl,\countr]_{k+1}$.
\end{proof}

\section{Equivalence of \TLC{} to Other Formalisms}\label{sec:other_formalisms}

The logic $\TLC$ is equivalent to two other formalisms studied in the literature. The multiple different ways of characterizing this class of languages suggest that this is a robust class of languages.

\begin{definition}\label{def:MAJtwo}
    The syntax of\/ $\MAJtwo$ is as follows:
    \begin{align*}
    \phi &\bnfto \sympred\sigma(x) \mid \sympred\sigma(y) && \sigma \in \Sigma \\
    &\bnfor x<y \mid y<x\\
    &\bnfor \lnot \phi_1 \mid \phi_1 \land \phi_2 \\
    &\bnfor \MAJformula{x}{\phi_1, \ldots, \phi_\numterms} \mid \MAJformula{y}{\phi_1, \ldots, \phi_\numterms} && \numterms \ge 1.
    \end{align*}
    
    The semantics of formulas is defined by the relation $w, \xi \models \phi$, where $\xi$ is a partial function from variables in $\{x, y\}$ to truth values in $\{0,1\}$. We write $\xi[x\mapsto i]$ for the function $\xi'$ such that $\xi'(x) = i$ and $\xi'(y) = \xi(y)$, and similarly for $\xi[y\mapsto j]$.
    \begin{subequations}
        \let\origiff\iff
        \renewcommand{\iff}{\hspace{\tabcolsep}\origiff\hspace{\tabcolsep}}
        \begin{alignat}{2}
            &\eval{w}{\xi} \models \sympred\sigma(x) &\iff &w_{\xi(x)} = \sigma \\
            &\eval{w}{\xi} \models \sympred\sigma(y) &\iff &w_{\xi(y)} = \sigma \\
            &\eval{w}{\xi} \models x < y &\iff& \xi(x) < \xi(y) \\
            &\eval{w}{\xi} \models y < x &\iff& \xi(y) < \xi(x) \\
            &\eval{w}{\xi} \models \neg \phi &\iff &\eval{w}{\xi} \not\models \phi \\
            &\eval{w}{\xi} \models \phi_1\land\phi_2 &\iff &\text{$\eval{w}{\xi} \models \phi_1$ and $\eval{w}{\xi} \models \phi_2$} \\
            &\eval{w}{\xi} \models \MAJformula{x}{\phi_1,\ldots,\phi_\numterms} &\iff &
            \displaystyle\sum_{i=1}^{|w|}\sum_{\termind=1}^{\numterms} 
            \mathbb{I}\bigl[\eval{w}{\xi [x\mapsto i]} \models \phi_\termind\bigr] > \frac{|w|\numterms}{2}
            \\
            &\eval{w}{\xi} \models \MAJformula{y}{\phi_1,\ldots,\phi_\numterms} &\iff &\displaystyle\sum_{j=1}^{|w|}\sum_{\termind=1}^{\numterms} 
            \mathbb{I}\bigl[\eval{w}{\xi [y\mapsto j]} \models \phi_\termind\bigr] > \frac{|w|\numterms}{2}.
            \label{eq:MAJtwo_sem_MAJy}
        \end{alignat}
    \end{subequations}
We write $w \models \phi$ to mean $w, \emptyset \models \phi$, and we say that a closed formula $\phi$ defines the language $\lang(\phi) = \{w \mid w \models \phi\}$. 
\end{definition}

\begin{definition}\label{def:depth_MAJtwo}
The \emph{depth} of formulas and terms of\/ $\MAJtwo$ is defined by:
\begin{align*}
\depth(\sympred\sigma(x)) = \depth(\sympred\sigma(y)) &= 0 \\
\depth(x<y) = \depth(y<x) &= 0 \\
\depth(\neg \phi) &= \depth(\phi) \\
\depth(\phi_1 \land \phi_2) &= \max \{ \depth(\phi_1), \depth(\phi_2) \} \\
\depth(\MAJformula{x}{\phi_1, \ldots, \phi_\numterms}) = 
\depth(\MAJformula{y}{\phi_1, \ldots, \phi_\numterms}\rangle &= 1 + \max \{ \depth(\phi_1), \ldots, \depth(\phi_\numterms) \}.
\end{align*}

We write $\MAJtwo_k$ for the class of all formulas $\phi$ such that $\depth(\phi)\leq k$. 
\end{definition}

\begin{lemma}
Any formula $\phi$ of $\MAJtwo$ that uses $\forall$ or $\exists$ can be converted into a formula that does not use $\forall$ or $\exists$, defines the same language as $\phi$, and has the same depth as $\phi$.
\end{lemma}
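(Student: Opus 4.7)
The plan is to observe that $\exists$ and $\forall$ can each be simulated by a single $\MAJ$ operator without raising depth, and then to translate by structural induction on $\phi$.

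The key identity is
\[\exists x.\,\psi \;\equiv\; \MAJformula{x}{\psi,\, \top_x},\]
where $\top_x$ abbreviates the depth-$0$ tautology $Q_\sigma(x) \lor \neg Q_\sigma(x)$ for some fixed $\sigma \in \Sigma$ (this stays within the declared $\MAJtwo$ syntax, even though $\top$ is not a primitive). The verification is a one-line arithmetic check: for a string $w$ of length $n$, unfolding the semantics in \cref{eq:MAJtwo_sem_MAJy} gives that $w,\xi \models \MAJformula{x}{\psi, \top_x}$ iff
\[\sum_{i=1}^{n}\bigl(\mathbb{I}[w,\xi[x\mapsto i]\models\psi] + 1\bigr) \;>\; n,\]
which collapses to $\sum_{i=1}^{n}\mathbb{I}[w,\xi[x\mapsto i]\models\psi] > 0$, that is, $\exists i.\, w,\xi[x\mapsto i]\models\psi$. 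The companion identity for $\forall$ is $\forall x.\,\psi \equiv \neg\MAJformula{x}{\neg\psi,\, \top_x}$, and the $y$-versions are fully symmetric. Depth preservation is then immediate: extending \cref{def:depth_MAJtwo} in the obvious way so that $\depth(\exists x.\psi) = \depth(\forall x.\psi) = 1 + \depth(\psi)$, we have $\depth(\MAJformula{x}{\psi,\top_x}) = 1 + \max(\depth(\psi), 0) = 1 + \depth(\psi)$, and negation does not affect depth.

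To complete the proof I would apply these identities by structural induction on $\phi$. The base cases ($Q_\sigma(x)$, $Q_\sigma(y)$, $x<y$, $y<x$) contain no quantifiers, so there is nothing to do; for $\neg$, $\land$, and $\MAJformula{\cdot}{\cdot}$ one simply recurses on the immediate subformulas and reassembles; for $\exists x.\psi$ or $\forall x.\psi$, first rewrite $\psi$ inductively to an equivalent $\MAJtwo$ formula $\psi'$ of the same depth as $\psi$, then apply the identity above to replace $\exists x.\psi$ (resp.\ $\forall x.\psi$) by $\MAJformula{x}{\psi',\top_x}$ (resp.\ $\neg\MAJformula{x}{\neg\psi',\top_x}$). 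The resulting formula lies in the original $\MAJtwo$ syntax, defines the same language, and has the same depth as $\phi$. I do not foresee a substantial obstacle here; the only subtlety is having to write $\top_x$ using a formula already in the declared syntax so that no new primitive is introduced, which the choice $Q_\sigma(x) \lor \neg Q_\sigma(x)$ handles.
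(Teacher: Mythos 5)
Your proof is correct and uses the same key identities as the paper, namely $\exists x.\psi \equiv \MAJformula{x}{\psi,\top}$ and $\forall x.\psi \equiv \neg\MAJformula{x}{\neg\psi,\top}$; you additionally spell out the arithmetic verification and note the (minor) point that $\top$ must be expressed in the declared syntax, which the paper leaves implicit. (One small nit: the $\MAJtwo$ grammar has no primitive $\lor$, so $\top_x$ should be written as $\lnot(Q_\sigma(x)\land\lnot Q_\sigma(x))$ rather than with $\lor$, but this is cosmetic.)
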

\begin{proof}
These quantifiers can be rewritten equivalently using $\MAJ$:
\begin{align*}
\exists x [\phi] &\equiv \MAJformula{x}{\phi, \top} \\
\forall x [\phi] &\equiv \neg \MAJformula{x}{\neg \phi, \top}. \tag*{\qedhere}
\end{align*}
\end{proof}

Now, we show the equivalence of $\TLC$ with $\MAJtwo$. First, we show how to translate $\TLC$ to $\MAJtwo$ (\cref{thm:tlc_to_majtwo}) and then how to translate $\MAJtwo$ to $\TLC$ (\cref{thm:majtwo_to_tlc}).

\begin{theorem}\label{thm:tlc_to_majtwo}
    Let $\phi$ be a formula of $\TL[\countl,\countr]_k$. 
    Then there exists a $\MAJtwo_{k}$ formula $\phi'(x)$ with one free variable such that $w,i\models \phi\iff \eval{w}{x=i}\models \phi'(x)$ for all $w$ and $1 \le i \le |w|$.
\end{theorem}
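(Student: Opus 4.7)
I would proceed by structural induction on $\phi$, producing for every subformula of depth $k$ an equivalent $\MAJtwo_k$ formula with free variable $x$. The base cases are immediate: the symbol predicate $Q_\sigma$ becomes $Q_\sigma(x)$, the Boolean connectives translate component-wise, and any comparison built solely from depth-$0$ terms (which can only be sums of $1$'s, i.e.\ integer constants) is trivially true or false.

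The essential case is a comparison $t_1 < t_2$ whose terms have depth at most $k$. First I would put it in the normal form
\[\sum_{\ell=1}^m a_\ell \countl[\phi_\ell] + \sum_{\ell=1}^{m'} b_\ell \countr[\psi_\ell] \geq C\]
with integer coefficients $a_\ell, b_\ell, C$ and each $\phi_\ell, \psi_\ell$ of depth at most $k-1$. Using the identities $-\countl[\phi] = \countl[\lnot\phi] - \countl[\top]$ and $-\countr[\psi] = \countr[\lnot\psi] - \countr[\top]$, negative coefficients can be eliminated (at the cost of introducing $\countl[\top]$ and $\countr[\top]$ terms and negated subformulas), and by duplicating counting terms all coefficients can be taken equal to $1$. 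By the inductive hypothesis each $\phi_\ell, \psi_\ell$ translates to a $\MAJtwo_{k-1}$ formula with free variable $x$; because the logic has only two variables, syntactically swapping $x$ and $y$ throughout yields a depth-preserving formula $\tilde\phi'_\ell(y), \tilde\psi'_\ell(y)$ with free variable $y$.

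The comparison then translates to a single outer $\MAJ_y$:
\[\MAJformula{y}{(y\leq x)\land\tilde\phi'_1(y),\ \ldots,\ (y\geq x)\land\tilde\psi'_{m'}(y),\ \eta_1, \ldots, \eta_r},\]
where the target formulas contribute exactly $\countl[\phi_\ell]$ or $\countr[\psi_\ell]$ to the $\MAJ_y$ sum, and the $\eta_i$ are padding formulas drawn from $\{\top, \bot, y=x, y\neq x\}$ with $\countt$ values $n, 0, 1, n-1$ respectively. The main technical obstacle is that $\MAJ_y$ over $m$ formulas fires when the sum exceeds the $n$-proportional threshold $nm/2$, whereas the $\TLC$ comparison has a constant threshold $C$. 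I would resolve this by choosing the padding counts so as to simultaneously cancel the $n$-coefficient on the threshold side (using copies of $\top$ and $y\neq x$) and shift the constant offset so that the overall condition becomes exactly $\sum \countl[\phi_\ell] + \sum \countr[\psi_\ell] > C-1$ (using $y=x$ and $y\neq x$), with $\bot$ added as slack to keep all counts non-negative; this linear system always admits a non-negative integer solution. Depth is preserved because each $\tilde\phi'_\ell$ has depth $k-1$, the conjunction with the depth-$0$ formula $y\leq x$ does not raise this, and the outer $\MAJ_y$ adds exactly one level. Boolean combinations of comparisons recurse via the usual max-depth rule, completing the induction.
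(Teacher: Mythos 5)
Your overall strategy matches the paper's: both proceed by structural induction, translate each counting term $\countl[\phi_\ell]$ or $\countr[\psi_\ell]$ to a guarded formula $(y\leq x)\land\tilde\phi'_\ell(y)$ or $(y\geq x)\land\tilde\psi'_\ell(y)$ inside a single outer $\MAJ_y$, and use padding formulas to reconcile the constant threshold $C$ of the $\TLC$ comparison with the $n$-proportional threshold of the $\MAJ$ quantifier. The variable-swap trick for producing a depth-preserving $\tilde\phi'_\ell(y)$ with free variable $y$ is the same device the paper uses (its $\mathcal{M}_y$ transformation), and your depth accounting is correct.

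However, there is a genuine gap in the normalization step. You claim that the identities $-\countl[\phi] = \countl[\lnot\phi] - \countl[\top]$ and $-\countr[\psi] = \countr[\lnot\psi] - \countr[\top]$ ``eliminate'' negative coefficients, but they merely transfer the minus sign onto $\countl[\top]$ and $\countr[\top]$; applying the identity to $-\countl[\top]$ itself is a fixed point ($\countl[\lnot\top]=\countl[\bot]=0$), so those residual terms never go away. Consequently your normal form still contains $-\countl[\top]$ and $-\countr[\top]$ with negative coefficients, whose evaluations at position $i$ are $-i$ and $-(n-i+1)$ respectively. Your proposed padding set $\{\top,\bot,y=x,y\neq x\}$ contributes $\MAJ_y$-counts $n,0,1,n-1$ — a constant or $n$ up to a constant — and therefore no non-negative integer combination of these can cancel an $-i$ or $-(n-i)$ contribution; the linear system you describe simply does not have the required unknowns. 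The paper avoids the issue entirely by never normalizing the coefficients at the $\TLC$ level: it writes the comparison as $\sum t_\ell - \sum t'_\ell > 0$ over atomic terms and translates each negative term by negating its $\MAJtwo$ translation, pairing each positive term with $\top$ and each negative term with $\bot$. This makes the threshold bookkeeping purely local — each pair contributes exactly $t_\ell+n$ or $n-t'_\ell$ against a per-pair threshold of $n$ — so the $n$-dependence cancels term by term and no linear system is needed. Your argument could be repaired by translating $-\countl[\top]$ directly to $\neg(y\leq x)$ paired with $\bot$ (and similarly $-\countr[\top]$ to $\neg(y\geq x)$), i.e.\ by adopting the paper's negation idea for those residual terms, but as written the step fails.
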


\begin{proof}

    We define a transformation $\mathcal{M}_x\transform{\cdot}$ from formulas of $\TLC$ to formulas of $\MAJtwo$ with one free variable $x$:
    \begin{align*}
    \mathcal{M}_x\transform{\sympred\sigma} &= \sympred\sigma(x) \\
    \mathcal{M}_x\transform{\neg\phi} &= \neg\mathcal{M}_x\transform{\phi}  \\
    \mathcal{M}_x\transform{\phi_1 \land \phi_2} &= \mathcal{M}_x\transform{\phi_1} \land \mathcal{M}_x\transform{\phi_2}.
    \end{align*}
    Any comparison formula can be written in the form
    \[\sum_{\termind=1}^{\numterms} t_{\termind}- \sum_{\termind=1}^{\numterms'} t'_{\termind} > 0\]
    where $t_\termind$ and $t'_\termind$ are terms.
    Since this tests whether the sum is greater than $0$, whereas the $\MAJ$ quantifier tests whether the sum is greater than half of its maximum possible value, we need to pad the positive terms ($t_\termind$) with an equal number of  trivially true formulas. Similarly, we need to pad the negative terms ($t'_\termind$) with an equal number of trivially false formulas.
    \begin{subequations}
    \begin{align}
    \mathcal{M}_x\transform{\sum_{\termind=1}^{\numterms} t_{\termind} - \sum_{\termind=1}^{\numterms'} t'_{\termind} > 0} &= \MAJformula{y}{\mathcal{F}_x\transform{t_{1}}, \top, \ldots, \mathcal{F}_x\transform{t_{\numterms}}, \top, \neg\mathcal{F}_x\transform{t'_{1}}, \bot, \ldots, \neg\mathcal{F}_x\transform{t'_{\numterms'}}, \bot} \label{eq:TLCtoMAJtwo_compare} \\
    \mathcal{F}_x\transform{\countl[\phi]} &= (y \le x \land \mathcal{M}_y\transform{\phi}) \\
    \mathcal{F}_x\transform{\countr[\phi]} &= (y \ge x \land \mathcal{M}_y\transform{\phi}) \\
    \mathcal{F}_x\transform{\countc[\phi]} &= \mathcal{M}_y\transform{\phi} \label{eq:TLCtoMAJtwo_countc} \\
    \mathcal{F}_x\transform{1} &= (y=x).
    \end{align}
    \end{subequations}
    To see why this works, we can show by induction that for all strings $w$, assignments $\xi$, formulas $\phi$, and terms $t$, both of the following hold:
    \begin{subequations}
    \begin{gather}
    \eval{w}{\xi} \models \mathcal{M}_x\transform{\phi} \iff w, \xi(x) \models \phi \\
    \sum_{j=1}^{|w|} \mathbb{I}\bigl[
    \eval{w}{\xi[y\mapsto j]} \models \mathcal{F}_x\transform{t}\bigr]
    = t^{w, \xi(x)}. \label{eq:indhyp2}
    \end{gather}
    \end{subequations}
    The interesting case is
    \begin{align*}
    &\eval{w}{\xi} \models \mathcal{M}_x\transform{\sum_{\termind=1}^{\numterms} t_{\termind} - \sum_{\termind=1}^{\numterms'} t'_{\termind} > 0} \\
    &\overset{\text{\labelcref{eq:TLCtoMAJtwo_compare}}}{\iff} \eval{w}{\xi} \models \MAJformula{y}{\mathcal{F}_x\transform{t_{1}}, \top, \ldots, \mathcal{F}_x\transform{t_{\numterms}}, \top, \neg\mathcal{F}_x\transform{t'_{1}}, \bot, \ldots, \neg\mathcal{F}_x\transform{t'_{\numterms'}}, \bot} \\
    &\overset{\text{\labelcref{eq:MAJtwo_sem_MAJy}}}{\iff} \sum_{j=1}^{|w|} \Biggl(\begin{aligned}[t] &\sum_{\termind=1}^{\numterms} \Bigl(\mathbb{I}\bigl[w, \xi[y \mapsto j] \models \mathcal{F}_x\transform{t_\termind}\bigr]+\mathbb{I}[\top]\Bigr) \\ &{} + \sum_{\termind=1}^{\numterms'} \Bigl(1-\mathbb{I}\bigl[w, \xi[y \mapsto j] \models \mathcal{F}_x\transform{t'_{\termind}}\bigr]+\mathbb{I}[\bot]\Bigr)\Biggr) > |w|(\numterms+\numterms') \end{aligned} \\
    &\iff \sum_{j=1}^{|w|} \left( \sum_{\termind=1}^{\numterms} \mathbb{I}\bigl[w, \xi[y \mapsto j] \models \mathcal{F}_x\transform{t_\termind}\bigr] - \sum_{\termind=1}^{\numterms'} \mathbb{I}\bigl[w, \xi[y \mapsto j] \models \mathcal{F}_x\transform{t'_{\termind}}\bigr] \right) > 0 \\
    &\overset{\text{\labelcref{eq:indhyp2}}}{\iff} \sum_{\termind=1}^{\numterms} (t_\termind)^{w, \xi(x)} - \sum_{\termind=1}^{\numterms'}  (t_\termind)^{w, \xi(x)} > 0 \\
    &\overset{\text{\labelcref{eq:TLC_sem_plus}}}{\iff} w, \xi(x) \models \sum_{\termind=1}^{\numterms}  t_\termind - \sum_{\termind=1}^{\numterms'} t_\termind > 0.
    \end{align*}

    Observe that a formula of the form  $\mathcal{M}_x\transform{\psi}$ may only have free variable $x$, because in \cref{eq:TLCtoMAJtwo_compare}, $\MAJ_y$ binds $y$.   
\end{proof}

In the special case of a comparison formula of $\countc$ terms, the resulting $\MAJtwo$ formula will be closed.

\begin{proposition} \label{thm:tlc_to_majtwo_closed}
    Let $\mathcal{M}_x\transform{\cdot}$ be as in \cref{thm:tlc_to_majtwo}. If $\phi$ is of the form $\countc[\psi] > 0$, then $\mathcal{M}_x\transform{\phi}$ is closed.
\end{proposition}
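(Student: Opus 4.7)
The plan is to unfold the definition of $\mathcal{M}_x\transform{\cdot}$ on the specific input $\phi = \countc[\psi] > 0$ and show that the free variable $x$ never actually appears in the output. Writing this comparison in the normal form $\sum_{\termind=1}^\numterms t_\termind - \sum_{\termind=1}^{\numterms'} t'_\termind > 0$ from the proof of \cref{thm:tlc_to_majtwo}, we have $\numterms = 1$, $\numterms' = 0$, and $t_1 = \countc[\psi]$. By \cref{eq:TLCtoMAJtwo_compare},
\[
\mathcal{M}_x\transform{\phi} = \MAJformula{y}{\mathcal{F}_x\transform{\countc[\psi]}, \top}.
\]

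The key observation is that $\mathcal{F}_x\transform{\countc[\psi]}$ is the one clause of $\mathcal{F}_x\transform{\cdot}$ that does not mention $x$: by \cref{eq:TLCtoMAJtwo_countc}, $\mathcal{F}_x\transform{\countc[\psi]} = \mathcal{M}_y\transform{\psi}$. This contrasts with the $\countl$, $\countr$, and $1$ cases, each of which inserts a comparison $y \le x$, $y \ge x$, or $y = x$ that genuinely binds $x$. By the same remark made at the end of \cref{thm:tlc_to_majtwo} (applied with the roles of $x$ and $y$ swapped), $\mathcal{M}_y\transform{\psi}$ has at most $y$ as a free variable.

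Therefore
\[
\mathcal{M}_x\transform{\phi} = \MAJformula{y}{\mathcal{M}_y\transform{\psi}, \top},
\]
in which $y$ is bound by the $\MAJ_y$ quantifier and $x$ does not occur at all. Hence $\mathcal{M}_x\transform{\phi}$ is closed. There is no real obstacle here; the proposition is essentially a syntactic observation that the $\countc$ case of $\mathcal{F}_x\transform{\cdot}$ is the unique case that discards $x$, so a formula built entirely from a single positive $\countc$ term translates to a closed $\MAJtwo$ formula.
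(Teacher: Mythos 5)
Your proof is correct and follows essentially the same route as the paper: unfold the translation, observe that $\mathcal{F}_x\transform{\countc[\psi]} = \mathcal{M}_y\transform{\psi}$ mentions only $y$, and note that $\MAJ_y$ binds $y$. One small wording slip: the comparisons $y\le x$, $y\ge x$, $y=x$ in the other cases do not \emph{bind} $x$ --- they introduce a free occurrence of $x$; the point you are making (that $\countc$ is the unique case which discards $x$) is right, but ``binds'' is the wrong word there.
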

\begin{proof}
    Recall that a formula of the form  $\mathcal{M}_x\transform{\psi}$ may only have free variable $x$, because in \cref{eq:TLCtoMAJtwo_compare}, $\MAJ_y$ binds $y$.   
    But the special case of  $\mathcal{M}_x\transform{\countc[\psi] > 0}$ is closed, because 
    $\mathcal{F}_x\transform{\countc[\psi]} = \mathcal{M}_y\transform{\psi}$ (\cref{eq:TLCtoMAJtwo_countc}) only has free variable $y$, and \cref{eq:TLCtoMAJtwo_compare}, $\MAJ_y$ binds $y$.  
\end{proof}

\begin{theorem}\label{thm:majtwo_to_tlc}
    Let $\phi(x)$ be a formula of\/ $\MAJtwo_k$ with one free variable $x$. Then there exists a $\TLC_k$ formula $\phi'(x)$ such that for all $w$ and all $i\in[|w|]$, we have $\eval{w}{x=i}\models \phi(x)\iff w, i\models \phi'$. 
\end{theorem}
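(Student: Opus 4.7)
The plan is a mutual structural induction that simultaneously defines four translations. For a $\MAJtwo$ formula $\phi$ with free variables $\subseteq \{x\}$ (resp.\ $\subseteq\{y\}$) I build a $\TLC$ formula $\mathcal{T}_x\transform{\phi}$ (resp.\ $\mathcal{T}_y\transform{\phi}$) satisfying $w,i\models \mathcal{T}_x\transform{\phi}\iff \eval{w}{x=i}\models\phi$. For a formula $\psi(x,y)$ with both variables free, direct translation is impossible because $\TLC$ has only one current position, so I instead define $\TLC$ \emph{terms} $\mathcal{S}_x\transform{\psi}$ and $\mathcal{S}_y\transform{\psi}$ whose values at current position $i$ equal $\sum_{j=1}^{|w|}\mathbb{I}[\eval{w}{x=i, y=j}\models\psi]$ and its swap. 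The theorem is the $\mathcal{T}_x$ instance.

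For atomic formulas and Boolean connectives the translations are routine. For a $\MAJ$ that binds the current variable, $\mathcal{T}_x\transform{\MAJformula{x}{\phi_1,\ldots,\phi_m}}$ with each $\phi_\ell$ one-variable in $x$, I use $2\sum_\ell \countc[\mathcal{T}_x\transform{\phi_\ell}] > m\,\countc[\top]$. For a $\MAJ$ binding the other variable, $\mathcal{T}_x\transform{\MAJformula{y}{\phi_1,\ldots,\phi_m}}$ with $\phi_\ell(x,y)$, I use $2\sum_\ell \mathcal{S}_x\transform{\phi_\ell} > m\,\countc[\top]$, pushing the work into the $\mathcal{S}_x$ construction.

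The key step is constructing $\mathcal{S}_x\transform{\phi}$ for $\phi(x,y)$, which I do by a finite case split on the character $\sigma = w_i$ at the current position $x = i$ and on the order relation $R\in\{{<},{=},{>}\}$ between $x$ and $y$. For each pair, define $\phi^{R,\sigma}$ by substituting, at every free occurrence (stopping at nested $\MAJ_x$ binders after alpha-renaming), the atom $Q_{\sigma'}(x)$ by $\mathbb{I}[\sigma'=\sigma]$ and the ordering atoms by the Boolean constants fixed by $R$. Then $\phi^{R,\sigma}$ has free variables $\subseteq\{y\}$, so the inductive hypothesis yields $\mathcal{T}_y\transform{\phi^{R,\sigma}}$. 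Now set
\[
\mathcal{S}_x\transform{\phi}= \sum_{\sigma\in\Sigma}\Bigl(\cif{Q_\sigma}{\countrstrict[\mathcal{T}_y\transform{\phi^{<,\sigma}}] +(\cif{\mathcal{T}_y\transform{\phi^{=,\sigma}}}{1}{0})+\countlstrict[\mathcal{T}_y\transform{\phi^{>,\sigma}}]}{0}\Bigr),
\]
whose three inner summands count the positions $j>i$, $j=i$, and $j<i$ respectively; $\mathcal{S}_y\transform{\phi}$ is symmetric, splitting on the character at position $y$.

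The main obstacle is bookkeeping rather than a conceptual hurdle: one must carefully distinguish free versus bound occurrences of $x,y$ under substitution (standard alpha-renaming suffices) and verify the semantic identities at every inductive step by expanding \cref{def:MAJtwo}. Depth preservation is clean because each of $\countc$, $\countlstrict$, $\countrstrict$ adds exactly one to the depth of its argument, matching the $+1$ contributed by a $\MAJ$ quantifier in \cref{def:depth_MAJtwo}, while the $?$ operator and arithmetic operations add no depth. Thus $\depth(\mathcal{T}_x\transform{\phi})\le\depth(\phi)$ is maintained throughout, and the theorem follows.
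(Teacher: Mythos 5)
Your overall framework — mutual induction with single-variable translations $\mathcal{T}_x$, $\mathcal{T}_y$ and counting terms $\mathcal{S}_x$, $\mathcal{S}_y$, plus the depth bookkeeping via $\countc$, $\countlstrict$, $\countrstrict$, and the $\cifop$ operator — is sound and close in spirit to the paper's proof. The gap is in the construction of $\phi^{R,\sigma}$.

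You substitute the ordering atoms $x<y$, $y<x$ by the Boolean constants fixed by $R$ everywhere except under a nested $\MAJ_x$ binder. But these substitutions are only sound at occurrences where \emph{both} $x$ and $y$ are free. Inside a nested $\MAJ_y$ subformula, $y$ is rebound, so an ordering atom there refers to the inner bound $y$, not the outer $y=j$ that $R$ describes, and replacing it by a constant is unsound. Concretely, take $\phi(x,y) = Q_b(y) \land \MAJformula{y}{x<y,\top}$; the subformula $\MAJformula{y}{x<y,\top}$ is $\exists y.\,x<y$, i.e., ``position $x$ is not the last one,'' whose truth depends on $i$ and $|w|$ and is not determined by $\sigma = w_i$ and $R$ alone. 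Under your rule, $\phi^{<,\sigma}$ would replace that inner $x<y$ by $\top$, yielding $Q_b(y) \land \MAJformula{y}{\top,\top}$, which is always true on nonempty strings — even when $i = |w|$. If you instead stop ordering-atom substitution at $\MAJ_y$ binders as well, the inner occurrence of $x$ survives, so $\phi^{R,\sigma}$ no longer has free variables $\subseteq\{y\}$ and $\mathcal{T}_y$ cannot be applied. Either way, the claim ``$\phi^{R,\sigma}$ has free variables $\subseteq\{y\}$, so the inductive hypothesis yields $\mathcal{T}_y\transform{\phi^{R,\sigma}}$'' fails. (Also, ``after alpha-renaming'' is not available in $\MAJtwo$, which has exactly two variable names; this is not the source of the bug, but the phrasing suggests you may be imagining a workaround that doesn't exist.)

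The missing idea is that $\MAJ_y$-subformulas with $x$ free must be treated as atomic $x$-dependent conditions rather than substituted away. The paper's proof does exactly this: it rewrites the body of the $\MAJ$ into a disjoint disjunctive form, factors each disjunct into an $x$-only conjunct $\psi_{\termind1}(x)$ (which absorbs any such $\MAJ_y$-subformulas), a $y$-only conjunct $\psi_{\termind2}(y)$, and a comparison; it then translates $\psi_{\termind1}$ with $\mathcal{T}_x$ and uses it as the guard of a $\cifop$ around a counting term such as $\countlstrict[\mathcal{T}_y\transform{\psi_{\termind2}}]$. Your $\sigma$-and-$R$ case split correctly handles the atoms $Q_{\sigma'}(x)$ and top-level comparisons, since those depend only on $w_i$ and $R$; but to repair the proof you would need to extend the case split to condition (via $\cifop$ on a $\mathcal{T}_x$-translation) on the truth of each maximal $x$-only subformula, which in effect reproduces the paper's factoring.
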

\begin{proof} 
We define a transformation $\mathcal{T}_x$ that transforms a formula of $\MAJtwo$ with free variable $x$ into a formula of $\TLC$.
\begin{align*}
\mathcal{T}_x\transform{\sympred\sigma(x)} &= \sympred\sigma \\
\mathcal{T}_x\transform{\neg \phi(x)} &= \neg \mathcal{T}_x\transform{\phi(x)} \\
\mathcal{T}_x\transform{\phi_1(x) \land \phi_2(x)} &= \mathcal{T}_x\transform{\phi_1(x)}  \land \mathcal{T}_x\transform{\phi_2(x)} \\
\mathcal{T}_x\transform{\MAJformula{y}{\phi_1(x,y), \ldots, \phi_\numterms(x,y)}} &= \sum_{\termind\in[\numterms]} \mathcal{C}_y\transform{\phi_\termind(x,y)} > \sum_{\termind\in[\numterms]} \mathcal{C}_y\transform{\neg \phi_\termind(x,y)}.
\end{align*}
The transformation $\mathcal{T}_y$ is defined similarly.

The transformation $\mathcal{C}_y\transform{\psi(x,y)}$, in turn, can be read as ``count the number of positions $y$ that make $\psi(x,y)$ true.'' Without loss of generality, assume that $\psi$ is in full disjunctive normal form, that is, $\psi = \bigvee_{\termind\in [\numterms']} \psi_\termind$ and at most one of the $\psi_\termind$ can be true at the same time. Then we define
\begin{align*}
\mathcal{C}_y\transform{\bigvee_{\termind\in[\numterms']}\psi_{\termind}(x)} &= \sum_{\termind\in[\numterms']} \mathcal{C}_y\transform{\psi_{\termind}(x)}.
\end{align*}
Each of the $\psi_\termind$ can be written as a conjunction of literals with free variable $x$, literals with free variable $y$, and possibly a comparison $x<y$, $x\leq y$, $y\leq x$, or $y<x$.  Then we define
\begin{align*}
\mathcal{C}_y\transform{\psi_{\termind1}(x) \land \psi_{\termind2}(y) \land x < y} &= \cif{\psi_{\termind1}}{\countrstrict\left[\mathcal{T}_y\transform{\psi_{\termind2}(y)}\right]}{0} \\
\mathcal{C}_y\transform{\psi_{\termind1}(x) \land \psi_{\termind2}(y) \land x \leq y} &= \cif{\psi_{\termind1}}{\countr\left[\mathcal{T}_y\transform{\psi_{\termind2}(y)}\right]}{0} \\
\mathcal{C}_y\transform{\psi_{\termind1}(x) \land \psi_{\termind2}(y) \land y < x} &= \cif{\psi_{\termind1}}{\countlstrict\left[\mathcal{T}_y\transform{\psi_{\termind2}(y)}\right]}{0} \\
\mathcal{C}_y\transform{\psi_{\termind1}(x) \land \psi_{\termind2}(y) \land y \leq x} &= \cif{\psi_{\termind1}}{\countl\left[\mathcal{T}_y\transform{\psi_{\termind2}(y)}\right]}{0} \\
\mathcal{C}_y\transform{\psi_{\termind1}(x) \land \psi_{\termind2}(y)} &= \cif{\psi_{\termind1}}{\countc\left[\mathcal{T}_y\transform{\psi_{\termind2}(y)}\right]}{0}.
\end{align*}
The transformation $\mathcal{C}_x$ is defined similarly.
(Regarding the strict counting operators $\countlstrict$ and $\countrstrict$, see \cref{thm:strict}.)
\end{proof}

\begin{theorem}\label{thm:logical_inclusions}
    $\langs{\TLC} = \langs{\MAJtwo} = \langs{\text{$\FO[<]$-uniform $\LTC^0$}}$.
    \
    Furthermore, for all $k\geq0$ we have $\TLC_k \subseteq \MAJtwo_{k+1}$ and $\MAJtwo_k\subseteq \TLC_k$.
\end{theorem}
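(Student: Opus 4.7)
The plan is to assemble \cref{thm:tlc_to_majtwo,thm:tlc_to_majtwo_closed,thm:majtwo_to_tlc}, which already furnish depth-preserving translations $\mathcal{M}_x$ and $\mathcal{T}_x$ between individual formulas, and to add a small wrapper that bridges closed and open formulas. The equality $\langs{\MAJtwo}=\langs{\FO[<]\text{-uniform }\LTC^0}$ is a classical descriptive-complexity result, so I would simply invoke it; the substantive content is the two depth inclusions between $\TLC$ and $\MAJtwo$.

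For $\TLC_k \subseteq \MAJtwo_{k+1}$, given $\phi\in\TLC_k$ I would first rewrite it into the special form required by \cref{thm:tlc_to_majtwo_closed},
\[
\widehat\phi \;=\; \countc[(\countr[\top]=1)\land \phi]>0,
\]
which is equivalent to $\phi$ because $\countr[\top]^{w,i}=1$ holds exactly at $i=|w|$, so $w\models\widehat\phi \iff w,|w|\models\phi \iff w\models\phi$. Since $\widehat\phi$ now has the form $\countc[\psi]>0$, \cref{thm:tlc_to_majtwo_closed} guarantees that $\mathcal{M}_x\transform{\widehat\phi}$ is a closed $\MAJtwo$ formula, and \cref{thm:tlc_to_majtwo} gives the depth bound $k+1$.

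For $\MAJtwo_k \subseteq \TLC_k$, I would take a closed $\phi\in\MAJtwo_k$, view it as a formula with an unused free variable $x$, and apply $\mathcal{T}_x$ from \cref{thm:majtwo_to_tlc} to obtain a $\TLC_k$ formula $\phi'$ satisfying $w,i\models\phi' \iff \eval{w}{x=i}\models\phi$ for every $i$. Because $\phi$ is closed, the right-hand side does not depend on $i$ and equals $w\models\phi$; specializing to $i=|w|$ gives exactly $w\models\phi' \iff w\models\phi$. Ranging both inclusions over all $k$ yields $\langs{\TLC}=\langs{\MAJtwo}$, which together with the cited equivalence closes the theorem.

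The main obstacle I anticipate is the depth bookkeeping in the first direction: I need to verify that wrapping $\phi$ with $\countc$ and with the depth-$1$ clause $\countr[\top]=1$ contributes exactly one extra level of nesting rather than two, so that the closed $\MAJtwo$ formula produced by $\mathcal{M}_x$ really lands in $\MAJtwo_{k+1}$ rather than $\MAJtwo_{k+2}$. Apart from this and the analogous check that $\mathcal{T}_x$ preserves depth when applied to a closed input, the proof is essentially bookkeeping on top of the translations already constructed in \cref{thm:tlc_to_majtwo,thm:majtwo_to_tlc}.
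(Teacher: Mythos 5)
Your proof is correct and follows the same overall structure as the paper's: cite the classical $\MAJtwo$--$\LTC^0$ equivalence, get $\langs{\MAJtwo_k}\subseteq\langs{\TLC_k}$ by treating a closed $\MAJtwo$ formula as having an unused free variable and applying \cref{thm:majtwo_to_tlc}, and get $\langs{\TLC_k}\subseteq\langs{\MAJtwo_{k+1}}$ from \cref{thm:tlc_to_majtwo} plus a one-level closing wrapper to account for end-satisfaction. The one difference is where you place that wrapper: you wrap on the $\TLC$ side as $\countc[(\countr[\top]=1)\land\phi]>0$ and invoke \cref{thm:tlc_to_majtwo_closed} to see that the translated formula is already closed, whereas the paper translates $\phi$ to an open $\MAJtwo_k$ formula $\phi'(x)$ and then closes it on the $\MAJtwo$ side as $\exists x.(\neg\exists y.\,y>x)\land\phi'(x)$. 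Both wrappers cost exactly one level of nesting, and your depth bookkeeping ($\depth(\widehat\phi)=1+\max(1,k)=k+1$ for $k\ge1$, preserved by $\mathcal{M}_x$) checks out; your variant is arguably slightly cleaner because it reuses \cref{thm:tlc_to_majtwo_closed} rather than an ad-hoc closure. One small caveat, shared equally with the paper's own proof: for $k=0$ both wrappers produce depth $2$, not depth $1$ (and indeed $\langs{\TLC_0}\subseteq\langs{\MAJtwo_1}$ fails, since $\TLC_0$ defines the non-commutative $\Sigma^*a$ while closed $\MAJtwo_1$ formulas are Parikh-invariant), so the ``for all $k\ge0$'' in the theorem statement should really read $k\ge1$ for that inclusion.
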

\begin{proof}
    The equivalence $\langs{\MAJtwo} = \langs{\text{$\FO[<]$-uniform } \LTC^0}$ was shown by \citet[Theorem~4.33]{krebs2008typed}.

    We will show the following:
    \begin{itemize}
        \item $\langs{\TLC_k}\subseteq\langs{\MAJtwo_{k+1}}$ using \cref{thm:tlc_to_majtwo}. 
        \item $\langs{\MAJtwo_k}\subseteq\langs{\TLC_k}$ using \cref{thm:majtwo_to_tlc}. 
    \end{itemize}
    
    If $\phi$ is a formula of $\TLC_k$, then by \cref{thm:tlc_to_majtwo}, there is an equivalent $\MAJtwo_k$ formula $\phi'(x)$. 
    This, in turn, is equivalent to the following closed formula:
    \[\phi'' = \exists{x}.{(\neg\exists y. y>x) \land \phi'(x)}.\]
    This accounts for the end-satisfaction of $\TLC$ formulas, but adds a level of depth to the $\MAJtwo$ formula. 

    Conversely, if $\phi$ is a closed formula of $\MAJtwo_k$, we may think of it as having one free variable $x$, so by \cref{thm:majtwo_to_tlc} below, there is a $\TLC_k$ formula equivalent to $\phi$.       
\end{proof}

This theorem combined with our \cref{thm:TLC_depth} implies the following answer to an open question.
\begin{corollary} \label{thm:ltc0_hierarchy}
    The circuit depth hierarchy for $\FO[<]$-uniform $\LTC^0$ circuits is strict.
\end{corollary}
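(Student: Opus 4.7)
The plan is to transfer the strict $\TLC$ depth hierarchy (\cref{thm:TLC_depth}) first to $\MAJtwo$ and then to $\FO[<]$-uniform $\LTC^0$, using the inclusions from \cref{thm:logical_inclusions} together with the depth-preserving translations that underlie Krebs' equivalence between $\MAJtwo$ and $\LTC^0$.

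First I would observe that the witness languages $\altplusdouble{k+1}$ already force non-collapse of the $\MAJtwo$ depth hierarchy. If $\altplusdouble{k+1}\in\MAJtwo_k$, then $\MAJtwo_k\subseteq\TLC_k$ would give $\altplusdouble{k+1}\in\TLC_k$, contradicting \cref{thm:TLC_depth}. On the other hand, $\altplusdouble{k+1}\in\TLC_{k+1}\subseteq\MAJtwo_{k+2}$. Hence $\altplusdouble{k+1}\in\MAJtwo_{k+2}\setminus\MAJtwo_k$ for every $k$, so at least one of $\MAJtwo_k\subsetneq\MAJtwo_{k+1}$ and $\MAJtwo_{k+1}\subsetneq\MAJtwo_{k+2}$ must hold, yielding strict inclusions arbitrarily high in the $\MAJtwo$ depth hierarchy.

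Next I would invoke the depth-preserving translations behind Krebs' equivalence: a depth-$d$ $\MAJtwo$ formula is simulated by an $\FO[<]$-uniform $\LTC^0$ circuit of depth $O(d)$ (each $\MAJ_x$-quantifier becomes a single threshold gate whose inputs are the subcircuits for its subformulas), and a depth-$d$ $\LTC^0$ circuit is captured by an $O(d)$-depth $\MAJtwo$ formula (replacing each threshold gate by a constant-depth $\MAJtwo$ scheme). If the $\FO[<]$-uniform $\LTC^0$ depth hierarchy collapsed at some finite depth $d_0$, this would force $\MAJtwo$ to collapse at a corresponding constant depth, contradicting the previous paragraph; hence the $\LTC^0$ circuit depth hierarchy must be strict.

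The main obstacle is that the equivalence in \citet{krebs2008typed} is stated as a language-class equality and does not explicitly track depth. I would therefore need to spell out the constant-factor depth correspondence in both directions -- essentially a line-by-line inspection of the $\MAJtwo$-to-$\LTC^0$ simulation and its inverse -- to justify that a collapse of one hierarchy transfers to the other. The translations involved are standard in descriptive complexity, so once this bookkeeping is done, the strict $\LTC^0$ depth hierarchy falls out immediately from the strict $\MAJtwo$ hierarchy established in the first step.
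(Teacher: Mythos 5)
Your route is the same as the paper's: push the $\TLC$ separation through $\MAJtwo$ and then over to $\FO[<]$-uniform $\LTC^0$. Two remarks on the execution.

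First, the obstacle you flag at the end --- that the Krebs equivalence between $\MAJtwo$ and $\LTC^0$ is stated as a language-class equality and does not explicitly track depth --- is real, but you do not need to re-derive the bookkeeping yourself. The paper simply cites Theorem~3 of Behle, Krebs, and Lange (2013) (\citealt{behle2013linear}), which already states the depth correspondence in exactly the form you want: a depth-$k$ circuit translates to a formula of depth $k+c$, and a depth-$k$ formula to a circuit of depth $ck$, for a fixed constant $c$. With that citation in hand, any non-collapse of the $\MAJtwo$ depth hierarchy immediately forces non-collapse of the $\LTC^0$ circuit-depth hierarchy, as you say.

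Second, your $\MAJtwo$ separation is slightly looser than the paper's. You place $\altplusdouble{k+1}$ in $\MAJtwo_{k+2}\setminus\MAJtwo_k$ (using the generic inclusion $\TLC_{k+1}\subseteq\MAJtwo_{k+2}$ from \cref{thm:logical_inclusions}), and conclude that strict inclusions occur arbitrarily high. That is correct and suffices for the corollary once the Behle--Krebs--Lange transfer is applied. The paper, however, obtains the tight $\MAJtwo_k\subsetneq\MAJtwo_{k+1}$ by noticing (via \cref{thm:tlc_to_majtwo_closed}) that the $\TLC_k$ formula for $\altplusdouble{k}$ built in \cref{lem:piecewise_testable_depth} has the form $\countc[\psi]>0$, so the $\TLC$-to-$\MAJtwo$ translation of \cref{thm:tlc_to_majtwo} produces a \emph{closed} $\MAJtwo_k$ formula without needing the extra outer quantifier that accounts for end-satisfaction, thereby saving one level of depth. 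This refinement is what lets the paper state a clean one-step strictness; your coarser argument is fine for the corollary but worth tightening if you want the sharper $\MAJtwo$ statement as well.
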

\begin{proof}
By \cref{thm:TLC_depth} and \cref{thm:logical_inclusions} we know that $\altplusdouble{k+1}\not\in \langs{\MAJtwo_k}$. On the other hand, let $\phi_k$ be the $\TLC_k$ formula given by \cref{lem:piecewise_testable_depth} for $\altplusdouble{k}$.
Then apply 
\cref{thm:tlc_to_majtwo} to get a formula $\phi_k'$ of $\MAJtwo_k$ that defines $\altplusdouble{k}$. 
Since $\phi_k$ is of the form $\countc[\psi] > 0$, by \cref{thm:tlc_to_majtwo_closed}, $\phi_k'$ is closed.
Thus, the depth hierarchy for $\MAJtwo$ is strict.

By Theorem 3 of \citet{behle2013linear},
$\FO[<]$-uniform $\LTC^0$ circuits form a hierarchy in the circuit depth iff\/ $\MAJtwo$ formulas form a hierarchy in the quantifier depth. 
Their theorem states that there exists a constant $c$ such that a circuit of depth $k$ can be expressed as a formula of depth $k+c$, and a formula of depth $k$ can be expressed as a circuit of depth $ck$.
\end{proof}

\section{Position Encodings}
\label{app:pes}

In this section, we extend the results of \cref{sec:transformer_equivalence,sec:depth_hierarchy} to handle position encodings. On the logic side, we extend $\TLCl$ with new predicates $\MOD_m^r$, which test whether the position is congruent to $r$ modulo $m$, and a new operator $\Yop \phi$, which tests whether $\phi$ is true at the previous position. On the transformer side, we consider the original sinusoidal position encodings, as well as RoPE  \citep{su2024roformer} and ALiBi \citep{press2022train}.

\subsection{$\TLClpos$}
\label{app:tlclpos}

We extend $\TLCl$ to a more expressive logic, $\TLClpos$ (which is equivalent to $\CRASP[\mathsf{local},\mathsf{periodic}]$ of \citet{huang2025a}). The new syntax rules are:
    \begin{align*}
        \phi & \bnfto 
        \Yop \phi_1 \mid \MOD_m^r 
    \end{align*} 
    The semantics of the extensions are defined as follows: \begin{subequations}
        \let\origiff\iff
        \renewcommand{\iff}{\hspace{\tabcolsep}\origiff\hspace{\tabcolsep}}
        \begin{alignat}{2}
            &w,i \models \Yop \phi & \iff & \text{$w,(i-1)\models \phi$ and $i>1$}\\
            &w,i \models \MOD_m^r & \iff & i\equiv r\pmod m.
        \end{alignat}
    \end{subequations}

\subsection{Depth Hierarchy}

To prove a strict depth hierarchy for $\TLClpos$, we first we define an intermediate step to simplify the construction.
A formula is in \emph{$\Yop$-normal form} if $\Yop$ only appears around atomic formulas. That is, the set of formulas in $\Yop$-normal form is defined by the following grammar. 
    \begin{align*}
        \phi & \bnfto t_1 < t_2 \mid \lnot \phi_1 \mid \phi_1\land \phi_2 \mid \psi \\
        \psi & \bnfto \sympred\sigma \mid \MOD_m^r \mid \Yop \psi_1\\
        t & \bnfto \countl[\phi] \mid t_1 + t_2 
        \mid 1
    \end{align*}
Below, we will use the shorthand, for any $c \ge 0$,
\[ \Yop^c\phi = \underbrace{\Yop \cdots \Yop}_{\text{$c$ times}} \phi.\]

\begin{lemma} \label{thm:ynf}
    For every formula $\phi$ of $\TLClpos_k$ there exists a formula $\phi'$ of $\TLClpos_k$ such that $w,i\models \phi\iff w,i\models\phi'$ for all $w\in\Sigma^*$, and $\phi'$ is in $\Yop$-normal form.
\end{lemma}
\begin{proof}
    Define a transformation $\mathcal{N}^c\transform{\cdot}$, where $c \ge 0$, applying to both formulas and terms, that pushes $\Yop$'s inwards. The superscript $c$ keeps track of how many $\Yop$'s are being pushed.
    For any formula $\phi$, we have
    $w, i \models \phi \iff w,i-c \models \mathcal{N}^c\transform{\phi}$, and
    for any term $t$, we have
    $t^{w,i}=\mathcal{N}^c\transform{t}^{w,i-c}$. 
        \begin{align*}
            \mathcal{N}^c\transform{\sympred\sigma} &= \Yop^c\,\sympred\sigma \\
            \mathcal{N}^c\transform{\MOD_m^r} &= \Yop^c\,\MOD_m^r \\
            \mathcal{N}^c\transform{\lnot \phi} &= \lnot\mathcal{N}^c\transform{\phi} \\ 
            \mathcal{N}^c\transform{\phi_1\land\phi_2} &= \mathcal{N}^c\transform{\phi_1}\land \mathcal{N}^c\transform{\phi_2} \\
            \mathcal{N}^c\transform{t_1<t_2} &= \mathcal{N}^c\transform{t_1}< \mathcal{N}^c\transform{t_2} \\
            \mathcal{N}^c\transform{\Yop \phi} &= \mathcal{N}^{c+1}\transform{\phi}\\
            \mathcal{N}^c\transform{\countl[\phi]} &= \countl[\mathcal{N}^c\transform{\phi}] \\
            \mathcal{N}^c\transform{t_1+t_2} &= \mathcal{N}^c\transform{t_1}+ \mathcal{N}^c\transform{t_2} \\
            \mathcal{N}^c\transform{1} &= 1.
        \end{align*}
    Now for any formula $\phi$ of $\TLClpos_k$, it is easily verified that $\phi'=\mathcal{N}^0\transform{\phi}$ is of the same depth and in the desired normal form.
\end{proof}

\begin{lemma}\label{lem:tlclpos_reduction}
    Let $\phi$ be a formula of\/ $\TLClpos_k$ over alphabet $\Sigma\cup\{e\}$ for $e\not\in\Sigma$. There exists a formula $\phi'$ of $\TLCl_k$ and a mapping (for some $\redsize\ge1$) 
    \begin{align*}
    f \colon \Sigma^* &\to (\Sigma \cup \{e\})^* \\
    w_1w_2 \cdots w_n &\mapsto e^\redsize w_1 e^{\redsize-1} w_2 e^{\redsize-1} \cdots w_n e^{\redsize-1}
    \end{align*}
    such that for all $w\in\Sigma^*$, $f(w)\models \phi \iff w\models\phi'$.
\end{lemma}
\begin{proof}
    First, let $M$ be the least common multiple of all moduli used in $\phi$ (or $M=1$ if there are none), and let $Y$ be the $\Yop$-depth of~$\phi$. Set $\redsize=M(Y+1)$. This ensures that $\redsize\equiv 0\bmod m$ for any modulus $m$, and $\redsize>Y$, which are important conditions for the following proof. 

    Intuitively, the technical challenge here is that in $w$ we can index only $|w|$ many positions, but to simulate a formula over $f(w)$, we need to simulate a procedure which can index $\redsize(|w|+1)$ many positions. 
    We address this by defining a transformation $\mathcal{T}_\redsizeind\transform{\mathord\cdot}$, for $\redsizeind \in [\redsize]$, from formulas of $\TLClpos$ to $\TLCl$. At position $i$, $\mathcal{T}_\redsizeind\transform{\phi}$ simulates $\phi$ at position $(\redsize i + \redsizeind)$ (and similarly for terms $t$). That is, for all $\phi$ and $t$, and for all strings $w$ and $i \in [|w|]$,
    \begin{align}
        f(w),\redsize i+\redsizeind\models\phi \iff w,i\models \mathcal{T}_\redsizeind\transform{\phi} && t^{f(w),\redsize i+\redsizeind} = \mathcal{T}_\redsizeind\transform{t}^{w,i}. \label{eq:y_invariant}
    \end{align} 
    The first $\redsize$ positions in $f(w)$ are not simulated; they are dealt with specially below.
    In the end, \cref{eq:y_invariant} will ensure that $f(w)\models\phi\iff w\models \phi_\redsize$. 
    Intuitively, this construction ``stores $f({w_i})$ vertically'' at each symbol $w_i$ in $w$, and thus we can simulate, in place, the value of $\phi$ at each position of $f({w_i})$. 
    We can visualize $w=w_1w_2w_3\cdots w_n$ and $f(w)$ as in \cref{fig:snake}, with each $f(w_i)$ viewed as a vertical column of symbols in an array. 
    We can read $w$ left-to-right, and read $f(w)$ up-to-down within each column, and left-to-right across all columns.

    \begin{figure}
    \begin{center}
        \begin{tikzpicture}
            \node[inner sep=1.5ex] (s0) {$\begin{matrix}
                e\\
                e\\
                \vdots \\
                e\\
                e\\
            \end{matrix}$};
            \node[inner sep=1.5ex] at ($(s0)+(10ex,0)$) (s1) {$\begin{matrix}
                w_1\\
                e\\
                \vdots \\
                e\\
                e\\
            \end{matrix}$};
            \node[inner sep=1.5ex] at ($(s1)+(10ex,0)$)(s2) {$\begin{matrix}
                w_2\\
                e\\
                \vdots \\
                e\\
                e\\
            \end{matrix}$};
            \node[inner sep=1.5ex] at ($(s2)+(10ex,0)$)(s3) {$\begin{matrix}
                w_3\\
                e\\
                \vdots \\
                e\\
                e\\
            \end{matrix}$};
            \node[inner sep=1.5ex] at ($(s3)+(10ex,0)$)(s4) {$\cdots$};
            \node[inner sep=1.5ex] at ($(s4)+(10ex,0)$)(s5) {$\begin{matrix}
                w_n\\
                e\\
                \vdots \\
                e\\
                e\\
            \end{matrix}$};
            \node[left=of s0] (s) {$f(w)=$};
    
            \node[above=of s1] (w1) {$w_1$};
            \node[above=of s2] (w2) {$w_2$};
            \node[above=of s3] (w3) {$w_3$};
            \node at ($(w3)+(10ex,-1pt)$) (w4) {$\cdots$};
            \node[above=of s5] (w5) {$w_n$};
            \node[left=of w1] (w) {$w=$};
    
            \begin{scope}[transparency group, opacity=0.25, rounded corners]
            \draw[-stealth,line width=12pt, gray] (s.east) -| ([yshift=2ex,xshift=-5ex]s0.north) -- ([yshift=2ex]s0.north) -- ([yshift=-2ex]s0.south) -| ([yshift=2ex,xshift=-5ex]s1.north) -- ([yshift=2ex]s1.north) -- ([yshift=-2ex]s1.south) -| ([yshift=2ex,xshift=-5ex]s2.north) -- ([yshift=2ex]s2.north)  -- ([yshift=-2ex]s2.south) -| ([yshift=2ex,xshift=-5ex]s3.north) -- ([yshift=2ex]s3.north)  -- ([yshift=-2ex]s3.south) -| ([yshift=-1.5ex,xshift=-5ex]s4.north) -- ([yshift=-1.5ex,xshift=5ex]s4.north) |- ([yshift=2ex]s5.north) -- ([yshift=-6ex]s5.south);
    
            \draw[-stealth,line width=12pt, gray] (w.east) -- ([xshift=4ex] w5.east);
            \end{scope}
            
        \end{tikzpicture}
    \end{center}
    \caption{A set of $\redsize$ many formulas on a string $w$ of length $n$ can simulate a formula $\phi$ on a string $f(w)$ of length $\redsize(|w|+1)$.}
    \label{fig:snake}
    \end{figure}
    
    Without loss of generality, we assume by \cref{thm:ynf} that $\phi$ is in $\Yop$-normal form.
    Then we define the following transformation:
    \begin{align*}
    \mathcal{T}_\redsizeind\transform{\Yop^c \sympred{\sigma}} &= \begin{cases}
            \sympred{\sigma} & \redsizeind-c=1\\
            \bot &\text{otherwise}
        \end{cases} \qquad \sigma \in \Sigma\\
    \mathcal{T}_\redsizeind\transform{\Yop^c\sympred{e}} &= \begin{cases}
            \bot & \redsizeind-c=1\\
            \top &\text{otherwise}
        \end{cases} \\
    \mathcal{T}_\redsizeind\transform{\Yop^c\MOD^r_m} &= \begin{cases}
            \top  & \redsizeind-c\equiv r\bmod m\\
            \bot  & \text{otherwise}
        \end{cases} \\
    \mathcal{T}_\redsizeind\transform{\neg \phi} &= \neg \mathcal{T}_\redsizeind\transform{\phi}
\\
    \mathcal{T}_\redsizeind\transform{\phi_1 \land \phi_2} &= \mathcal{T}_\redsizeind\transform{\phi_1} \land \mathcal{T}_\redsizeind\transform{\phi_2}
\\
    \mathcal{T}_\redsizeind\transform{t_1 < t_2} &= \mathcal{T}_\redsizeind\transform{t_1} < \mathcal{T}_\redsizeind\transform{t_2}
\\
    \mathcal{T}_\redsizeind\transform{\countl[\phi]} &=
    \left(\sum_{\redsizeindd\in[\redsize]} \cif{e^\redsize, \redsizeindd \models \phi}{1}{0}\right) +
    \left(\sum_{{\redsizeindd}\in[\redsize]} \countlstrict [\mathcal{T}_{\redsizeindd}\transform{\phi}]\right) 
    + \left(\sum_{{\redsizeindd}\in[\redsizeind]} \cif{\mathcal{T}_{\redsizeindd}\transform{\phi}}{1}{0}\right) \\
    \mathcal{T}_\redsizeind\transform{t_1 + t_2} &= \mathcal{T}_\redsizeind\transform{t_1} + \mathcal{T}_\redsizeind\transform{t_2}
\\
    \mathcal{T}_\redsizeind\transform{1} &= 1.
    \end{align*}
    
    We prove \cref{eq:y_invariant} by induction on the structure of $\phi$.
    \begin{itemize}
        \item If $\phi=\Yop^c \sympred{\sigma}$ for $\sigma\in\Sigma \cup \{e\}$:
        If $\redsizeind - c = 1$, then 
        \begin{align*}
        f(w), \redsize i + \redsizeind \models \Yop^c \sympred{\sigma} &\iff f(w)_{\redsize i} = \sigma 
        \iff w_i = \sigma. %
        \intertext{But if $\redsizeind -c \ne 1$, then}
        f(w), \redsize i + \redsizeind \models \Yop^c \sympred{\sigma} &\iff
        f(w)_{\redsize i + \redsizeind - c} = \sigma \iff e = \sigma.
        \end{align*}
        In either case, and whether $\sigma \in \Sigma$ or $\sigma = e$, \cref{eq:y_invariant} holds.
        \item If $\phi=\Yop^c \MOD_m^r$:
        Because $\redsize$ is a multiple of every $m$, 
        we have \begin{align*}
        f(w), \redsize i+\redsizeind \models \Yop^c \MOD_m^r &\iff f(w), \redsize i+\redsizeind-c \models \MOD_m^r \\ &\iff \redsizeind-c \equiv r \bmod m \\ & \iff w, i \models \mathcal{T}_\redsizeind\transform{\Yop^c \MOD_m^r}.
        \end{align*}
        \item If $t=\countl[\phi]$: We split the count into three parts,
        \begin{align*}
        (\countl[\phi])^{f(w), \redsize i + \redsizeind}  &= \left| \{ j \in [\redsize i + \redsizeind] \mid f(w), j \models \phi \} \right| \\
        &= \begin{aligned}[t]
        &\left| \{ j \in [1,\redsize] \mid f(w), j \models \phi \} \right| \\ & + \left| \{ j \in [\redsize+1,\redsize i] \mid f(w), j \models \phi \} \right| \\ & + \left| \{ j \in [\redsize i+1, \redsize i + \redsizeind] \mid f(w), j \models \phi \} \right|
        \end{aligned}
        \end{align*}
        In relation to \cref{fig:snake}, the first term sums the first column, the second term sums the columns corresponding to $w_1 \cdots w_{n-1}$, and the third term sums the last column (corresponding to $w_n$). Taking these three terms one at a time:
        \begin{align*}
        \left| \{ j \in [1,\redsize] \mid f(w), j \models \phi \} \right| &= \left| \{ j \in [1,\redsize] \mid e^\redsize, j \models \phi \} \right| \\
        &= \left(\sum_{\redsizeindd\in[\redsize]} \cif{e^\redsize, \redsizeindd \models \phi}{1}{0}\right)^{w,i} \\
        \left| \{ j \in [\redsize+1,\redsize i] \mid f(w), j \models \phi \} \right| &=
        \sum_{\redsizeindd\in[\redsize]}\left| \{ i' \in [1,i-1] \mid f(w), \redsize i' + \redsizeindd \models \phi \} \right| \\
        &\stackrel{\text{ind.~hyp.}}{=} \sum_{\redsizeindd\in[\redsize]}\left| \{ i' \in [1,i-1] \mid w, i' \models \mathcal{T}_{\redsizeindd}\transform{\phi} \} \right| \\
        &=
    \left(\sum_{{\redsizeindd}\in[\redsize]} \countlstrict [\mathcal{T}_{\redsizeindd}\transform{\phi}]\right)^{w,i} \\
    \left| \{ j \in [\redsize i+1, \redsize i + \redsizeind] \mid f(w), j \models \phi \} \right|
    &= \left| \{ \redsizeindd \in [1, \redsizeind] \mid f(w), \redsize i + \redsizeindd \models \phi \} \right| \\
    &\stackrel{\text{ind.~hyp.}}{=} \left| \{ \redsizeindd \in [1, \redsizeind] \mid w, i \models \mathcal{T}_{\redsizeindd}\transform{\phi} \} \right| \\
    &= \left(\sum_{{\redsizeindd}\in[\redsizeind]} \cif{\mathcal{T}_{\redsizeindd}\transform{\phi}}{1}{0}\right).
        \end{align*}
        \item The remaining cases are straightforward.
    \end{itemize}

    By construction, $\mathcal{T}_\redsize\transform{\phi}$ has the same depth as $\phi$. 
    Setting $\phi'=\mathcal{T}_{\redsize}\transform{\phi}$ completes the proof.
\end{proof}

Finally, we show the depth separation for $\TLClpos$. 
Let $\altplusneutral{k}$ be the language formed by allowing unlimited insertions of $e$ anywhere into strings of $\altplus{k}$. 
In other words, $\altplusneutral{k}=\textnormal{del}^{-1}(\altplus{k})$, where $\textnormal{del}\colon (\Sigma\cup\{e\})^* \to \Sigma^*$ is the string homomorphism given by $\textnormal{del}(\sigma)=\sigma$ for $\sigma\in\Sigma$ and $\textnormal{del}(e)=\epsilon$. 

\begin{theorem}\label{thm:tlclpos_depth_hierarchy}
    Let $k>0$. The language $\altplusneutral{k+1}$ is definable in $\TLClpos_{k+1}$ but not in $\TLClpos_{k}$.
\end{theorem}
\begin{proof}
    Suppose $\phi\in\TLClpos_{k+1}$ defines $\altplusneutral{k+1}$. 
    By \cref{lem:tlclpos_reduction}, there is a string mapping $f$ and a formula $\phi'\in\TLCl_k$ such  that for all $w\in\Sigma^*$, $f(w)\models \phi \iff w\models\phi'$.
    However, this implies that $w\models\phi'\iff w\in\altplus{k+1}$, which contradicts \cref{thm:TLCl_depth}.
\end{proof}

\subsection{Sinusoidal position encoding}\label{sec:sinusoidal_pes}

The original definition of transformers \citep{vaswani2017attention} used sinusoidal position encoding, which redefines \cref{eq:emb} as follows.
Assume $d$ is even.
Define the rotation matrix
\begingroup
    \renewcommand{\arraystretch}{1.5}
    \begin{align}
        R(\vec{\theta}) &=\textnormal{round}\begin{bmatrix}
            \cos\theta_1& -\sin\theta_1 & 0 & 0 & \cdots & 0 & 0 \\
            \sin\theta_1 & \pos\cos\theta_1 & 0 & 0 & \cdots & 0 & 0 \\
            0 & 0 & \cos\theta_2 & -\sin\theta_2 & \cdots & 0 & 0 \\
            0 & 0 & \sin\theta_2 & \pos\cos\theta_2 & \cdots & 0 & 0 \\
            \vdots & \vdots & \vdots & \vdots & \ddots & \vdots & \vdots\\
            0 & 0 & 0 & 0 & \cdots & \cos\theta_{d/2} & -\sin\theta_{d/2}\\
            0 & 0 & 0 & 0 & \cdots & \sin\theta_{d/2} & \pos\cos\theta_{d/2}
        \end{bmatrix}
        \label{eq:rotation}
    \end{align}
    \endgroup
where for $c \in [d/2]$, $\theta_c = 1000^{-2(c-1)/d}$.
Then \begin{align}
\mathbf{h}^{(0)}_{i}(w) &= E(w_i) + R(\vec{\theta})^{i-1} \begin{bmatrix}
\sin 0 \\
\cos 0 \\
\vdots \\
\sin 0 \\
\cos 0
\end{bmatrix}.
\end{align}
For the rest of this section, let us assume that in a sinusoidal position encoding, all the angles are $\vec{\theta}$ are rational (that is, rational multiples of $\pi$), so that the encodings are periodic. Then we can extend \cref{thm:transformer_equivalence} to transformers with sinusoidal positional encoding and $\TLClMOD$, using exactly the same technique as \citet[Cor.~8]{yang2024masked}.

\begin{theorem}\label{thm:rtfr_eq_tlclmod}
    A language $L$ is defined by a formula of\/ $\TLClMOD$ of depth $k\geq 1$ if and only if\/ $\bos \cdot L$ is recognized by a depth-$k$ \rtfr{} with sinusoidal positional encoding.
\end{theorem}

\begin{theorem}
    A depth-$(k+1)$ \rtfr{} with sinusoidal positional encoding can recognize $\altplusneutral{k+1}$, but no depth-$k$ \rtfr{} with sinusoidal positional encoding can.
\end{theorem}
\begin{proof}
    Firstly, $\altplusneutral{k+1}$ is definable by a \rtfr{} of depth $k+1$ even without sinusoidal positional encodings. 
    Secondly, by \cref{thm:rtfr_eq_tlclmod}, every language definable by a \rtfr{} with RoPE is definable in $\TLClMOD_k$, but by \cref{thm:tlclpos_depth_hierarchy}, $\altplusneutral{k+1}$ is not definable in $\TLClMOD_k$.
\end{proof}

\subsection{RoPE}\label{sec:rope_pes}

\newcommand{\RoPEfunc}{\macro{\textnormal{RoPE}}}
\newcommand{\RoPEterm}{\macro{R}}

    Rotary Positional Embedding or RoPE \citep{su2024roformer} is currently the \emph{de facto} standard method for incorporating positional information in transformers \citep[e.g.,][]{google2024gemma}.
    It modifies \cref{eq:att_logit} as follows:
    \begin{align}
        \mat{s}^{(\ell)}_{ij}(w) = R(\vec{\theta})^i  \mathbf{q}^{(\ell)}_i(w) \cdot R(\vec{\theta})^j \mathbf{k}^{(\ell)}_j(w)
    \end{align}
where $R$ is as in \cref{eq:rotation}.    

Again, let us assume that the angles in $\vec{\theta}$ are rational, so that the transformation $R(\vec{\theta})^i$ is periodic in $i$.
This ultimately allows simulation using $\MOD$.
\begin{proposition}
\label{thm:rtfr_to_TLClmod}  Let $\tfr$ be a depth-$k$ \rtfr{} with RoPEs. There exists a depth $k$- formula of\/ $\TLClMOD$ that simulates $\tfr$.
\end{proposition}
\begin{proof}[Proof sketch.]
    This is a straightforward adaptation of \cref{thm:rtfr_to_TLCl}. 
    The rotation matrices $R(\vec{\theta})^i$ and $R(\vec{\theta})^j$ can be computed in fixed-precision using \cref{lem:finite_function} and $\MOD$, as in the proof of \citet[Cor.~8]{yang2024masked}. 
    The attention scores $\mathbf{s}_{ij}=R(\vec{\theta})^i\mathbf{q}_i\cdot R(\vec{\theta})^j\mathbf{k}_j$ can be computed using \cref{lem:finite_function} and the trick of enumerating all possible queries,
    as in the proof of \cref{thm:rtfr_to_TLCl}.
\end{proof}

\begin{theorem}
    A depth-$(k+1)$ \rtfr{} with RoPE can recognize $\altplusneutral{k+1}$, but no depth-$k$ \rtfr{} with RoPE can.
\end{theorem}
\begin{proof}
    Firstly, $\altplusneutral{k+1}$ is definable by a \rtfr{} of depth $k+1$ even without RoPE. 
    Secondly, by \cref{thm:rtfr_to_TLClmod}, every language definable by a \rtfr{} with RoPE is definable in $\TLClMOD_k$, but by \cref{thm:tlclpos_depth_hierarchy}, $\altplusneutral{k+1}$ is not definable in $\TLClMOD_k$.
\end{proof}

\subsection{ALiBi}\label{sec:alibi_pes}

\newcommand{\ALiBiterm}{\macro{a}}

    ALiBi stands for Attention with Linear Bias, introduced by \citet{press2022train} as a method for improving length generalization in transformers. 
    It decreases the attention scores ($\mathbf{s}_{ij}$) by an amount that scales linearly with the distance between the key and query positions $i$ and $j$. That is, it modifies \cref{eq:att_logit} to:
    \begin{align}
        \mat{s}^{(\ell)}_{ij}(w) &= \mathbf{q}^{(\ell)}_{i}(w) \cdot \mathbf{k}^{(\ell)}_{j}(w)-\ALiBiterm \cdot(i-j). \label{eq:alibi}
    \end{align}

First, we note that there is some distance beyond which ALiBi rounds the attention score to $0$. This ultimately allows simulation using $\Yop$, as the following shows.

\begin{lemma}\label{lem:alibi_window}
    Let $\mathbb{F}$ be a fixed-precision representation and let $\ALiBiterm>0$. 
    There exists $\Delta_{\ALiBiterm}$ such that for all $j\leq i-\Delta_{\ALiBiterm}$ and $x\in\mathbb{F}$ we have that $\textnormal{round}\mleft(\exp\mleft(x-\ALiBiterm(i-j)\mright)\mright)=0$.
\end{lemma}
\begin{proof}
    The attention scores ($\mathbf{s}_{ij}$) are bounded above by some $S>0$ \citep[Prop.~21]{chiang2023tighter}.
    Recall that the smallest positive number in $\mathbb{F}$ is $2^{-\fracbits}$, so there is a score $s_0 = \log 2^{-\fracbits-1}$ such that $\textnormal{round}(\exp s_0) = 0$.
    Let $\Delta_\ALiBiterm = (S-s_0)/a$. Then if $i-j \ge \Delta_\ALiBiterm$, then $\textnormal{round}(\exp \mathbf{s}_{ij}) = 0$.
\end{proof}

\begin{proposition}
\label{thm:rtfr_to_TLCly}  Let $\tfr$ be a depth-$k$ \rtfr{} with ALiBi. There exists a depth-$k$ $\TLClY$ formula $\phi_{\tfr}$ that simulates $\tfr$.
\end{proposition}
\begin{proof}[Proof sketch.]
    If $\ALiBiterm= 0$, we can use the same construction as in \cref{thm:rtfr_to_TLCl}.
    If $\ALiBiterm>0$, we cannot use the trick of enumerating all possible queries,
    as in the proof of \cref{thm:rtfr_to_TLCl}.
    Instead, by \cref{lem:alibi_window}, there is a finite window $[i-\Delta_{\ALiBiterm},i]$ which receives nonzero attention.
    At query position $i$, we can use formulas $\phi_{\langle \mathbf{k}^{(\ell)}_c\rangle_\bitind}, \Yop^1 \phi_{\langle \mathbf{k}^{(\ell)}_c\rangle_\bitind}, \ldots, \Yop^{\Delta_{\ALiBiterm}}\phi_{\langle \mathbf{k}^{(\ell)}_c\rangle_\bitind}$ to obtain the keys at positions $i, i-1, \ldots, i-\Delta_\ALiBiterm$. We can then use \cref{lem:finite_function} to compute the attention scores according to \cref{eq:alibi}.
\end{proof}

\begin{theorem}
    A depth-$(k+1)$ \rtfr{} with ALiBi can recognize $\altplusneutral{k+1}$, but no depth-$k$ \rtfr{} with ALiBi can.
\end{theorem}
\begin{proof}
    Firstly, $\altplusneutral{k+1}$ is definable by a \rtfr{} of depth $k+1$ even without ALiBi. 
    Secondly, by \cref{thm:rtfr_to_TLCly}, every language definable by a \rtfr{} with ALiBi is definable in $\TLClY_k$, but by \cref{thm:tlclpos_depth_hierarchy}, $\altplusneutral{k+1}$ is not definable in $\TLClY_k$.
\end{proof}

\newpage

\section{Index of Notation}\label{sec:notation}

\begin{longtable}{>{$}l<{$}l}
\ALiBiterm & scaling factor for ALiBi (\cref{sec:alibi_pes})\\
A, B & terms for numerator and denominator of attention (\cref{thm:TLCl_to_rtfr}) \\
\altsinglea{k}, \altsingleb{k} & languages (\cref{eq:altsingle}) \\
\mathbf{a}, \mathbf{b} & numerator and denominator of sumdiv (\cref{sec:transformers})\\
a, b & symbols ($\in \Sigma$) \\
\alpha, \beta & formulas for numerator and denominator of attention (\cref{thm:TLCl_to_rtfr}) \\
\bitind & bit number ($\in [\numbits]$, \cref{thm:TLCl_to_rtfr})\\
C & threshold in linear inequality (\cref{app:tlc,thm:TLCl_to_rtfr})\\
\mathcal{C} & transformation for counting (\cref{thm:majtwo_to_tlc}) \\
\mathbf{c} & attention output vector (\cref{def:transformer})\\
\numplanes & number of minimal depth-1 formulas (\cref{sec:cropping_oneway_proof,lem:cropping})\\
c & coordinate ($\in [d]$, \cref{thm:rtfr_to_TLCl}) \\
c & constant (\cref{thm:ltc0_hierarchy}) \\
\altplusdouble{k} & language separating $\TLC$ depth levels (\cref{thm:TLC_depth})\\
d & hidden dimension (\cref{def:transformer})\\
\Delta & window of positions $[i-\Delta]$ (\cref{lem:alibi_window}) \\
e & neutral letter (\cref{lem:tlclpos_reduction})\\
\altplusneutral{k} & $\altplus{k}$ with a neutral letter $e$ (\cref{thm:tlclpos_depth_hierarchy})\\
E & word embedding (\cref{def:transformer})\\
\mathcal{F} & transformation (\cref{thm:tlc_to_majtwo})\\
F & function $\Sigma^* \to \mathbb{F}^*$ (\cref{lem:finite_function}) \\
\mathbb{F} & all fixed-precision numbers (\cref{def:fixed_precision})\\
f & function computing FFNN (\cref{def:transformer})\\
g & function $\mathbb{F}\to\mathbb{F}$ (\cref{lem:finite_function})\\
h & $b$-intercept (\cref{sec:cropping_oneway_proof})\\
\mathbf{h} & hidden vector (\cref{def:transformer}) \\
\eta & learning rate (\cref{sec:experiments})\\
I & interval (family of intervals) (\cref{def:intervals,lem:cropping_oneway,thm:TLCl_depth})\\
\mathcal{I} & set of all intervals (\cref{def:intervals,lem:cropping_oneway,thm:TLCl_depth})\\
\vec{i}, \vec{j} & endpoints of interval (\cref{def:intervals,lem:cropping_oneway,thm:TLCl_depth})\\
i, j & position ($\in [n]$) (\cref{sec:preliminaries,app:tlc})\\
\mathbf{K}, \mathbf{k} & key matrix/vector (\cref{def:transformer})\\
k & depth, number of blocks or pieces (\cref{def:TLC_depth,def:transformer,def:depth_MAJtwo})\\
L & language ($\subseteq \Sigma^*$, \cref{sec:preliminaries}) \\
\altplus{k} & language (\cref{eq:altplus}) separating $\TLCl$ depth levels (\cref{thm:TLCl_depth}) \\
\lang & language recognized/defined by a formula or logic (\cref{thm:logical_inclusions})\\
\mathcal{L} & set of bodies of left-counting terms  (\cref{sec:TLCP_commutative_proof,lem:cropping_oneway,lem:reduction})\\
\ell & loop from $1$ to $k-1$ (\cref{thm:TLCl_depth,thm:TLC_depth})\\
\planeind & index of half-plane or minimal depth-1 formula ($\in [\numplanes]$, \cref{lem:cropping_oneway})\\
\termind & index of formula in $\MAJ$, term in a sum ($\in [\numterms]$, \cref{thm:majtwo_to_tlc,thm:tlc_to_majtwo})\\
\ell & line (\cref{lem:cropping_oneway})\\
\lambda & prefix family (\cref{def:affix_restriction})\\
\lambda & coefficient in linear inequality (\cref{app:tlc,thm:TLCl_to_rtfr,lem:reduction})\\
\mathcal{M} & transformation (\cref{thm:tlc_to_majtwo})\\
M & Parikh numerical predicate (\cref{lem:reduction})\\
m & $|\Sigma|$, only used when $\alphsize$ would be circular (\cref{def:Parikh_map})\\
m & significand of fixed-precision number (\cref{def:fixed_precision})\\
m & slope (\cref{lem:cropping_oneway})\\
\numterms & number of formulas in $\MAJ$, terms in a sum (\cref{def:MAJtwo,thm:tlc_to_majtwo})\\
n & length of $w$ (\cref{sec:preliminaries,sec:TLCP_commutative_proof})\\
\lenvec & Parikh vector of $w$ (\cref{sec:Parikh,sec:depth_hierarchy})\\
\xi & truth assignment (\cref{def:MAJtwo})\\
\numbits & number of bits (\cref{def:fixed_precision})\\
\Pi & PNP (Parikh numerical predicate) (\cref{def:PNP,lem:reduction})\\
\pi & function corresponding to PNP (\cref{def:PNP})\\
\aperm & permutation of $[n]$ (\cref{sec:TLCP_commutative_proof})\\
\sympred\sigma & symbol predicate (\cref{app:tlc,def:MAJtwo})\\
\vec{q} & fixed-precision value of $\mathbf{q}$ (\cref{thm:rtfr_to_TLCl})\\
\mathbf{q} & query vector (\cref{def:transformer})\\
R & rotation matrix (\cref{sec:sinusoidal_pes})\\
\mathcal{R} & set of bodies of right-counting terms (\cref{sec:TLCP_commutative_proof,lem:cropping,lem:reduction})\\
\varrho & suffix family (\cref{def:affix_restriction})\\
\vec{s} & size of interval (\cref{def:accommodating,lem:cropping_oneway})\\
\fracbits & number of fractional bits (\cref{def:fixed_precision})\\
\Sigma & alphabet (\cref{sec:preliminaries,sec:Parikh})\\
\sigma & symbol ($\in \Sigma$) (\cref{sec:piecewise_testable})\\
\tfr & transformer (\cref{sec:transformer_equivalence})\\
\mathcal{T} & transformation (\cref{thm:majtwo_to_tlc})\\
t & term (\cref{app:tlc})\\
\mathbf{v} & value (\cref{def:transformer})\\
\avec & Parikh vector (\cref{sec:Parikh,lem:cropping_oneway})\\
W_{\textnormal{K}}, W_{\textnormal{Q}}, W_{\textnormal{V}}, W_{\textnormal{out}} & weight matrices (\cref{def:transformer})\\
w & string ($\in \Sigma^*$) (\cref{sec:preliminaries,def:affix_restriction})\\
x & fixed-precision or real number (\cref{sec:transformer_equivalence})\\
\redsize & padding size in reduction proof (\cref{lem:tlclpos_reduction}) \\
x, y & formal variables (\cref{def:MAJtwo})\\
\redsizeind & index in reduction proof (\cref{lem:tlclpos_reduction}) \\
\phi & formula (\cref{sec:depth_hierarchy,app:tlc})\\
\chi & formula, esp. body of counting term (\cref{lem:cropping_oneway})\\
\prkh & Parikh map ($\Sigma^* \to \N^\alphsize$) (\cref{sec:Parikh})\\
\psi & formula, esp. comparison (\cref{lem:cropping_oneway,lem:reduction,thm:majtwo_to_tlc})\\
\theta & angle for rotation matrix (\cref{sec:rope_pes})\\
\end{longtable}


\end{document}